\newtheorem{theorem}{Theorem}
\newtheorem{corollary}[theorem]{Corollary}
\newenvironment{definition}[1][Definition]{\begin{trivlist}
\item[\hskip \labelsep {\bfseries #1}]}{\end{trivlist}}
\begin{document}

\title{Rigid and Non-rigid Shape Evolutions for Shape Alignment and Recovery in Images}

\author{Junyan Wang*~\IEEEmembership{Member,~IEEE,}~and~Kap Luk Chan~\IEEEmembership{Member,~IEEE,}
\IEEEcompsocitemizethanks{\IEEEcompsocthanksitem Junyan Wang and Kap Luk Chan are with School of Electrical \& Electronic Engineering, Nanyang Technological University, Singapore.\protect\\
E-mail: \{wa0009an,eklchan\}@ntu.edu.sg
}
\thanks{}}

\IEEEcompsoctitleabstractindextext{%
\begin{abstract}
The same type of objects in different images may vary in their shapes because of rigid and non-rigid shape deformations, occluding foreground as well as cluttered background. The problem concerned in this work is the shape extraction in such challenging situations. We approach the shape extraction through shape alignment and recovery. This paper presents a novel and general method for shape alignment and recovery by using one example shapes based on deterministic energy minimization. Our idea is to use general model of shape deformation in minimizing active contour energies. Given \emph{a priori} form of the shape deformation, we show how the curve evolution equation corresponding to the shape deformation can be derived. The curve evolution is called the prior variation shape evolution (PVSE). We also derive the energy-minimizing PVSE for minimizing active contour energies. For shape recovery, we propose to use the PVSE that deforms the shape while preserving its shape characteristics. For choosing such shape-preserving PVSE, a theory of shape preservability of the PVSE is established. Experimental results validate the theory and the formulations, and they demonstrate the effectiveness of our method.
\end{abstract}
\begin{keywords}
Object segmentation, shape alignment and recovery, active contour, deformable model, prior variation shape evolution, calculus of prior variations.
\end{keywords}}
\maketitle
\IEEEdisplaynotcompsoctitleabstractindextext
\section{Introduction}
Object segmentation has been formulated as a contour optimization problem, which is known as the active contour model. There exist active contours based on various criteria such as edge detection \cite{kass88snakes}, region grouping by regional homogeneity \cite{Zhusongchun96RegComp} or regional dissimilarity \cite{Baris06GPAC} etc. Most of the early active contour formulations did not incorporate shape prior modeling. It has been observed that such active contour method can segment the object of interest based on the detectable boundary or region of the object, but it is not able to recover the object shape if partial of the object boundary or region is undetectable because of, for example, occlusion or scene clutters. The accurate object shapes are desired in many vision applications requiring shape analysis, such as the medical image analysis \cite{Leventon00Statisticalshape} and tracking \cite{Cremers06LStracking}.

In real images, the occlusion and scene clutters can cause missing parts on the objects of interest, or merging with surrounding objects. The missing-part phenomenon is illustrated in Figs. \ref{Fig:Intro_Ocl} (d-f), and the object-overlap phenomenon is illustrated in \ref{Fig:Intro_Exp} (d-f). Fig. \ref{Fig:GenSitu} shows that the object of interest is surrounded by scene clusters (Fig. \ref{Fig:GenSitu}(a)) or is also partially occluded (Fig. \ref{Fig:GenSitu}(b)). The objects of interest are all deformable. We expect to obtain the underlying shape silhouettes in Figs \ref{Fig:Intro_Ocl} (a-c)  and \ref{Fig:Intro_Exp} (a-c) when only given the observed shapes in Figs. \ref{Fig:Intro_Ocl} (d-f) and \ref{Fig:Intro_Exp} (d-f). Besides, we desire this to be achieved in real images with complex foregrounds or backgrounds. This task may be achieved by aligning a shape template, i.e. the prior shape model, to the object of interest in the image, which we refer to as a process of \emph{shape registration}. However, during the conventional shape registration process for deformable object, such as in \cite{Charpiat2007GG}, the shape of the template is allowed to change but it may not be able to recover the object shape if there are missing parts or shape overlaps. In this paper, we deal with simultaneous \emph{shape registration and recovery}. Fig. \ref{Fig:HorsRegSys} illustrates such a desired process in which the deformed horse overlapped with its rider is registered and recovered using the selected prototype shape(s). Complex forgrounds/backgrounds are also considered.
\begin{figure}[t]
\centering \subfloat[]{\label{Fig:Monkey_org}\includegraphics[height=0.8in]{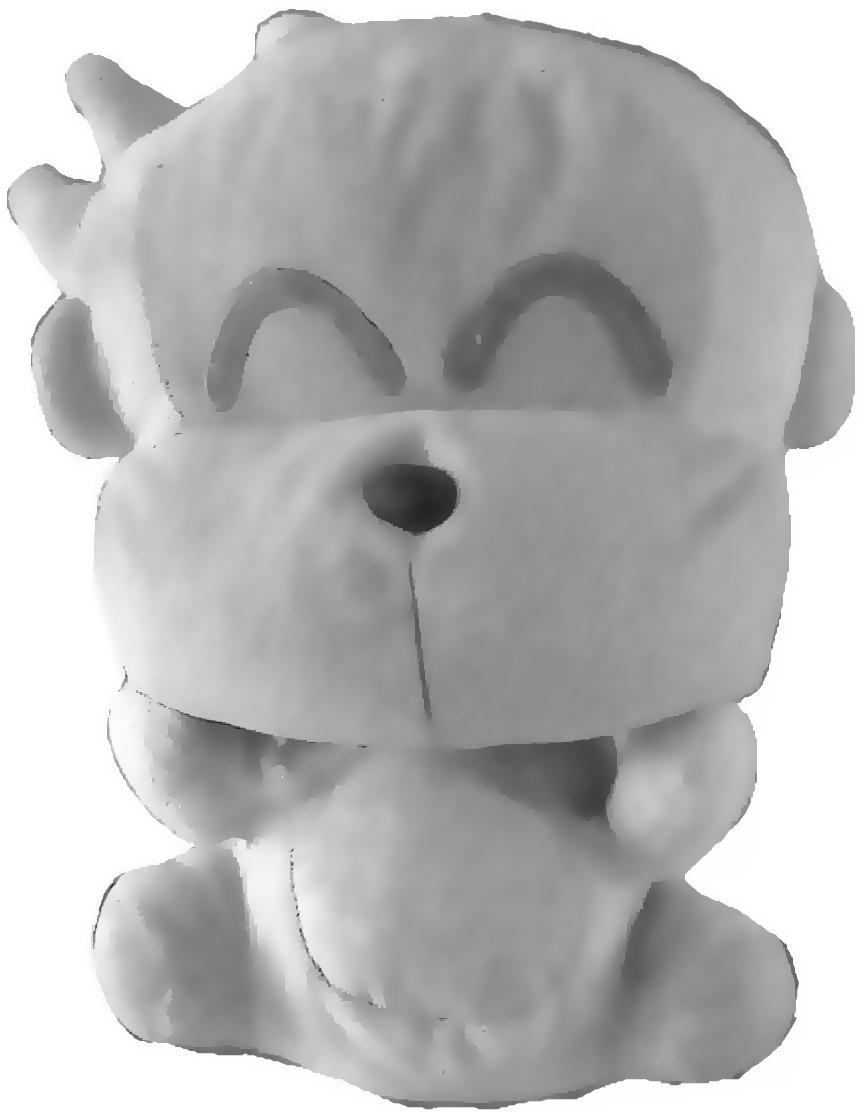}}\hspace{0.33in}
\subfloat[]{\label{Fig:RTMonkey}\includegraphics[height=0.8in]{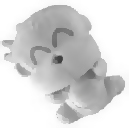}}\hspace{0.33in}
\subfloat[]{\label{Fig:NRTMonkey}\includegraphics[height=0.8in]{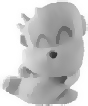}}\\
~~\subfloat[]{\label{Fig:OCMonkey}\includegraphics[height=0.8in]{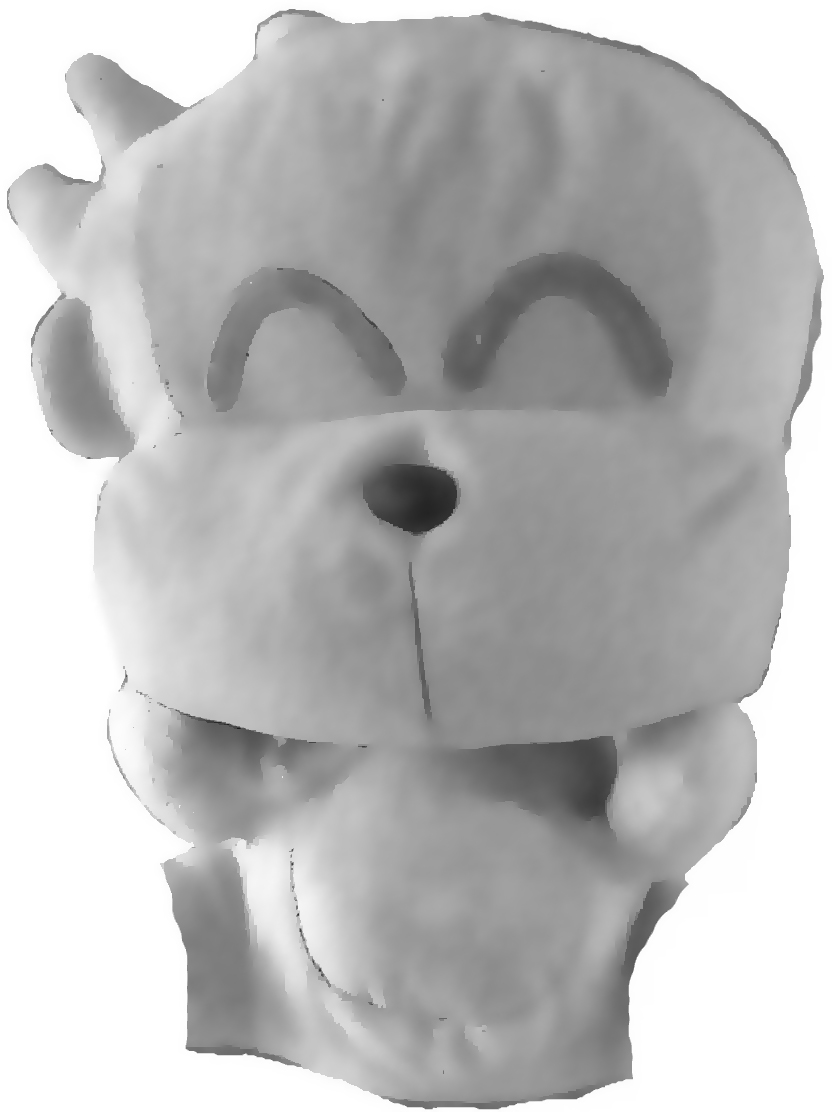}}\hspace{0.33in}
\subfloat[]{\label{Fig:RTOCMonkey}\includegraphics[height=0.8in]{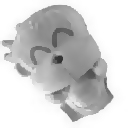}}\hspace{0.33in}
\subfloat[]{\label{Fig:NRTOCMonkey}\includegraphics[height=0.8in]{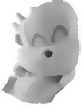}}
\caption{(a) A toy monkey; (b) Toy monkey after a rigid transformation; (c)
Toy monkey after a non-rigid transformation; (d) (e) and (f) are the toy monkeys in (a) (b) and (c) with parts missing.}\label{Fig:Intro_Ocl}
\end{figure}
\begin{figure}[t]
\centering \subfloat[]{\label{Fig:Horse_org}\includegraphics[height=0.8in]{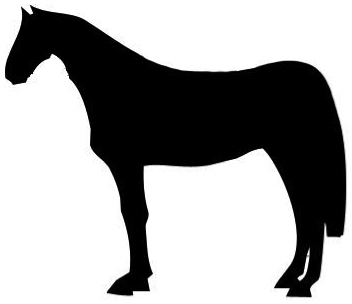}}\hspace{0.33in}
\subfloat[]{\label{Fig:RTHorse}\includegraphics[height=0.8in]{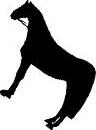}}\hspace{0.33in}
\subfloat[]{\label{Fig:NRTHorse}\includegraphics[height=0.8in]{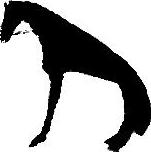}}\\
\subfloat[]{\label{Fig:ExpHorse}\includegraphics[height=0.8in]{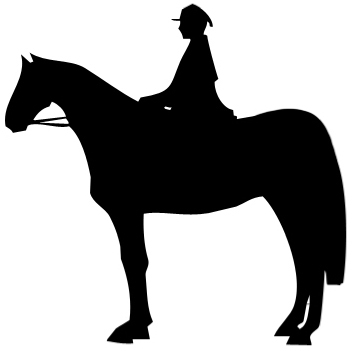}}\hspace{0.33in}
\subfloat[]{\label{Fig:RTExpHorse}\includegraphics[height=0.8in]{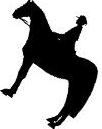}}\hspace{0.33in}
\subfloat[]{\label{Fig:NRTExpHorse}\includegraphics[height=0.8in]{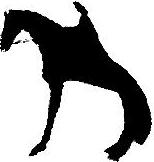}}
\caption{(a) A horse;
(b) The rigidly transformed horse; (c) The non-rigidly transformed horse; (d) (e) and (f) are the horses in (a) (b) and (c) merged with the rider.}\label{Fig:Intro_Exp}
\end{figure}
\begin{figure}[t]
\centering
  \subfloat[]{\includegraphics[height=2cm]{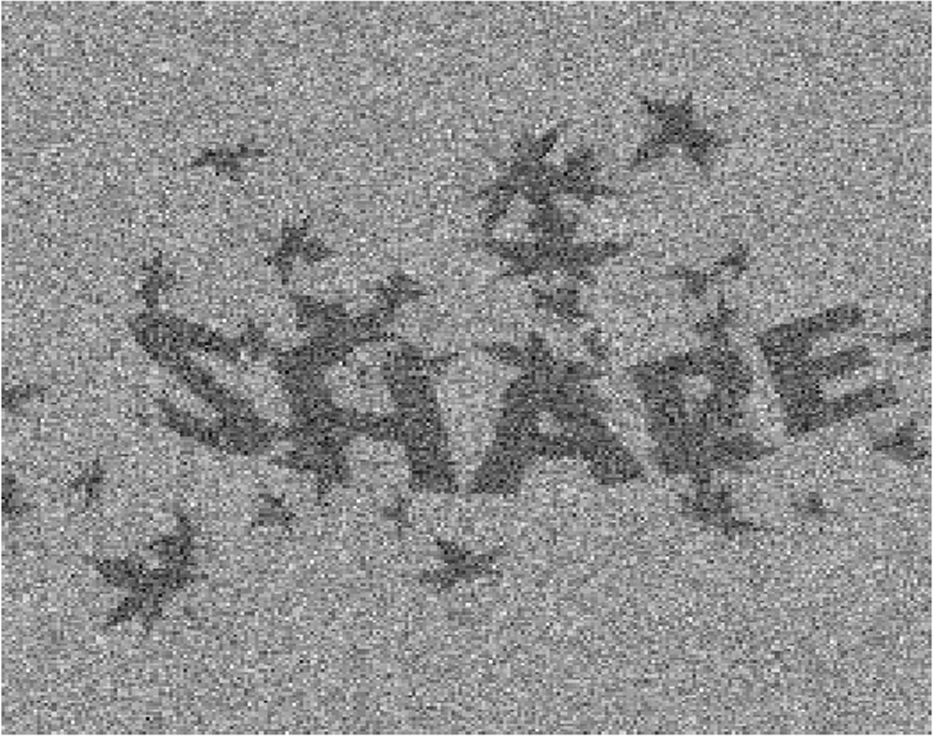}}~\subfloat[]{\includegraphics[height=2cm]{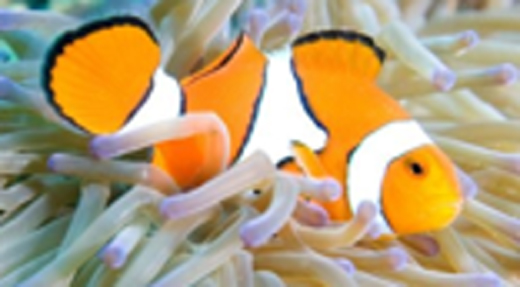}}
  \caption{More general situations}\label{Fig:GenSitu}
\end{figure}

\begin{figure}[t]
\centering
  \includegraphics[width=0.8\columnwidth]{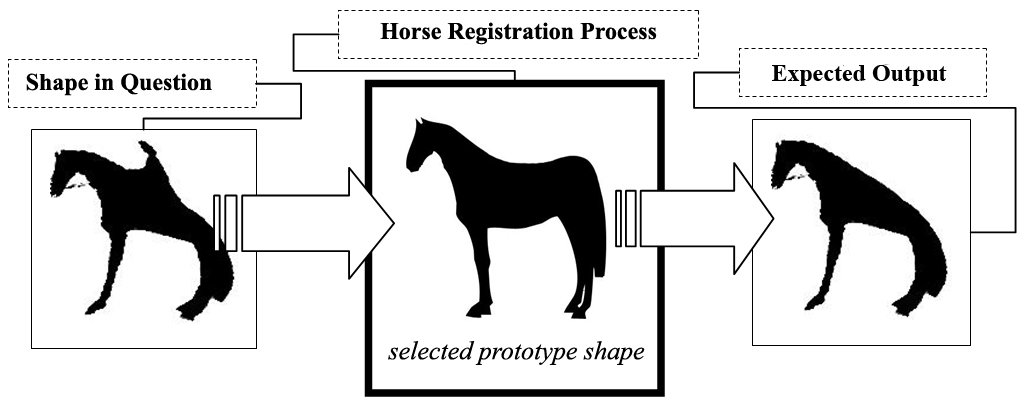}\\
  \caption{A horse registration and recovery process}\label{Fig:HorsRegSys}
\end{figure}

In many recent works \cite{Chenym02Usingprior, Leventon00Statisticalshape, Rousson02Shapepriors, Cremers03Shapestatistics, Cremers04kerneldensity, Etyngier07Shapepriors, Prisacariu2011GPLVM_LSSP, Dambreville09KPCAShape, Joshi09BAC}, people have shown that the active contour frameworks with shape prior modeling can achieve the shape registration that is capable of recovering shapes from those with missing parts and overlapping shapes. The main challenge lies in the modeling of shape prior. Some of the early works addressed the simple shape transformation, such as the similarity transformation \cite{Chenym02Usingprior}. General non-rigid shape deformations were handled by using the statistical shape prior model built with the data obtained in advance \cite{Leventon00Statisticalshape,Rousson02Shapepriors,Cremers04kerneldensity,Cremers03Shapestatistics,Prisacariu2011GPLVM_LSSP,Dambreville09KPCAShape,Joshi09BAC}. We shall call these models the \emph{data-based} shape prior models. The data-based shape prior modeling requires a large shape dataset for training. Since the large shape dataset may be unavailable in real applications, we propose a \emph{knowledge-based} shape prior model as an alternative to the conventional \emph{data-based} shape prior models to remove the hard requirement of training data. The resultant method can achieve the shape registration and recovery in images in the presence of rigid and non-rigid shape deformations, missing parts and overlapping shapes by using a limited number, e.g. $1$, of example shapes. Due to its difficulties, the research efforts made to this task is sparse.

We propose to adopt the general rigid or non-rigid shape deformations expressed in closed form as the knowledge-based shape prior model. Such model is also known as the shape warping model. However, it is unknown how to use the general shape warping for minimizing the active contour energies. Suppose the form of the shape warping is known \emph{a priori}, we show in this paper how the curve evolution equation corresponding to the shape warping can be derived. The resultant novel curve evolution equation is called the \emph{Prior Variation Shape Evolution} (PVSE) equation. We then derive the energy-minimizing PVSE equation for minimizing general active contour energies. This novel general derivation is named the calculus of prior variations. We use the energy-minimizing PVSE to achieve shape registration for extracting rigidly and non-rigidly deformed objects in images. Moreover, we formulate a theory of shape preservability for selecting a shape-preserving PVSE whereby shape recovery can be achieved additionally. According to our theory, the PVSE corresponding to a well-known non-rigid shape warping model is shape-preserving, and this PVSE is used for shape extraction in the work. This paper extends the work reported in \cite{Wang09PVCE} in several aspects. There are another two important contributions in this paper. First, we showed that the PVSE equation can be derived from a standard model of shape warping. This can be viewed as an interpretation of the PVSE. Second, we present a geometrical theory on shape preservability. This theory explains why the non-rigid PVSE in the chosen form can achieve shape recovery.

An interesting method closely related to our method was proposed by Charpiat et al. \cite{Charpiat2007GG}. However, their work did not address the challenging issues on modeling the deformations of the objects of interest. Thus, it is unclear how their method can be used for handling the deformations of the objects.

The rest of this paper is organized as follows. In section \ref{SEC:REL}, we briefly review the related and inspiring works. In section \ref{SEC:Derive_PVSE}, we derive the PVSE equation from shape warping. In section \ref{SEC:CoPV}, we derive the energy-minimizing PVSE, and we also apply the derivation of the energy-minimizing PVSE to general active contour energy. In section \ref{SEC:Theory_ShaPre}, we present a theory of shape preservability of PVSE. The theory used to justify the PVSE model of non-rigid shape deformations. In section \ref{SEC:Exp}, we evaluate the PVSE for modeling shape deformation, and we also evaluate the PVSE for shape registration and recovery on images. Both qualitative and quantitative results are reported to validate our formulations and algorithm. We conclude the paper in section \ref{SEC:Con} with discussions.

\section{Related works}\label{SEC:REL}

\subsection{Active contour with shape priors}

In the literature of shape prior based active contours, simple shape transformations, such as similarity, affine and projective shape transformations are modeled by using the knowledge-based shape prior models. Among the pioneering works, Chen {et al.} \cite{Chenym02Usingprior} proposed Geodesic Active Contour model with a prior term of scaling-, rotation- and translation-invariant shape dissimilarity for simultaneous shape registration and recovery. Cremers {et al.} \cite{Cremers03Shapestatistics,Cremers04kerneldensity} adopted a similar approach. However, only the similarity transform was considered in these works. There are existing works on constraining the contour motion in active contour by parametric Euclidean transformation \cite{Mansouri04LieActiveContour}, affine and projective \cite{Mukherjee07LieAC_Affine_Projective} transformations.

The non-rigid shape variations were handled by using data-based shape prior models \cite{Leventon00Statisticalshape, Cremers03Shapestatistics,Cremers04kerneldensity,Etyngier07Shapepriors,Joshi09BAC}. Leventon {et al.} \cite{Leventon00Statisticalshape} modeled shape variations using Principal Component Analysis (PCA), and they measured the shape dissimilarity by shape distance from a mean shape. More recently, Etyngier {et al.} \cite{Etyngier07Shapepriors} formulated the space of prior shapes for curve evolution as a shape manifold to which the well established nonlinear dimensionality reduction techniques are applicable. Some other works on manifold based shape prior modeling in active contour framework have been reported in \cite{Prisacariu2011GPLVM_LSSP,Dambreville09KPCAShape,Joshi09BAC}.

The data-based prior shape models generally require a dataset of training shapes, and every single reliable sample silhouette shape is to be delineated manually to constitute the dataset, which is laborious, making this approach inconvenient and even inapplicable in practice. An immediate advantage of the knowledge-based formulation in our context is that it allows the user to determine the mathematical form of the deformation based on the physical properties of the object or domain-specific prior knowledge. Thus, it removes the hard requirement of training data in the modeling phase. The problem with the existing deterministic models for active contours is mainly that only simple transformations, such as similarity, affine and projective transforms, have been considered, while the large variety of non-rigid shape deformations have yet to be dealt with.

\subsection{Shape warping as a deformable shape model}
Shape warping is a general knowledge-based deformable shape model \cite{Amit91StructuralImage,Jain96Objectmatching,Glasbey98Areviewofimagewarping}. In shape warping, the shape deformation is viewed as a result of point displacement on the shape. The point displacement is expressed as a warping mapping. Similarity and affine transformations are special cases of warping mapping. Warping mapping has been used for modeling non-rigid shape deformations in the framework of \emph{deformable template matching} for locating object boundary \cite{Jain96Objectmatching}. Warping mapping can represent both rigid and non-rigid deformations. Comparing with the aforementioned data-based shape prior models, the advantage of warping is that it does not require training samples. Only one example shape is required. However, the segmentation model in the classic deformable template matching were based on local image features, such as edges \cite{Jain96Objectmatching,Felzenszwalb05DeformableModel} and Gabor features \cite{Wu07ActiveBasis}, which could be restrictive for shape registration in images. Besides, random optimization schemes were often used to jump out the too many local optimal solutions. Consequently, the computations for a single image can be impractical, and the convergence is difficult to determine.

For effective shape registration in images, we require a better segmentation model, and for the sake of efficiency we require deterministic algorithms. Active contour models are advanced optimization models for segmentation. However it was unknown how active contour energies can be minimized deterministically via general rigid and non-rigid warping. It was also unknown how the general warping can deal with missing parts and shape overlapping.

\subsection{Active contour and shape warping}
Charpiat et al. \cite{Charpiat2007GG} developed a method for minimizing active contour energy by incorporating warping mappings. Their method was proposed based on an interesting observation that the functional energy could be decreased by using a generalized negative gradient. The generalized negative gradient could be induced by a positive definite, i.e. Riemannian, metric. Charpiat et al. showed how the positive definite metric could be constructed in an ad-hoc way by using the original $L_2$ functional gradients and priors. The similarity transformation, affine transformation and \emph{Sobolev} norm were used to construct the generalized gradients in \cite{Charpiat2007GG}. In a similar work \cite{Mansouri04LieActiveContour}, Mansouri et al. considered shape evolution of Lie transformation groups.

Nevertheless, these works did not address the challenging issues on modeling of general, especially non-rigid, shape deformations. It has not been shown how the generalized gradients induced by positive metrics, proposed by Charpiat et al. \cite{Charpiat2007GG}, can model the deformation of a given object shape. The derivations by Mansouri et al. \cite{Mansouri04LieActiveContour} were only suitable for Lie transformation groups. On the contrary, we aim at a sound and universal method for modeling general shape deformations of a given object and for shape registration in images. Moreover, we approach the shape recovery through a notion of shape preservability of shape deformations, which have not been considered previously.

\section{Deriving curve evolution equation from shape warping}\label{SEC:Derive_PVSE}
The shape warping is considered as our model of shape deformation. In a real application, the detailed shape warping model for a given object can be formulated in advance. This section presents the derivation of the curve evolution equation from a given shape warping represented in a general form.
%

\subsection{Infinitesimal warping iteration}
This subsection presents our general formulation of shape warping. We require the mathematical definition of the warping mapping before presenting our model of shape warping.
\begin{definition} [Warping mapping]\label{DEF:ID}
\emph{The warping mapping is a differentiable parametric function mapping $\mathbf{\tilde{x}}=\mathbf{f}(\mathbf{x},\theta)$, where $\mathbf{x}\in\mathds{R}^2$ is the image coordinate, $\theta\in\mathds{R}^n$ is the vector of warping parameter, and $\mathbf{f}:\mathds{R}^{2+n}\mapsto\mathds{R}^2$, such that the \emph{identity} property, i.e. $\mathbf{x}=\mathbf{f}(\mathbf{x},\mathbf{0})$, holds.}
\end{definition}
The identity property is necessary for the warping mapping due to the following \emph{consistency} property according to the mean value theorem:
\begin{equation}
\begin{split}
\Big\|\mathbf{f}(\mathbf{x},\theta)-\mathbf{x}\Big\| &= \Big\|\mathbf{f}(\mathbf{x},\theta)-\mathbf{f}(\mathbf{x},\mathbf{0})\Big\|\\
&\leq \big\|\nabla\mathbf{f}(\mathbf{x},\alpha\theta+(1-\alpha)\mathbf{0})\big\| \big\|\theta\big\|\\
&\leq \max_{0\leq\alpha\leq1}\left(\big\|\nabla\mathbf{f}(\mathbf{x},\alpha\theta)\big\|\right) \big\|\theta\big\|,
\end{split}
\end{equation}
which ensures that small parameters yield small shape deformations. We shall not impose the invertibility of the warping mapping. This makes our model more general than the Lie group transformations/diffeomorphisms. The nice closure property of Lie group may not be needed in the real situations.

Let us consider an evolving contour shape $C$ defined by the points $\mathbf{x}(p,t)$ on the shape, where $p$ is the parametrization along the curve and $t$ is the an artificial time. We may interchangeably use $C(p,t)$ or $\mathbf{x}(p,t)$ to denote a point on the curve at a specific time henceforth. A shape deformation due to warping within an infinitesimal time interval $\tau$ at a fixed time $t'$ can be defined as follows
\begin{equation}\label{EQ:IWI}
\mathbf{x}(p,t'+\tau)=\mathbf{f}(\mathbf{x}(p,t'),\theta(\tau)),~ \|\theta(\tau)\|\leq\epsilon.
\end{equation}

We call Eq. (\ref{EQ:IWI}) the \emph{infinitesimal warping iteration}. To follow the identity property of warping, we require $\theta(0)=0$. Besides, $\epsilon$ is also an infinitesimal positive constant. The model tells that the current shape is generated by warping the shape in the near past. This general deformable shape model can represent a large variety of shape variations according to the relatively arbitrary form of the warping mapping $\mathbf{f}$.


\subsection{Prior variation shape evolution}
In the following we show how the shape warping, in the form of infinitesimal warping iteration, can be written as a curve evolution.

In the infinitesimal warping iteration, the small displacement due to small $\theta$ allows for the Taylor expansion of the warping for $\tau\rightarrow0$ at each iteration as follows:
\begin{equation}
\begin{split}
&\mathbf{x}(p,t'+\tau)\\
&\approx\mathbf{x}(p,t')+\left.\left[{\mathbf{D}\mathbf{f}(\mathbf{x}(p,t'),\theta)\over\mathbf{D}\theta}\right]\right|_{\theta=\mathbf{0}}\theta(\tau).
\end{split}
\end{equation}

Accordingly, we may try to approximate the infinitesimal warping iteration by curve evolution. Taking derivative w.r.t. $\tau$ at $\tau=0$ on the two sides of the above, we obtain the differential form of the model as follows:
\begin{equation}
\begin{split}
\left.{\partial\mathbf{x}(p,t'+\tau)\over\partial \tau}\right|_{\tau=0}=\left.\left[{\mathbf{D}\mathbf{f}(\mathbf{x}(p,t'),\theta)\over\mathbf{D}\theta}\right]\right|_{\theta=\mathbf{0}}\left.{d\theta\over d\tau}\right|_{\tau=0},
\end{split}
\end{equation}

By replacing $\mathbf{x}(p,t'+\tau)$ with the contour points $C(p,t'+\tau)$, and let $t=t'+\tau$ we obtain the following curve evolution equation of the infinitesimally warping iteration:
\begin{equation}\label{EQ:CE_DDSM}
\left.{\partial C(p,t)\over\partial t}\right|_{t=t'}={\Big[\mathbf{V}^\theta\Big]}\left.{d\theta\over dt}\right|_{t=t'},
\end{equation}
where ${d\over dt} = {d\over d\tau}$ for a fixed $t'$, $\theta$ has become a function of $\tau$, which is equivalently a function of $t$ for fixed $t'$, and
\begin{equation}\label{EQ:V_DDSM}
\Big[\mathbf{V}^\theta\Big]=\left.\left[{\mathbf{D}\mathbf{f}(C(p,t),\theta)\over\mathbf{D}\theta}\right]\right|_{\theta=\mathbf{0}},
\end{equation}
where we use $[\mathbf{V}^\theta]$ to emphasize that $\mathbf{V}^\theta$ is a $2\times n $ matrix-valued function, $n$ is the dimension of $\theta$. We may omit the subscript $t=t'$ since $t'$ can be arbitrary.

Note that $[\mathbf{V}^\theta]$ depends only on the contour location. Hence, its form can be decided \emph{a priori}, and Eq. (\ref{EQ:CE_DDSM}) is named the \emph{Prior Variation Shape Evolution} (PVSE) equation. The PVSE equation can be implemented via the level set method to avoid the artificial reparametrization. The level set equation corresponding to Eq. (\ref{EQ:CE_DDSM}) is the following:
\begin{equation}\label{EQ:LS_PVSE}
\begin{split}
{\partial \phi\over \partial t} = -\nabla\phi\cdot{\partial C(p,t)\over \partial t}=-\left\langle\nabla\phi,{\Big[\mathbf{V}^\theta\Big]}{d\theta\over dt}\right\rangle,&\\
\phi(x,y,0)=\phi_o(x,y),&\\
\end{split}
\end{equation}
where $\|\nabla\phi(x,y,t)\|=1$, $\phi$ is the signed distance function.

For example, we can consider the similarity transformation as the underlying shape warping:
\begin{equation}\label{EQ:Simi_T}
\mathbf{\tilde{x}}  =\left[
             \begin{array}{cc}
               e^\lambda\cos(\omega) & \sin(\omega) \\
               -\sin(\omega) &  e^\lambda\cos(\omega)\\
             \end{array}
           \right]\mathbf{x}+\left[
                               \begin{array}{c}
                                 a \\
                                 b \\
                               \end{array}
                             \right],
\end{equation}
where $\lambda,\omega$ and $[a,b]^T$ are the parameters of a similarity transformation. The prior variations of the similarity PVSE, which is referred to as the \emph{similarity variations}, are the following:
\begin{subequations}
\begin{equation}\label{EQ:Simi_V1}
\mathbf{V}^{\lambda} = [x, y]^T, \mathbf{V}^{\omega} = [y, -x]^T,
\end{equation}
\begin{equation}\label{EQ:Simi_V2}
\mathbf{V}^{a} = [1,0]^T, \mathbf{V}^{b} =[0,1]^T.
\end{equation}
\end{subequations}

For the affine transformation, we have
\begin{equation}\label{EQ:Aff_T}
\begin{split}
\mathbf{\tilde{x}}  =\left[\begin{array}{cc}
                            1+a_{11} & a_{12} \\
                            a_{21} & 1+a_{22} \\
                            \end{array}
                       \right]\mathbf{x}+\left[\begin{array}{c}
                                                                         a \\
                                                                         b
                                                                       \end{array}
                                                                     \right],
\end{split}
\end{equation}
where $a_{11},a_{12},a_{21},a_{22}$ and $[a, b]^T$ are the parameters of an affine transformation. The prior variations of the PVSE, which are referred to as the \emph{affine variations}, are the following:
\begin{subequations}
\begin{equation}\label{EQ:Aff_V1}
\mathbf{V}^{\mathbf{A}}(x)= \Big[[x,0]^T ,[y,0]^T, [0,x]^T, [0,y]^T\Big]^T,
\end{equation}
\begin{equation}\label{EQ:Aff_V2}
\mathbf{V}^{\mathbf{b}}(x)= \Big[[1,0]^T ,[0,1]^T\Big]^T.
\end{equation}
\end{subequations}
We can substitute these prior variations into (\ref{EQ:CE_DDSM}) to complete our PVSE equations. To understand the effect of PVSE more intuitively, we also visualize the curve evolution of similarity transformation and affine transformation in Figs. \ref{FIG:simi_A} and \ref{FIG:aff_A} respectively.

\begin{figure*}
  \setlength{\tabcolsep}{0pt}
  \subfloat{\fbox{\includegraphics[width=0.15\textwidth]{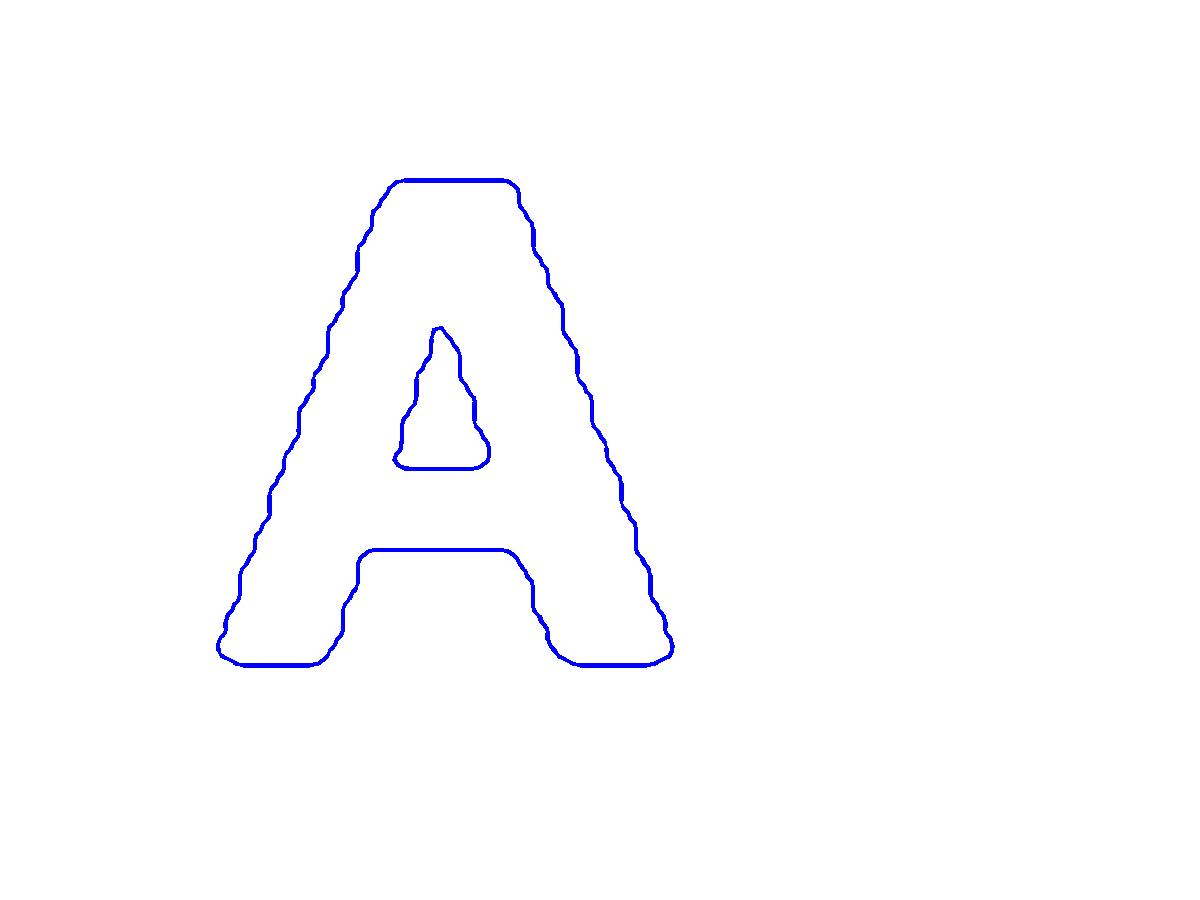}}}
  \subfloat{\fbox{\includegraphics[width=0.15\textwidth]{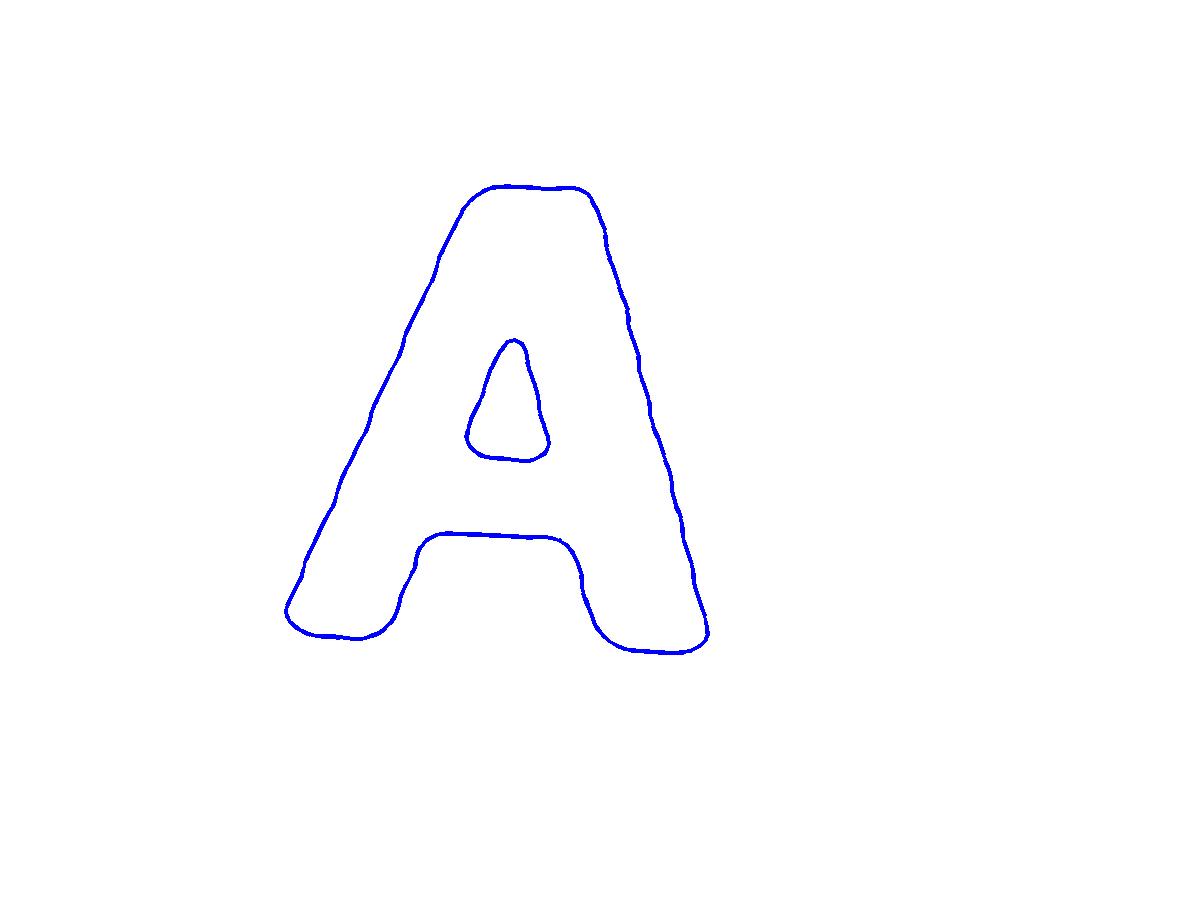}}}
  \subfloat{\fbox{\includegraphics[width=0.15\textwidth]{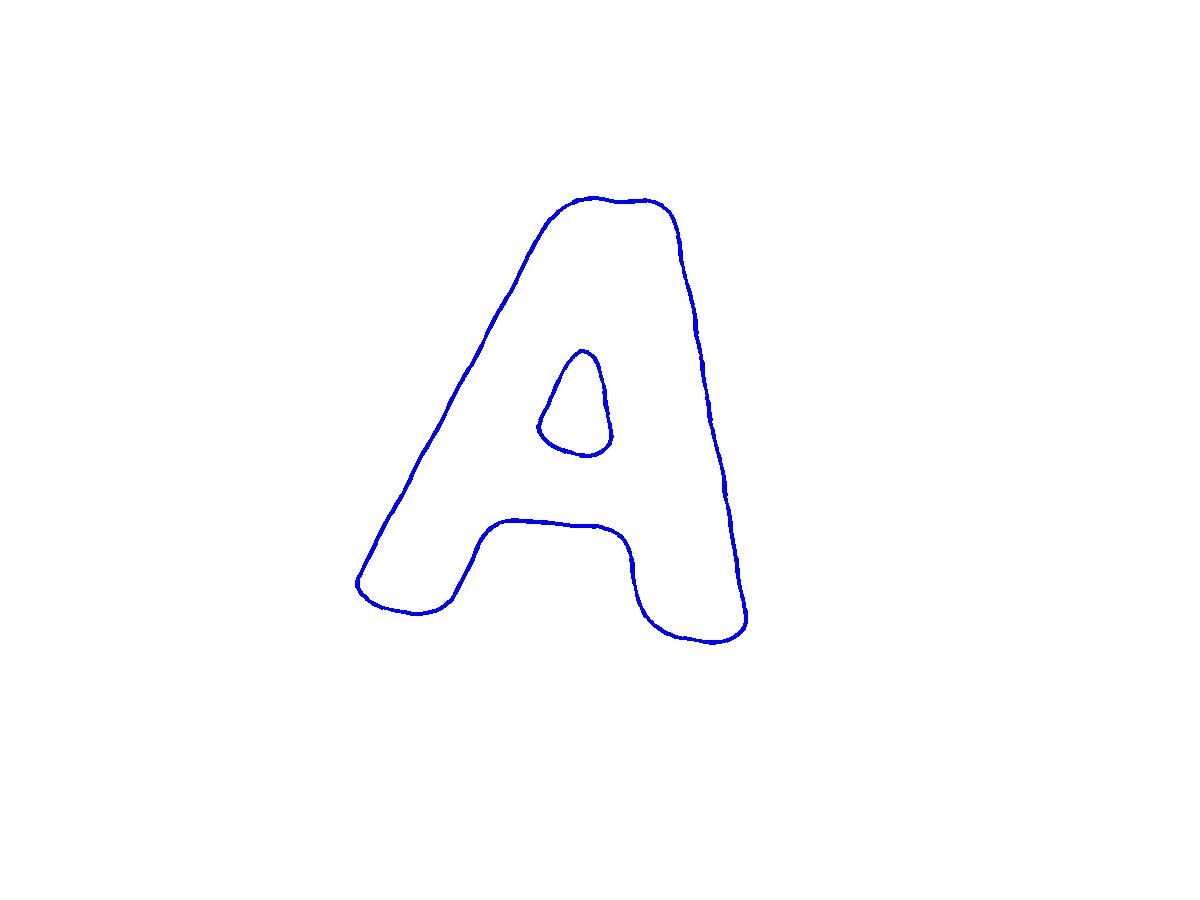}}}
  \subfloat{\fbox{\includegraphics[width=0.15\textwidth]{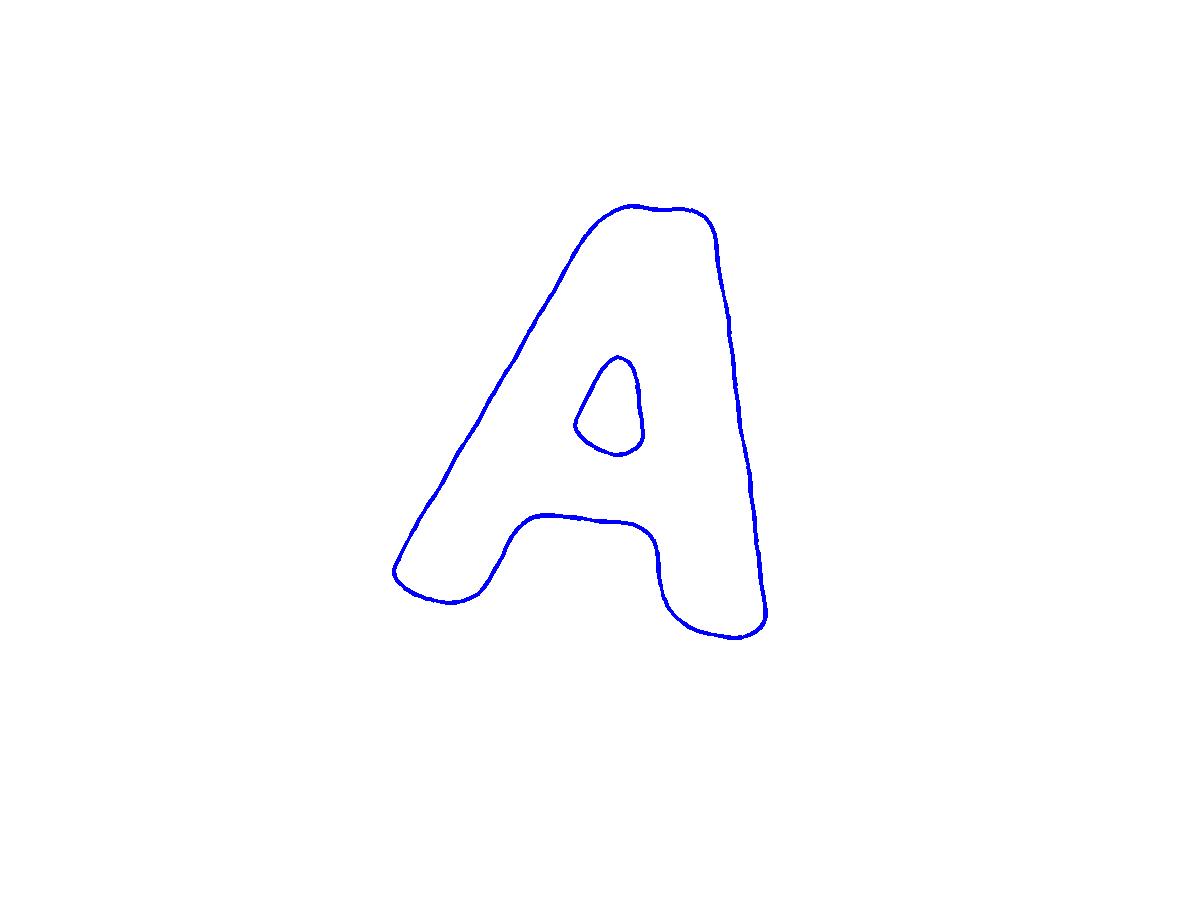}}}
  \subfloat{\fbox{\includegraphics[width=0.15\textwidth]{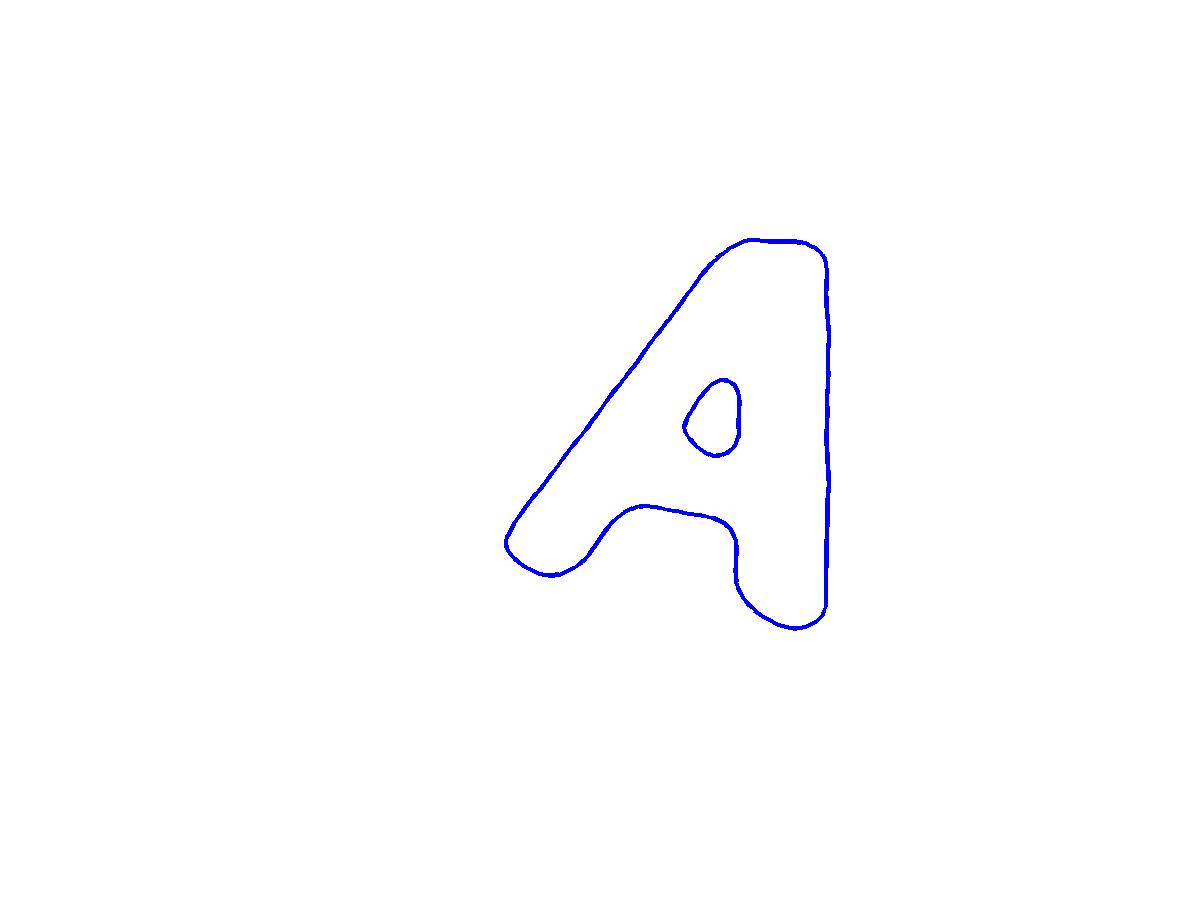}}}
  \subfloat{\fbox{\includegraphics[width=0.15\textwidth]{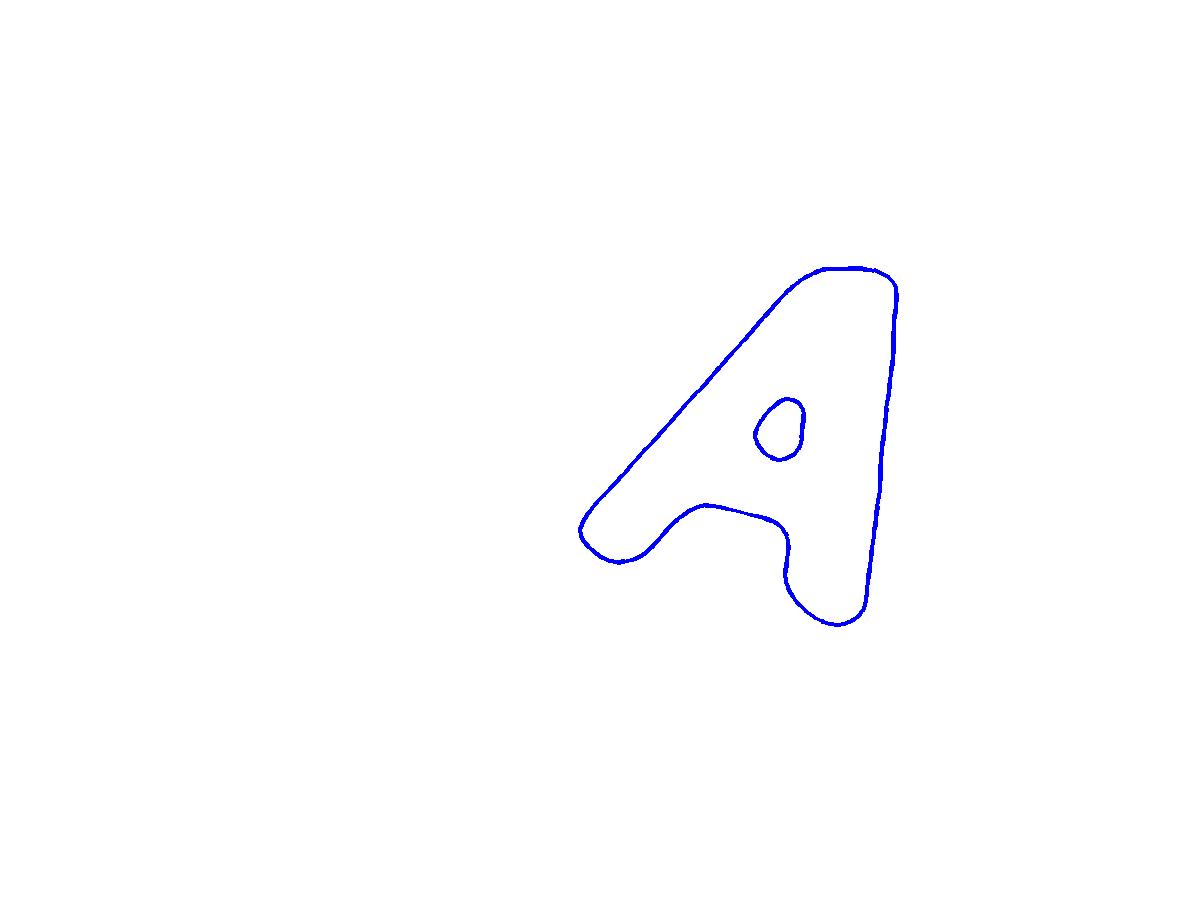}}}
  \caption{The curve evolution for concurrently rotation, scaling and translation of ``$\mathbb{A}$''.}\label{FIG:simi_A}
\end{figure*}

\begin{figure*}
  \setlength{\tabcolsep}{0pt}
  \subfloat{\fbox{\includegraphics[width=0.15\textwidth]{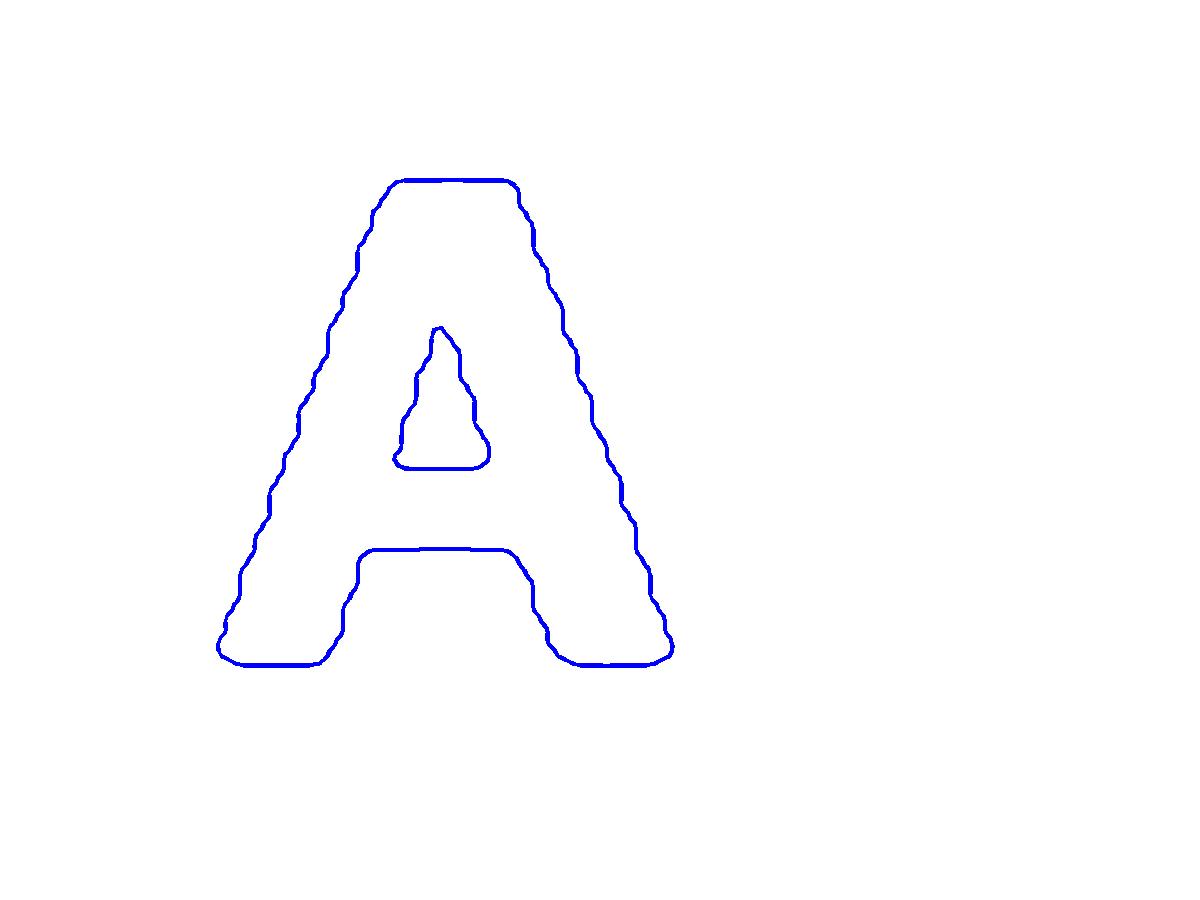}}}
  \subfloat{\fbox{\includegraphics[width=0.15\textwidth]{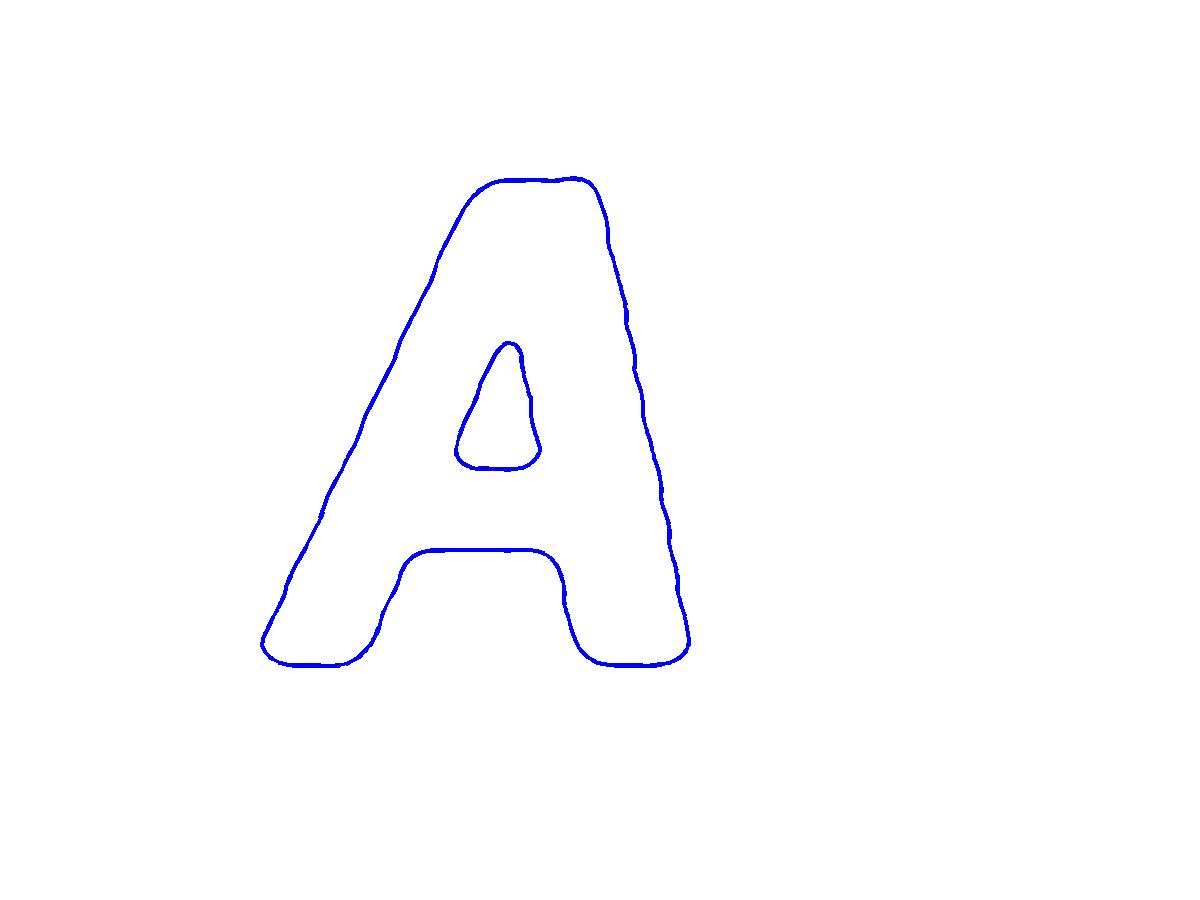}}}
  \subfloat{\fbox{\includegraphics[width=0.15\textwidth]{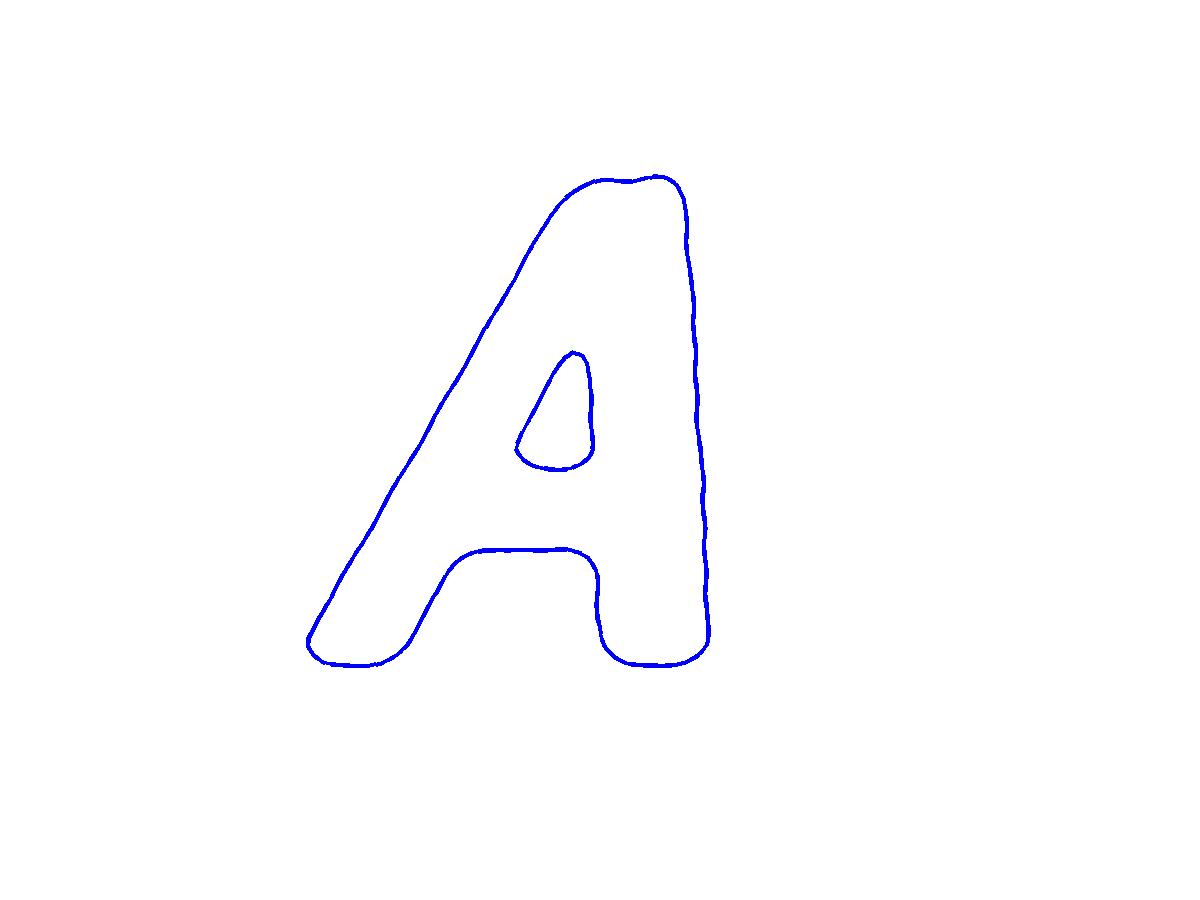}}}
  \subfloat{\fbox{\includegraphics[width=0.15\textwidth]{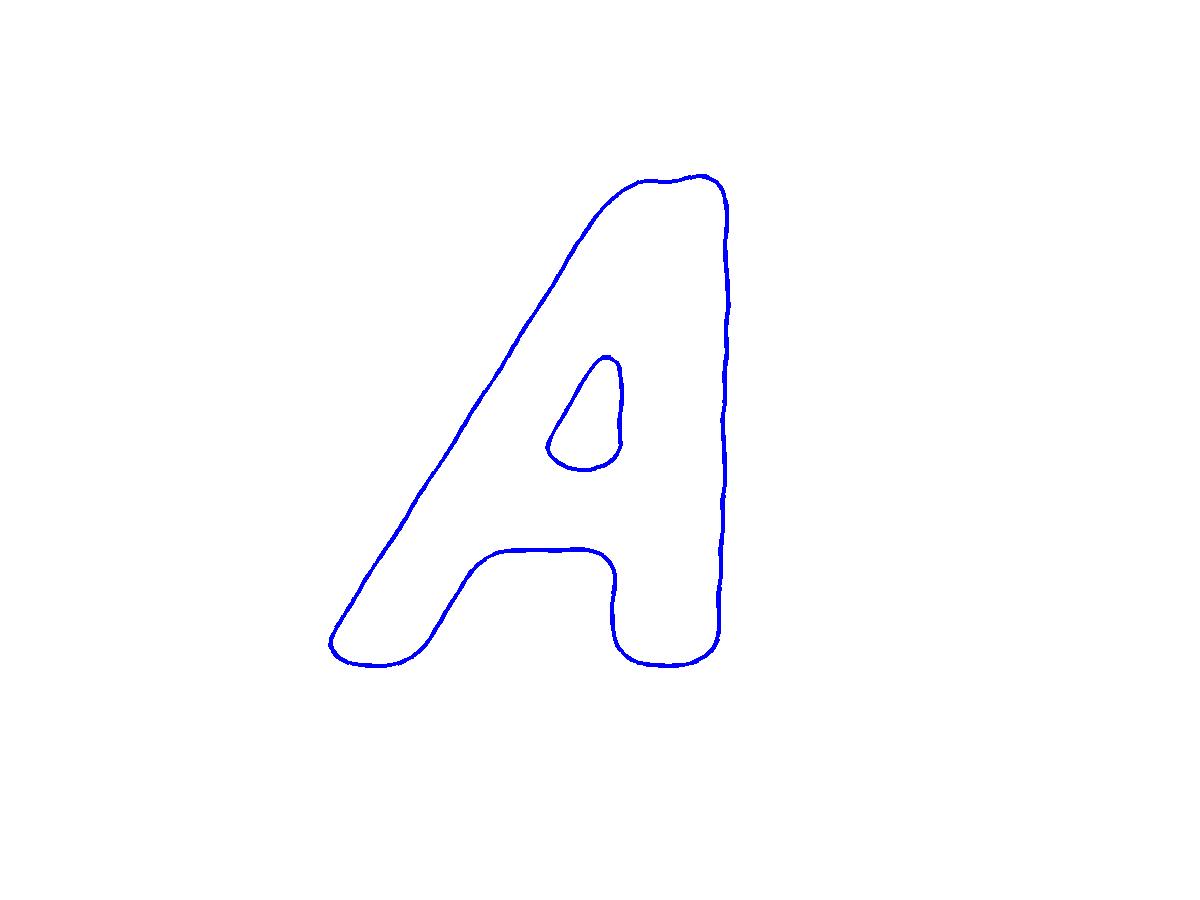}}}
  \subfloat{\fbox{\includegraphics[width=0.15\textwidth]{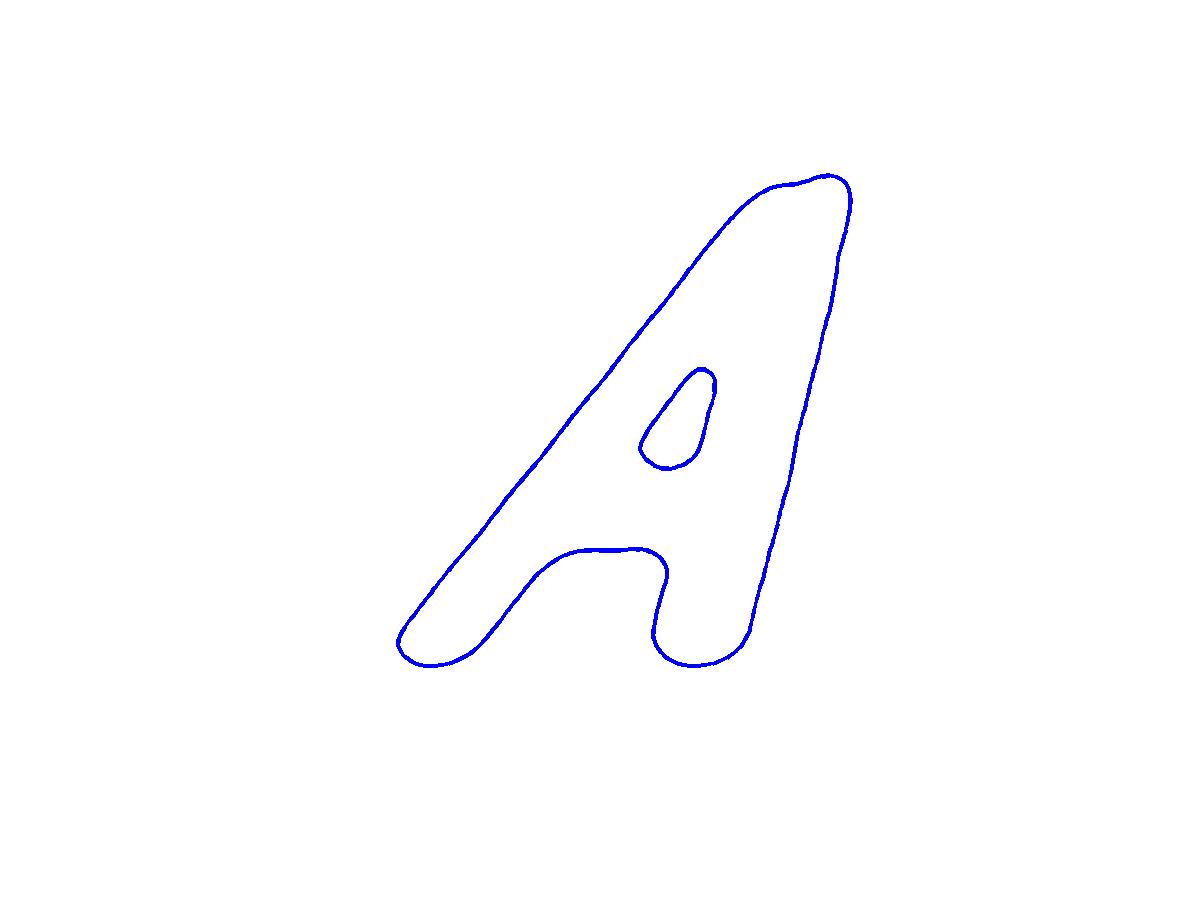}}}
  \subfloat{\fbox{\includegraphics[width=0.15\textwidth]{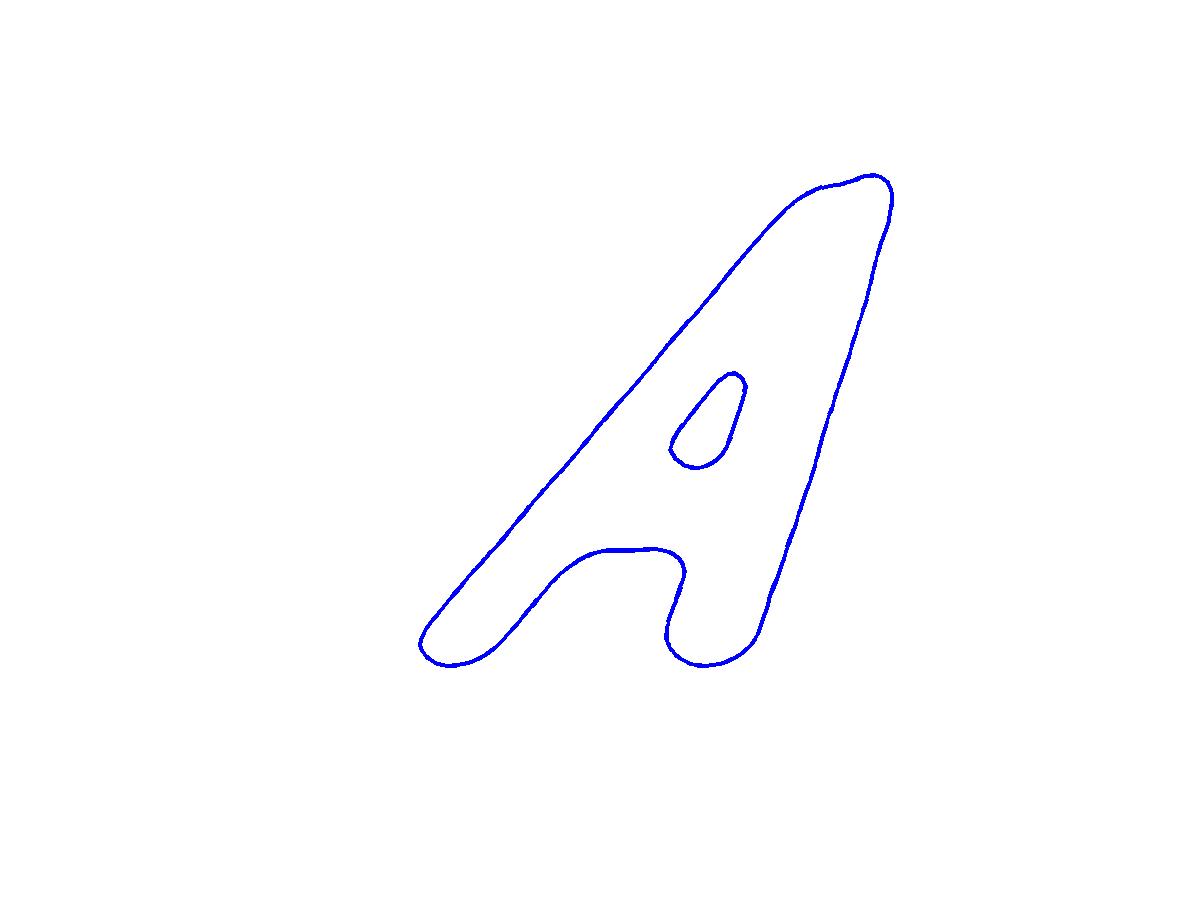}}}
  \caption{The curve evolution for affine transformation of ``$\mathbb{A}$''.}\label{FIG:aff_A}
\end{figure*}



\section{Minimizing active contour energy by PVSE}\label{SEC:CoPV}
In this section, we consider how the PVSE can be used for minimizing active contour energies, such that we can use PVSE to locate object boundaries. To this effect, we propose a new method called \emph{calculus of prior variations}. The calculus of prior variations can be used as an alternative to the conventional calculus of variations for deriving the curve evolution equations if the form of the variation is known \emph{a priori}.

The conventional calculus of variations, which can be represented by the $G\hat{a}teaux$ derivative, is as follows:
\begin{equation}\label{EQ:CoV_1}
\begin{split}
{d \mathcal{J}\over d t}
                &=\left\langle\nabla \mathcal{J},{\partial C\over\partial t}\right\rangle^*_p,
\end{split}
\end{equation}
where $\nabla \mathcal{J}$ is the functional gradient, $\langle\cdot,\cdot\rangle_p^*$ is the functional inner product of two 2-D vector functions.

Substituting the PVSE equation (\ref{EQ:CE_DDSM}) into the $G\hat{a}teaux$ derivative in Equation (\ref{EQ:CoV_1}), we obtain the following:
\begin{equation}\label{EQ:CoV_2}
\begin{split}
{d \mathcal{J}\over d t}
                &=\left\langle\nabla \mathcal{J},{\Big[\mathbf{V}^\theta\Big]}{d\theta\over dt}\right\rangle^*_p\\
                &=\sum_{i=1}^n\left\langle\nabla \mathcal{J},\mathbf{V}^{\theta_i}\right\rangle^*_p{d\theta_i\over dt}={\mathbf{D} J\over \mathbf{D}\theta}{d\theta \over dt},
\end{split}
\end{equation}
where $\theta=[\theta_1,\theta_2,...,\theta_n]^T$, $n$ is the number of parameters and we omit $t=t'$. We can obtain the gradient descent equation for $\theta$ as follows:
\begin{equation}\label{EQ:CoV_theta1}
\begin{split}
{d\theta\over dt} &= -{\mathbf{D} J\over \mathbf{D}\theta}=-\int_C\Big[\mathbf{V}^\theta\Big]\nabla \mathcal{J}dp\\
&=-\left[\left\langle\mathbf{V}^{\theta_i},\nabla \mathcal{J}\right\rangle_p^*\Bigg|i=1,2,...,n\right]^T.
\end{split}
\end{equation}
%

By back substituting (\ref{EQ:CoV_theta1}) into the PVSE equation (\ref{EQ:CE_DDSM}), we obtain the following:
%
%
\begin{equation}\label{EQ:PVSE_general}
\begin{split}
{\partial C(p,t)\over\partial t} =-\sum_{i=1}^n\left\langle\nabla \mathcal{J},\mathbf{V}^{\theta_i}\right\rangle^*_p\mathbf{V}^{\theta_i}&\\
C(p,0)=C_o(p).&
\end{split}
\end{equation}

The equation of energy minimizing PVSE (\ref{EQ:PVSE_general}) finalizes our approach for shape registration. We may substitute the functional gradients of various active contour energies into (\ref{EQ:PVSE_general}) as well as the corresponding level set equation (\ref{EQ:LS_PVSE}). For example, the gradient descent equation of GAC \cite{caselles97GAC,Yezzi97GAC_J} is the following:
\begin{equation}
{\partial C\over\partial t} = -\nabla \mathcal{J}_{_{GAC}}(C)= g\kappa\mathbf{N}-\langle\nabla g,\mathbf{N}\rangle\mathbf{N},
\end{equation}
where $\kappa$ is the contour curvature, $g$ is an edge indicator function and $\mathbf{N}$ is the contour normal.

The general form of the corresponding PVSE equation is the following:
\begin{equation}\label{EQ:GAC_PVSE}
{\partial C\over\partial t}= -\sum_{\theta_i\in\theta}\langle\nabla \mathcal{J}_{_{GAC}},\mathbf{V}^{\theta_i}\rangle_p^* \mathbf{V}^{\theta_i}.
\end{equation}
Thus, we may call Eq. (\ref{EQ:GAC_PVSE}) the GAC PVSE equation.

Likewise, we may rewrite the gradient descent equation of Chan-Vese active contour \cite{ChanVese01ActiveCon} or the region competition \cite{Zhusongchun96RegComp} as follows:
\begin{equation}
\begin{split}
{\partial C\over\partial t} &= -\nabla \mathcal{J}_{_{CV}}(C)\\
    &= \left(\mu \kappa-\nu+\lambda_2|u_0-c_2|^2-\lambda_1|u_0-c_1|^2\right)\mathbf{N},\\
\end{split}
\end{equation}
where $\mu,\nu,\lambda_1,\lambda_2$ are positive penalty coefficients, $u_0$ is the (feature) image, $c_1, c_2$ are defined below, all other notations are defined previously:
\begin{equation}
c_1 = \frac{\int\limits_\Omega u_0 H(\phi)dxdy}{\int\limits_\Omega H(\phi)dxdy},~~
c_2 = \frac{\int\limits_\Omega u_0 (1-H(\phi))dxdy}{\int\limits_\Omega(1-H(\phi))dxdy}.
\end{equation}

The general form of the corresponding PVSE equation is the following:
\begin{equation}\label{EQ:CV_PVSE}
{\partial C\over\partial t}= -\sum_{\theta_i\in\theta}\langle\nabla \mathcal{J}_{_{CV}},\mathbf{V}^{\theta_i}\rangle_p^* \mathbf{V}^{\theta_i}.
\end{equation}
We may call (\ref{EQ:CV_PVSE}) the Chan-Vese PVSE equation.


We can observe from the PVSE equations above that the velocity of the energy-minimizing curve evolution is generated by \emph{projecting} the negative functional gradient to the subspace spanned by the prior variations. The derivations for the energy-minimizing PVSE equations for all the existing active contours are similar. The rest is to substitute the rigid and non-rigid prior variations into the PVSE equations for implementation. The energy-minimizing PVSE equation is to be implemented by numerical solvers, such as the level set method. The convergence can be detected if the short-time, e.g. 20 iterations, average of the contour motion approaches zero. The implementation is summarized in Algorithm \ref{Alg:PVSE}. We adopt the approximate Dirac delta function $\delta_\epsilon$ used in \cite{ChanVese01ActiveCon}. The re-initialization is achieved by using high-order ENO scheme
\cite{OsherBookDynamic}.

\begin{algorithm}
\DontPrintSemicolon
\LinesNumbered
\SetKwInOut{Inputalg}{Input}\SetKwInOut{Outputalg}{Output}
\Inputalg{Initial ROI $\Omega_o$, Prior Variations $\big[\mathbf{V}^\theta_{rg}\big]$ or $\big[\mathbf{V}^\theta_{nrg}\big]$, Input image $I(\cdot)$}
\Outputalg{$\phi^*$}
\Begin{
    $\phi^{0}(x,y)\Leftarrow \left\{\begin{array}{rl}
                                -1,& \hbox{if } [x,y]^T\in\Omega_o\\
                                1,&  \hbox{if }[x,y]^T\in \overline{\Omega}_o
                              \end{array}\right.$\;\vspace{5pt}
    $\big[\mathbf{V}^\theta\big]\Leftarrow\big[\mathbf{V}^\theta_{rg}\big]$ or $\big[\mathbf{V}^\theta_{nrg}\big]$\;
    $k\Leftarrow1$\;
    \Repeat{Convergence}{
        $\phi^k\Leftarrow$ \texttt{Reinitialize} $\phi^{k-1}$\;
        $\nabla\mathcal{J}^k\Leftarrow\nabla \mathcal{J}_{_{AC}}(\phi^k,I)$\;\vspace{5pt}
        $\mathcal{P}^k_{_{\mathbf{V}^\theta}}\Leftarrow\sum\limits_{{\theta_i}\in\theta}\langle\mathbf{V}^{\theta_i}, \nabla \mathcal{J}^k\rangle_p^*\mathbf{V}^{\theta_i}$\;
        $\phi^{k+1}\Leftarrow \phi^{k}-\Delta t \left\langle\mathcal{P}^k_{_{\mathbf{V}^\theta}}, \nabla\phi^k\right\rangle\delta_\epsilon(\phi^{k})$\;
        $k\Leftarrow k+1$\;
    }
    $\phi^*\Leftarrow\phi^k$\;
}
\caption{The energy-minimizing PVSE}\label{Alg:PVSE}
\end{algorithm} 
\section{A theory of shape preservability of PVSE for shape recovery}\label{SEC:Theory_ShaPre}
In this section, we address the feasibility of modeling shape deformation by PVSE. The prior variations can be chosen as the prior variations of either the similarity or affine transformations for modeling similarity or affine transformations. The corresponding prior variations are as shown in Eqs. (\ref{EQ:Simi_V1})(\ref{EQ:Simi_V2}) and (\ref{EQ:Aff_V1})(\ref{EQ:Aff_V2}). Our major difficulty of the deformable shape modeling lies in choosing the prior variations for modeling non-rigid deformations. The question is: what prior variations \emph{can} be used to model non-rigid deformations of an object? \footnote{Note that the theory established in \cite{Wang09PVCE} was not correct.}

\subsection{Shape preservability as the invariance of shape characteristics}\label{SUBSEC:CurvatureBounds}
We propose to choose the prior variations that can achieve shape recovery, meaning that the shape characteristics are always preserved during the deformation. An important shape characteristic is the zero-crossing of curvature in the curvature scale space \cite{Abbasi99CSS,Mokhtarian03CSSbook}. Therefore, we investigate how such zero-crossings, or feature points, may vary during the PVSE. A fundamental result is the following.
\begin{theorem}\label{THM:Curvature_D2}
Given the shape evolution governed by (\ref{EQ:CE_DDSM}), the dynamics of the curvature $\kappa$ at the positions where $\kappa=0$, i.e. the zero-crossings of the curvature, satisfies the following:
\begin{equation}\label{EQ:DKappa_final}
\left\|\left.{\partial \kappa\over\partial t}\right|_{\kappa=0}\right\| \leq \left\|{d\theta\over dt}\right\|\sum_i^n\left\|{\mathbf{D}^2\mathbf{V}^{\theta_i}\over\mathbf{D}\mathbf{x}^2}\right\|.
\end{equation}
\end{theorem}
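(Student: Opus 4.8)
The plan is to reduce the statement to a pointwise estimate on the second arc-length derivative of the normal speed of the flow, and then to control that derivative by the spatial Hessians of the prior variations. Write the PVSE velocity as $\mathbf{F}=[\mathbf{V}^\theta]\,d\theta/dt=\sum_{i=1}^n\mathbf{V}^{\theta_i}\dot\theta_i$ with $\dot\theta_i=d\theta_i/dt$, and decompose it in the Frenet frame as $\mathbf{F}=\alpha\mathbf{T}+\beta\mathbf{N}$, where $\mathbf{T}$ is the unit tangent, $\alpha=\langle\mathbf{F},\mathbf{T}\rangle$ and $\beta=\langle\mathbf{F},\mathbf{N}\rangle$. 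The key conceptual point is that under (\ref{EQ:CE_DDSM}) the material point at parameter $p$ is transported by $\mathbf{F}$ evaluated at its current location, so the curvature recorded at fixed $p$ changes both through genuine normal motion and through the tangential sliding of points along the curve; the tangential part must be tracked, not discarded.

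First I would derive the curvature evolution equation. Because the induced metric $g=\|C_p\|$ is time dependent, $\partial_t$ and $\partial_s$ do not commute; computing $\partial_t g/g=\alpha_s-\beta\kappa$ gives the commutator $[\partial_t,\partial_s]=(\beta\kappa-\alpha_s)\partial_s$. Propagating this through $\mathbf{T}=\partial_s C$ together with the Frenet relations $\partial_s\mathbf{T}=\kappa\mathbf{N}$ and $\partial_s\mathbf{N}=-\kappa\mathbf{T}$ yields $\partial_t\mathbf{T}=(\alpha\kappa+\beta_s)\mathbf{N}$ and finally
\begin{equation}
\frac{\partial\kappa}{\partial t}=\beta_{ss}+\kappa^2\beta+\alpha\kappa_s,
\end{equation}
up to the overall sign fixed by the orientation conventions for $\mathbf{N}$ and $\kappa$ (irrelevant here since the claim is stated with a norm). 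Evaluating at a zero-crossing $\kappa=0$, the middle term drops, leaving $\partial_t\kappa|_{\kappa=0}=\beta_{ss}|_{\kappa=0}+\alpha\kappa_s$.

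Next I would expand $\beta_{ss}$ explicitly. Differentiating $\beta=\sum_i\dot\theta_i\langle\mathbf{V}^{\theta_i},\mathbf{N}\rangle$ along arc length with the chain rule $\partial_s\mathbf{V}^{\theta_i}=(\mathbf{D}\mathbf{V}^{\theta_i}/\mathbf{D}\mathbf{x})\mathbf{T}$, the identity $\partial_s\big[(\mathbf{D}\mathbf{V}^{\theta_i}/\mathbf{D}\mathbf{x})\mathbf{T}\big]$ producing the Hessian contraction $(\mathbf{D}^2\mathbf{V}^{\theta_i}/\mathbf{D}\mathbf{x}^2)(\mathbf{T},\mathbf{T})$, and the Frenet relations, I would collect the terms surviving at $\kappa=0$. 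Every contribution carrying an explicit factor $\kappa$ vanishes, but differentiating the term $-\kappa\langle\mathbf{V}^{\theta_i},\mathbf{T}\rangle$ leaves a residual $-\kappa_s\langle\mathbf{V}^{\theta_i},\mathbf{T}\rangle$, so that $\beta_{ss}|_{\kappa=0}=\sum_i\dot\theta_i\big\langle(\mathbf{D}^2\mathbf{V}^{\theta_i}/\mathbf{D}\mathbf{x}^2)(\mathbf{T},\mathbf{T}),\mathbf{N}\big\rangle-\alpha\kappa_s$. Substituting into the expression from the previous step, the two $\alpha\kappa_s$ terms cancel exactly, giving the clean identity $\partial_t\kappa|_{\kappa=0}=\sum_i\dot\theta_i\big\langle(\mathbf{D}^2\mathbf{V}^{\theta_i}/\mathbf{D}\mathbf{x}^2)(\mathbf{T},\mathbf{T}),\mathbf{N}\big\rangle$. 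The stated bound then follows by the triangle inequality, Cauchy–Schwarz, the operator-norm estimate $\big\|(\mathbf{D}^2\mathbf{V}^{\theta_i}/\mathbf{D}\mathbf{x}^2)(\mathbf{T},\mathbf{T})\big\|\le\|\mathbf{D}^2\mathbf{V}^{\theta_i}/\mathbf{D}\mathbf{x}^2\|$ with $\|\mathbf{T}\|=\|\mathbf{N}\|=1$, and $|\dot\theta_i|\le\|d\theta/dt\|$ for each $i$.

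The main obstacle I anticipate is precisely the appearance and cancellation of the advection term $\alpha\kappa_s$. A naive derivation using only the geometric (normal-speed) curvature evolution, or one that imports a textbook evolution formula under one frame convention while hand-computing $\beta_{ss}$ under another, would leave an uncancelled $\kappa_s$ contribution — indeed a spurious doubling $2\alpha\kappa_s$ — that is absent from the claimed bound. It is the careful bookkeeping of the time/arc-length non-commutativity, equivalently the recognition that material points also slide tangentially under the warping flow, that forces the $\kappa_s$ terms to annihilate. Keeping the sign conventions for $\kappa$ and $\mathbf{N}$ consistent between the Frenet frame and the curvature evolution equation is therefore the delicate part; everything after the cancellation is a routine normed-space estimate.
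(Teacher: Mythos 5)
Your proof is correct, and it reaches the theorem by a genuinely different route from the paper's. The paper argues extrinsically and in Lagrangian fashion: it discretizes the PVSE in time, $C(s,t+\Delta t)=C(s,t)+\Delta t\sum_i\dot\theta_i\mathbf{V}^{\theta_i}(C(s,t))$ with $s$ the arclength at time $t$, differentiates twice in $s$, and observes that at a zero crossing the term $\big[{\mathbf{D}\mathbf{V}^{\theta_i}\over\mathbf{D}\mathbf{x}}\big]C_{ss}$ vanishes because $C_{ss}=\kappa\mathbf{N}=0$, leaving exactly your Hessian contraction $\mathbf{T}\otimes\big[{\mathbf{D}^2\mathbf{V}^{\theta_i}\over\mathbf{D}\mathbf{x}^2}\big]\otimes\mathbf{T}$; it then sets $\partial_t C_{ss}=\kappa_t\mathbf{N}$ and bounds the norm. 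It never decomposes the velocity as $\alpha\mathbf{T}+\beta\mathbf{N}$, never invokes the evolution law $\partial_t\kappa=\beta_{ss}+\kappa^2\beta+\alpha\kappa_s$, and never meets the commutator $[\partial_t,\partial_s]$, so the advection term and its cancellation, which you rightly identify as the delicate point of your route, simply never appear there. What your route buys is rigor on precisely the point the paper glosses over: after an instant of flow the label $s$ is no longer arclength, so the paper's equality $\|\partial_t C_{ss}\|=\|\kappa_t\|$ is not literally justified; the precise statement, which your commutator bookkeeping supplies, is that $\kappa_t$ is the normal component of $\partial_t C_{ss}$ (the parameterization correction is purely tangential at $\kappa=0$), whence $|\kappa_t|\le\|\partial_t C_{ss}\|$, which is all the bound requires. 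Your exact identity $\partial_t\kappa|_{\kappa=0}=\sum_i\dot\theta_i\big\langle(\mathbf{D}^2\mathbf{V}^{\theta_i}/\mathbf{D}\mathbf{x}^2)(\mathbf{T},\mathbf{T}),\mathbf{N}\big\rangle$ is therefore slightly sharper than what the paper establishes; what the paper's route buys in exchange is brevity, since working directly with the vector $C_{ss}$ avoids all Frenet-frame and commutator machinery.
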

In words, the dynamics of the curvature at the zero-crossing points is bounded by the norm of the second order derivatives of the prior variations of the PVSEs. Besides, we show that the distance between the contour parameters at the zero-crossings is also bounded by the norm of the second order derivative of the prior variations.
\begin{corollary}\label{COR:F_bound}
Given two zero-crossing points $C(p_1,t)$ and $C(p_2,t)$ at time $t$, the dynamic of the distance between the parameters, $p_1$ and $p_2$, of the feature points is bounded as follows:
\begin{equation}\label{EQ:UPB_d_dist/dt}
\left|{d\over dt}|p_1-p_2|\right|\leq{2\over\|\kappa_sC_p\|}\left\|{d\theta\over dt}\right\|\sum_i^n\left\|{\mathbf{D}^2\mathbf{V}^{\theta_i}\over\mathbf{D}\mathbf{x}^2}\right\|.
\end{equation}
\end{corollary}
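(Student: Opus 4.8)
The plan is to track each feature point as an implicit function of time and then combine the two motions with the triangle inequality. The starting observation is that a zero-crossing persists under the evolution, so if $p_1(t)$ and $p_2(t)$ denote the curve parameters of the two feature points, then the identities $\kappa(p_1(t),t)=0$ and $\kappa(p_2(t),t)=0$ hold for all $t$, with each $p_i$ now viewed as a function of time.

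First I would differentiate each identity totally with respect to $t$. By the chain rule,
\begin{equation}
\left.\frac{\partial\kappa}{\partial p}\right|_{p_i}\frac{dp_i}{dt}+\left.\frac{\partial\kappa}{\partial t}\right|_{p_i}=0,
\qquad\text{hence}\qquad
\frac{dp_i}{dt}=-\left.\frac{\partial\kappa/\partial t}{\partial\kappa/\partial p}\right|_{p_i}.
\end{equation}
Rewriting the spatial derivative through the arc length $s$, with $ds=\|C_p\|\,dp$, gives $\partial\kappa/\partial p=\kappa_s\|C_p\|$, so that $|\partial\kappa/\partial p|=\|\kappa_s C_p\|$, exactly the quantity in the denominator of the corollary. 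This step is the one that needs care: it is a genuine application of the implicit function theorem and is valid only if the zero-crossing is transversal, i.e. $\kappa_s\neq0$ at the feature point, which is precisely the condition guaranteeing $\|\kappa_s C_p\|\neq0$. The degenerate case $\kappa_s=0$ is the real obstacle, and I expect it to be excluded by the standing assumption that the feature points are simple zero-crossings.

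Having this, I would bound each velocity. The numerator $\|\partial\kappa/\partial t|_{\kappa=0}\|$ is exactly what Theorem \ref{THM:Curvature_D2} controls, so for each $i$,
\begin{equation}
\left|\frac{dp_i}{dt}\right|\leq\frac{1}{\|\kappa_s C_p\|}\left\|\frac{d\theta}{dt}\right\|\sum_i^n\left\|\frac{\mathbf{D}^2\mathbf{V}^{\theta_i}}{\mathbf{D}\mathbf{x}^2}\right\|.
\end{equation}
Finally, since the two feature points are distinct, $|p_1-p_2|$ is locally smooth and $\frac{d}{dt}|p_1-p_2|=\sign(p_1-p_2)\big(\dot p_1-\dot p_2\big)$, so the triangle inequality yields $\big|\frac{d}{dt}|p_1-p_2|\big|\leq|\dot p_1|+|\dot p_2|$. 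Bounding both terms by the common worst-case denominator $\|\kappa_s C_p\|$ (taking the smaller of the two values at $p_1,p_2$) produces the factor of $2$ in the statement and finishes the argument. Apart from the transversality requirement, everything reduces to implicit differentiation plus the triangle inequality applied on top of the theorem.
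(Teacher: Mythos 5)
Your proof is correct and takes essentially the same route as the paper's: the paper also tracks the moving zero-crossing via the chain rule, writing $\left.{dp\over dt}\right|_{\kappa=0}={dp\over ds}{ds\over d\kappa}{\partial\kappa\over\partial t}={1\over\|C_p\|\kappa_s}{\partial\kappa\over\partial t}$ (your implicit-function-theorem version, with the sign that the paper drops harmlessly), then bounds the numerator by Theorem \ref{THM:Curvature_D2} and applies the triangle inequality over the two feature points to obtain the factor $2$. Your write-up is marginally more careful---you make the transversality requirement $\kappa_s\neq0$ and the worst-case choice of denominator between $p_1$ and $p_2$ explicit, where the paper only remarks that $\kappa\mapsto s$ is invertible for nonzero $\kappa_s$---but the argument is the same.
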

The proofs of Theorem \ref{THM:Curvature_D2} and Corollary \ref{COR:F_bound} are deferred to the Appendix. The theoretical findings above suggest using PVSEs with prior variations of small second order derivatives for modeling non-rigid shape variations.

Similarity and affine transformations are natural shape preserving deformations. Therefore, it is necessary to examine whether this theory is valid for the rigid type transformations, such as the similarity and affine transformations. By looking at Eqs. (\ref{EQ:Simi_V1}) (\ref{EQ:Simi_V2}) (\ref{EQ:Aff_V1}) and (\ref{EQ:Aff_V2}), we can immediately see that the second order derivatives of these prior variations vanish, meaning that the zero-crossings of the curvature do not change. Accordingly, the established theory is validated in the case of similarity and affine transformations.

\subsection{The prior vibrations}
In choosing prior variations for modeling the non-rigid deformations, we are particularly interested in the following family of functions:
\begin{equation}\label{EQ:Harmo}
\left[\begin{array}{c}
e^1_{mn}(\mathbf{x})\\
e^2_{mn}(\mathbf{x})
\end{array}\right] =
\left[\begin{array}{c}
 {\sin(\pi nx)\cos(\pi my)\over{\pi^2}(n^2+m^2)}\\
 {\cos(\pi mx)\sin(\pi ny)\over{\pi^2}(n^2+m^2)}
\end{array}\right],
\end{equation}
where $x$ and $y$ are the coordinates, and the coordinates are normalized such that $\mathbf{x}=[x,y]^T\in[0,1]\times[0,1]$ as in \cite{Amit91StructuralImage}.

This class of functions are the eigenfunctions of the Laplace operator. They have been used for representing non-rigid shape deformations in the deformable template matching \cite{Amit91StructuralImage,Jain96Objectmatching}. Variants of this function class have also been used, such as in \cite{Staib92DeformableModel,Krinidis05VibrationDeformableModel}. This means that this function class is useful, though the theory behind its usefulness has not been revealed.

We may consider the shape warping defined by using this function class. The corresponding PVSE has the prior variations in the form of this function class as follows:
\begin{align}
&\Big[\mathbf{V}^\theta_{nrg}\Big] = \Big[\mathbf{V}_{mn}\Big|0<m\leq M,0<n\leq N\Big],\label{EQ:V_Harmo1}\\
&\mathbf{V}_{mn} = \left[\begin{array}{c}
{e}_{mn}^1(C(p,t)) \\
{e}_{mn}^2(C(p,t)) \\
\end{array}\right]^T,\label{EQ:V_Harmo2}
\end{align}
which is named as the \emph{prior vibrations} due to the periodic oscillating structure of the basis functions.

In the following, we show that the PVSE with prior vibrations is shape-preserving with a small order $M$ and $N$.
\begin{theorem}\label{THM:BD_PriVib}
For the function class defined in (\ref{EQ:V_Harmo1}) and (\ref{EQ:V_Harmo2}), we have the following bound of its second order derivatives:
\begin{equation}
\sum_{m,n=1}^{MN}\left\|{\mathbf{D}^2\mathbf{V}_{mn}\over\mathbf{D}\mathbf{x}^2}\right\|_\infty\leq 4MN,
\end{equation}
where $M$ and $N$ are defined in (\ref{EQ:V_Harmo1}). $\|\|_\infty$ is the $\infty$-norm of matrix.
\end{theorem}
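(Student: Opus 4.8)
The plan is to prove the bound by direct computation of the second-order partial derivatives of the two scalar fields $e^1_{mn}$ and $e^2_{mn}$ appearing in (\ref{EQ:Harmo}), and then to exploit the fact that the normalizing factor $\pi^2(n^2+m^2)$ has been chosen precisely so that every entry of the resulting Hessians is bounded by a constant \emph{independent} of $m$ and $n$. First I would differentiate $e^1_{mn}(\mathbf{x})=\sin(\pi nx)\cos(\pi my)/[\pi^2(n^2+m^2)]$ twice. Each differentiation in $x$ brings down a factor $\pi n$ and each differentiation in $y$ a factor $\pi m$, so the three distinct second partials acquire the prefactors $n^2/(n^2+m^2)$, $m^2/(n^2+m^2)$ and $nm/(n^2+m^2)$ respectively, each multiplying a product of two trigonometric factors of modulus at most $1$. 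The identical structure, with the roles of the two coordinates interchanged, holds for $e^2_{mn}$, as one checks by the same differentiation.

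The key elementary estimates are then $n^2/(n^2+m^2)\le 1$, $m^2/(n^2+m^2)\le 1$, and, by the arithmetic-geometric-mean inequality $2nm\le n^2+m^2$, also $nm/(n^2+m^2)\le \tfrac12$. Consequently every entry of the second-derivative array $\mathbf{D}^2\mathbf{V}_{mn}/\mathbf{D}\mathbf{x}^2$ is bounded in modulus by $1$, uniformly in $m$ and $n$; this uniformity is the entire point of the chosen normalization and is what prevents the bound from growing with the frequencies. Summing the absolute values of all second partials belonging to a single component gives $(n^2+2nm+m^2)/(n^2+m^2)=(n+m)^2/(n^2+m^2)\le 2$, again via $2nm\le n^2+m^2$, so the two components of $\mathbf{V}_{mn}$ defined in (\ref{EQ:V_Harmo2}) together contribute at most $4$. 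Since the matrix $\infty$-norm (the maximal absolute row sum) is dominated by the sum of all absolute entries, I obtain $\big\|\mathbf{D}^2\mathbf{V}_{mn}/\mathbf{D}\mathbf{x}^2\big\|_\infty\le 4$ for every fixed pair $(m,n)$.

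Finally I would sum this per-term estimate over the $MN$ index pairs $1\le m\le M$, $1\le n\le N$. Because each term is bounded by the absolute constant $4$ and there are exactly $MN$ terms, the sum is at most $4MN$, which is the claimed inequality. I expect the only delicate point to be bookkeeping rather than analysis: one must fix a definite convention for how the two component Hessians are assembled into the single matrix whose $\infty$-norm is taken, and then verify that this convention is indeed controlled by the sum-of-entries estimate above (the tighter row-sum bound would in fact yield $2MN$, so the stated $4MN$ leaves room to spare). No limiting, compactness, or existence argument is needed—the whole result reduces to the explicit trigonometric derivatives together with the two scalar inequalities $n^2\le n^2+m^2$ and $2nm\le n^2+m^2$.
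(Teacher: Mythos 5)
Your proposal is correct and follows essentially the same route as the paper: compute the second partials of $e^1_{mn}$ and $e^2_{mn}$ explicitly, bound the trigonometric factors by $1$ so that only the frequency ratios $n^2/(n^2+m^2)$, $m^2/(n^2+m^2)$, $nm/(n^2+m^2)$ remain, control these with the elementary inequality $2nm\le n^2+m^2$, and sum over the $MN$ index pairs. The only (cosmetic) difference is the per-term bookkeeping---the paper splits the two Hessians by the triangle inequality and bounds each $\infty$-norm row sum by $2$ via $(1+c)/(1+c^2)\le 2/(1+c)\le 2$, whereas you dominate the $\infty$-norm by the total entry sum $(n+m)^2/(n^2+m^2)\le 2$ per component---and both yield the same $4MN$.
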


The proof of Theorem \ref{THM:BD_PriVib} is deferred to the Appendix. According to our theory of shape preservability presented in Section \ref{SUBSEC:CurvatureBounds}, we can choose small values of $M$ and $N$, namely using the truncated series, to preserve the shape characteristic. Likewise, the smoothness of the prior variations, in terms of the magnitude of the first order differential of $\mathbf{V}_{mn}$, can also be guaranteed by using small values of $M$ and $N$, which also means that prior vibrations give smooth shape deformations.

Accordingly, we can substitute the prior vibrations in (\ref{EQ:V_Harmo2}) into the PVSE equation (\ref{EQ:CE_DDSM}) to lead to a shape-preserving non-rigid PVSE. We shall call the prior vibrations in (\ref{EQ:Harmo}) the $1^{st}$ order prior vibration if $M+N=1$ , and likewise the rest are the $2^{nd}$, $3^{rd}$,...,$(M+N)^{th}$ order prior vibrations.

\section{Experimental results}\label{SEC:Exp}

\subsection{Evaluation of the PVSE equation for modeling shape transformation}
In this subsection, we present the experimental results for evaluating the performance of PVSE in modeling similarity and affine shape transformation.

\subsubsection{Experiment configurations}
To evaluate the performance of the PVSE for modeling shape deformation, we propose to examine if the PVSE achieves the desired deformation. We therefore propose to measure the distance between the evolving shapes due to PVSE and the shapes deformed by using the given deformation. We consider the similarity and affine transformations. The shape distance is defined as follows:
\begin{equation}\label{EQ:S_dist}
\rho(\mathbf{S},\mathbf{S}_o) = \min\limits_{\mathbf{A},\mathbf{b}}~d(\mathbf{S},\mathbf{A}\circ\mathbf{S}_o+\mathbf{b}).
\end{equation}
We may use a natural distance between point sets, i.e. the Hausdorff distance, to compute the distance $d$. The Hausdorff distance may be sensitive to noise, but the shapes are clean in this experiment. $\mathbf{A},\mathbf{b}$ are the parameters of similarity or affine transformation. They can be solved by using shape matching, such as \cite{Array05ShapeContext}. We emphasize that the shape matching used here is only for producing a quantitative measure of the performance of our method.

In principle, if the PVSEs with prior variations (\ref{EQ:Simi_V1})(\ref{EQ:Simi_V2}) and (\ref{EQ:Aff_V1})(\ref{EQ:Aff_V2}) can achieve the corresponding shape transformations, the shape distance according to the corresponding similarity or affine geometry will be small. We also require a baseline result for comparison. In our experiments, we use the bolded \textbf{A} in Fig. \ref{FIG:A_shapes} (left) as the prototype, and we use the slanted \textit{A} in Fig. \ref{FIG:A_shapes} (middle) as the baseline shape. The size of both images is $64\times64$. We also present the optimally aligned shape against the target shape in Fig. \ref{FIG:A_shapes}(right). The affine invariant Hausdorff shape distance between the optimally aligned shape and the reference shape is about $4.5$ according to Eq. (\ref{EQ:S_dist}).
\begin{figure}[htb]
\centering
  \subfloat{\includegraphics[width=0.166\textwidth]{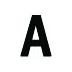}}
  \subfloat{\includegraphics[width=0.166\textwidth]{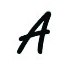}}
  \subfloat{\includegraphics[width=0.166\textwidth]{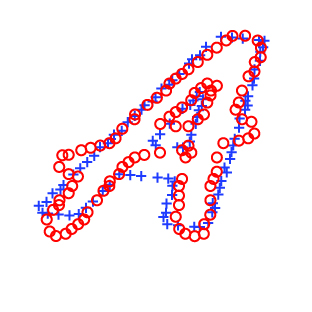}}
  \caption{The prototype (left) and baseline (middle) shapes used for evaluation, and the right shows the affine shape context matching of the baseline (blue crosses) by the prototype (red circles). }\label{FIG:A_shapes}
\end{figure}

Our experiment proceeds as follows.
\renewcommand{\theenumi}{\alph{enumi}}
{\emph{\begin{enumerate}
  \item Generate random velocities ${d\theta\over dt}$ for the prior variations;
  \item Run the corresponding PVSE with a prototype initial shape and record the evolving shapes;
  \item Perform the shape context matching for the recorded shapes and estimate the similarity/affine transformation parameters;
  \item Compute the shape distances defined by (\ref{EQ:S_dist}).
\end{enumerate}}}

In detail, we generate $10$ random vectors of ${d\theta\over dt}$ for the two types of PVSEs. We run each of the PVSEs for $50$ iterations to generate a collection of $10\times2\times50=1000$ shapes for evaluation.

\subsubsection{Experimental results and analysis}

The quantitative results of this experiment are shown in Figs. \ref{FIG:H_dists_simi_aff}. The results are visualized by boxplots for all iterations. The top and bottom of each box are the $25^{th}$ and $75^{th}$ percentiles of the distribution respectively. The line in the middle of each box is the median, and the red crosses are outliers. By inspecting Fig. \ref{FIG:H_dists_simi_aff}, we observe that the shape distances are small comparing with the baseline value and the curve evolution does not increase the shape distance, which is good. To conclude, the PVSE is effective for simulating the similarity and affine transformations. Some examples of the simulated PVSEs are visualized in Fig. \ref{FIG:PVSE_vs_ShapeContext} for better understanding the experiment. Table \ref{TB:H_dist} summarizes the quantitative results corresponding to Fig. \ref{FIG:PVSE_vs_ShapeContext}.
\begin{figure}
\centering
  \subfloat{\includegraphics[width=0.45\columnwidth]{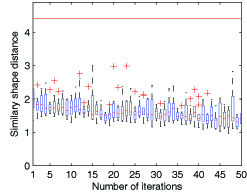}}
  \subfloat{\includegraphics[width=0.45\columnwidth]{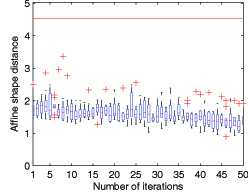}}
  \caption{The shape distance for the 50 iterations of similarity (left) and affine (right) PVSEs. The red line is the shape distance between the two ``A''s in Fig. \ref{FIG:A_shapes}.}\label{FIG:H_dists_simi_aff}
\end{figure}


\begin{figure}[htb]
\centering
 {\includegraphics[width=0.3\columnwidth]{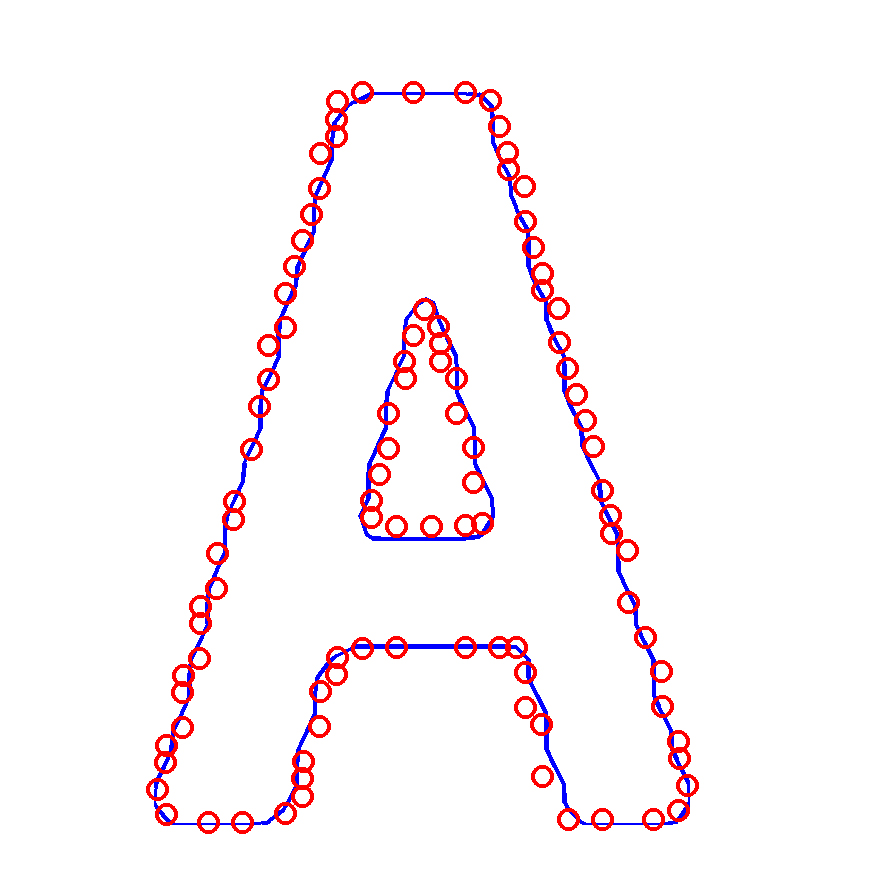}}
  {\includegraphics[width=0.3\columnwidth]{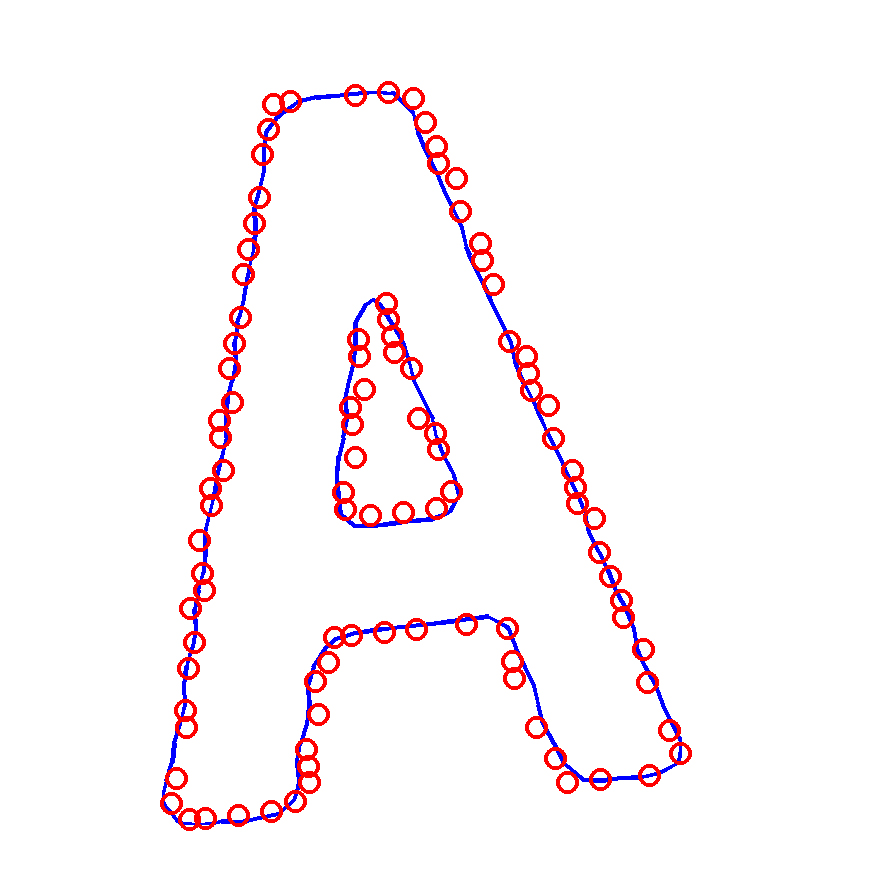}}
  {\includegraphics[width=0.3\columnwidth]{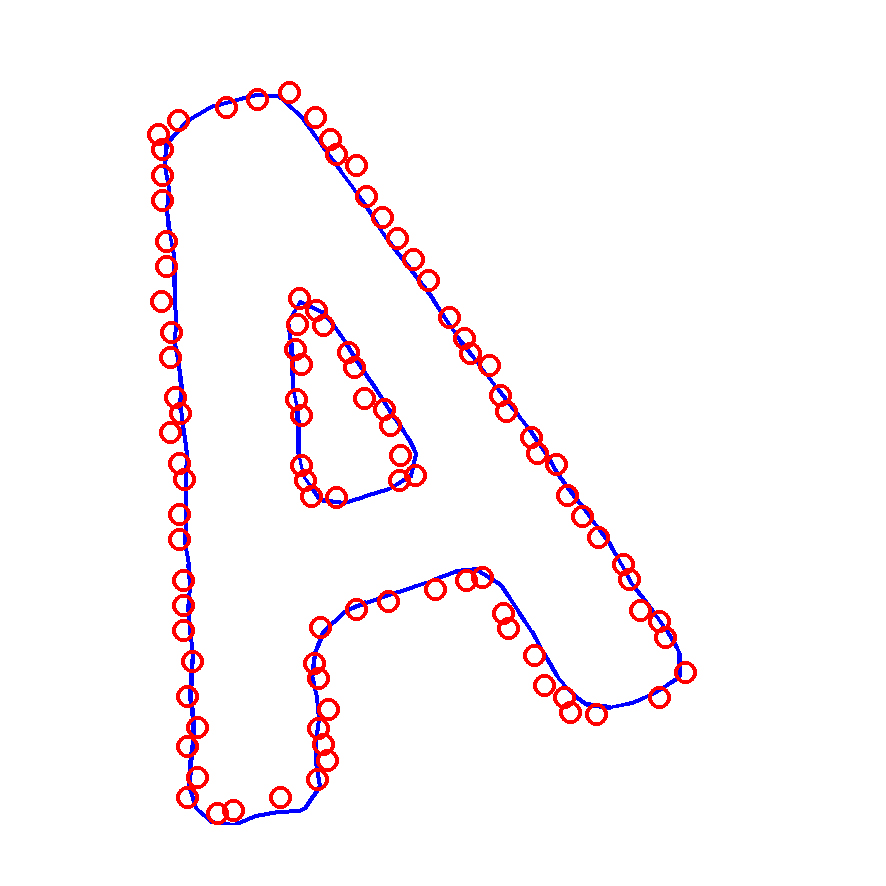}}\\
  {\includegraphics[width=0.3\columnwidth]{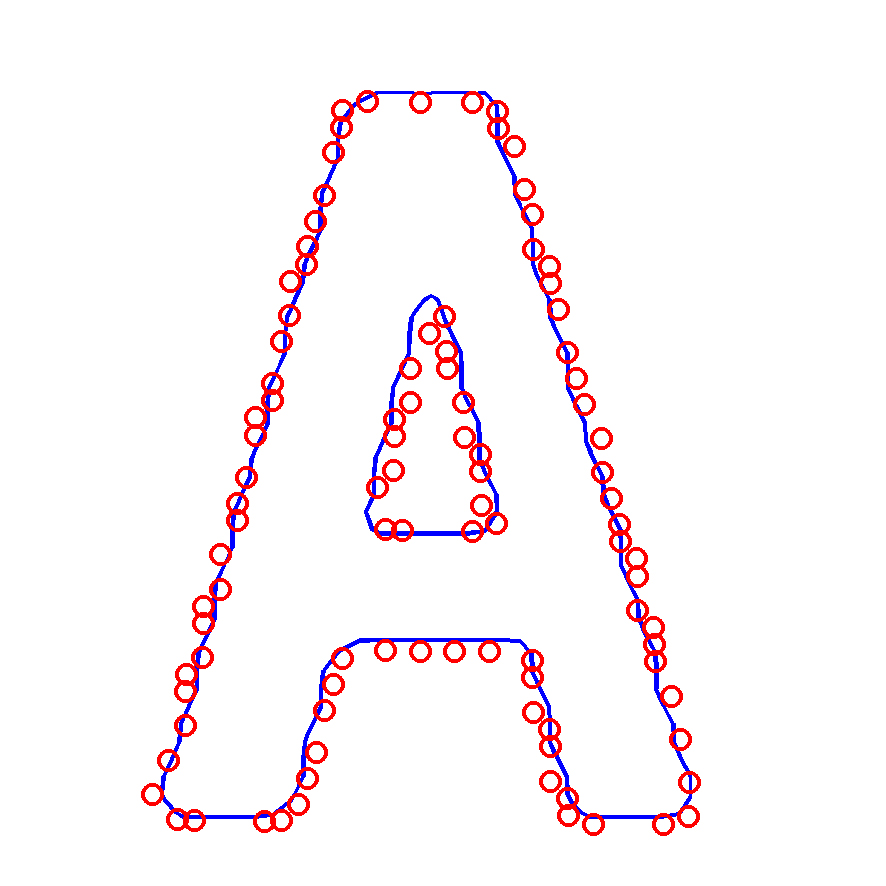}}
  {\includegraphics[width=0.3\columnwidth]{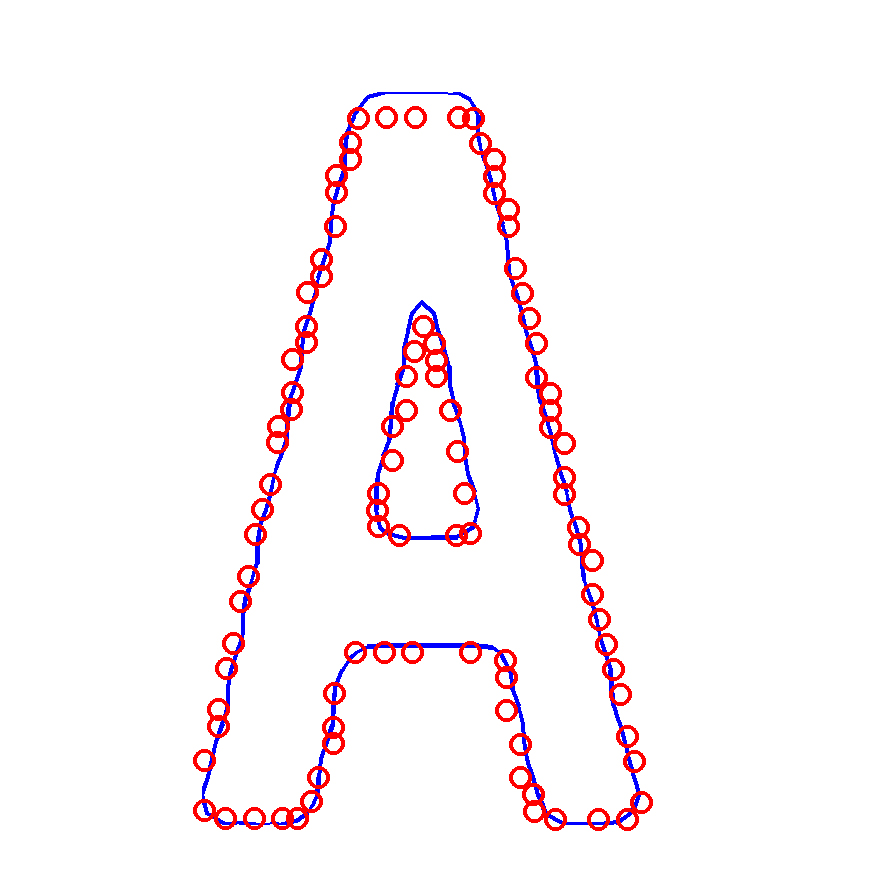}}
  {\includegraphics[width=0.3\columnwidth]{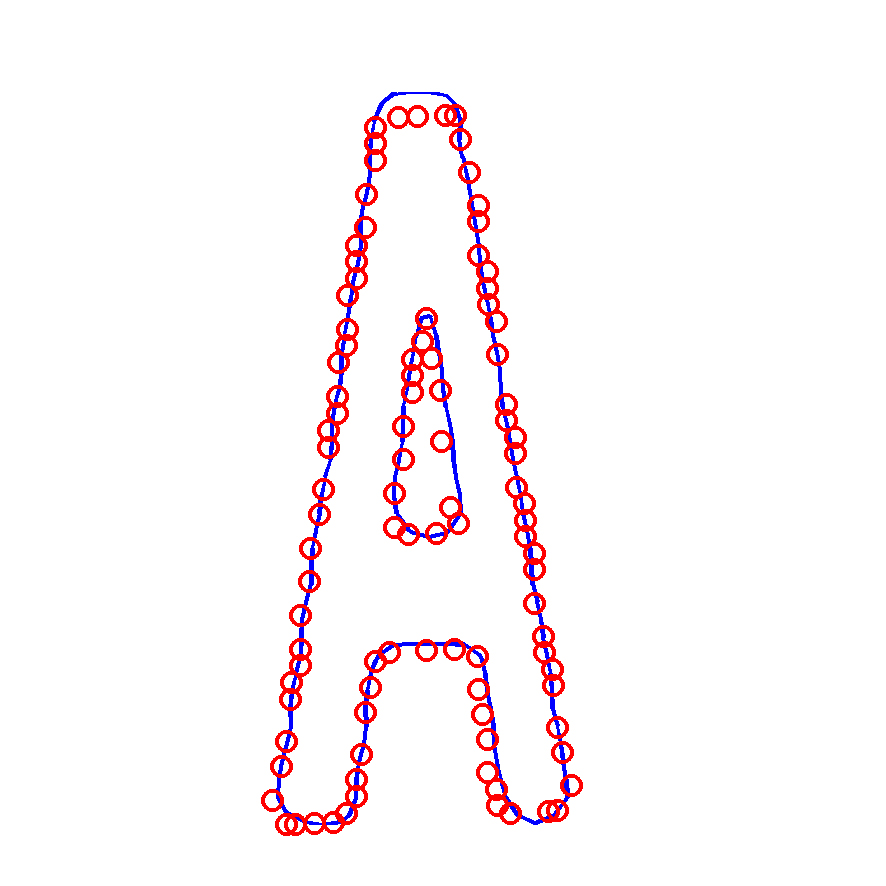}}\\
\caption{The PVSE and the affine shape matching by shape context. The blue curves are the contour curves due to PVSE, the red crosses are the results of shape matching by shape context. The first row corresponds to a PVSE by similarity variations;
The last row corresponds to a PVSE by $3^{rd}$ order prior vibrations.}
\label{FIG:PVSE_vs_ShapeContext}
\end{figure}

\begin{table}
\centering
\caption{Haussdorff shape distance corresponding to Fig. \ref{FIG:PVSE_vs_ShapeContext}}\label{TB:H_dist}
\begin{tabular*}{0.5\textwidth}{@{\extracolsep{\fill}}ccccc}
\toprule
  Prior Variation          & Dist. type &\multicolumn{3}{c}{Dist.}  \\  \cline{3-5}
               && (a) &  (b) & (c) \\ \hline
  Similarity   & Similarity &1.85 &  1.36  & 1.18 \\
  Affine       & Affine &2.44 &  1.54  & 1.66 \\
\bottomrule
\end{tabular*}
\end{table}

\subsection{Evaluation of the PVSE for shape extraction}
In this subsection, we evaluate the proposed method for shape extraction in the presence of rigid/nonrigid deformations, missing parts and/or shape overlapping.

\subsubsection{Experiment configurations}\label{EXP:Monkey_Horse}
In the experiments, we assume that the type of warping is known. We experiment on simple images to ensure that the active contour model is correct for modeling the object boundary/region, and our focus is on the deformable object model, namely PVSE. 

The size of the image for the monkey and the monkey with missing parts, the horse and the horse merged with the rider as shown in Figs. \ref{Fig:Intro_Ocl} and \ref{Fig:Intro_Exp}, is $128\times128$. We use Figs. \ref{Fig:Intro_Ocl}(a) and \ref{Fig:Intro_Exp}(a) as the template shapes. We randomly generate the parameters for $50$ similarity transformations, we also generate $50$ random non-rigid warping mappings in the form of Eq.(\ref{EQ:Harmo}). Then we apply the corresponding rigid and non-rigid deformations to the monkey and horse shapes in Figs. \ref{Fig:Intro_Ocl}(d) and \ref{Fig:Intro_Exp}(d) to obtain a total of $200 + 200 = 400$ shapes for evaluation. The probability density distributions that are used to generate random parameters are summarized in Table \ref{TB:Par}. In the experiment, the Chan-Vese active contour is chosen for modeling the object region. The prior variations are chosen as the ones corresponding to the deformations used for generating the test samples. We also use the deformed template shapes as the ground truth results for evaluation purpose.
\begin{table}
  \centering
\caption{The distributions of randomly generated parameters from uniform distributions}\label{TB:Par}
{\footnotesize\begin{tabular}{ccc}
\toprule
  Transformations & Parameters & Distributions \\\hline
  \multirow{3}{*}{\begin{tabular}{c}
                    Rigid, \\
                    $\vec{x}\in[-1,1]\times[-1,1]$ \\
                  \end{tabular}
  } & $\theta$ & $U(-\pi/4,\pi/4)$ \\\cline{2-3}
   & $\lambda$ & $U(-0.9,1.1)$ \\\cline{2-3}
   & $a$,$b$ & $U(-0.1,0.1)$ \\\hline
  \multirow{3}{*}{\begin{tabular}{c}
                    Non-rigid, \\
                    $\vec{x}\in[0,1]\times[0,1]$ \\
                  \end{tabular}} & $\xi^k_{mn},$ & \multirow{3}{*}{$U(-0.125,0.125)$} \\
  & $1\leq m+n\leq3,$&\\
  &$k=1,2$&\\
  \bottomrule
\end{tabular}}
\end{table}

\subsubsection{Statistical analysis of the performance of shape extraction}
When presenting the results, we used some abbreviations for the datasets. The abbreviations are summarized in Table \ref{TB:Label}.

The performance is measured by comparing the results with the ground truth, and the performance measure we adopted is the Jaccard's similarity measure. The Jaccard's similarity measure is defined as the overlapping ratio of two sets. We desire large value of the measure. We have no comparisons for rigid transformed shapes. We first present the shape variation in each test set in Fig. \ref{Fig:boxplot_cmp}. The large shape variations imply the difficulty of the shape registration.

The averages of the Jaccard similarity measures for our rigid PVSE on MRM and HRM are 0.96 and 0.94 respectively, and the corresponding standard deviations are 0.0033 and 0.033. For the non-rigidly deformed shapes, we have compared our method with the $H_1$ and $H_2$ gradients in \cite{Charpiat2007GG} as well as the rigid PVSE. The results are shown in Fig. \ref{Fig:boxplot_cmp}. The results indicate that our method significantly outperforms other methods. Figs. \ref{Fig:Var} and \ref{Fig:boxplot_cmp} jointly prove that the proposed method is robust to large shape variations.
\begin{table}
\centering
\caption{Abbreviations}\label{TB:Label}
{\begin{tabular}{cl}
\hline
\textbf{MR}:     & rigidly transformed ground truth of monkey\\
\textbf{MRM}: & rigidly transformed monkey with missing parts\\
\textbf{MNR}:    & non-rigidly transformed ground truth of monkey \\
\textbf{MNRM}:& non-rigidly transformed monkey with missing parts\\
\textbf{HR}:     & rigidly transformed ground truth of horse\\
\textbf{HRM}: & rigidly transformed horse merged with rider\\
\textbf{HNR}:    & non-rigidly transformed ground truth of horse\\
\textbf{HNRM}:& non-rigidly transformed horse merged with rider\\
\hline
\end{tabular}}
\end{table}

We also present some examples of the shape extraction results in Figs. \ref{Fig:CMP_horse&monkey}. From the figure, we are able to understand the behavior of each method. We can observe that the rigid PVSE can retrieve the correct size and main orientation of the object shape. The $H_1$ and $H_2$ gradient based shape evolution generally do not fit to the non-rigid deformation well, although the $H_1$ gradient based shape evolution can preserve the shape to some extent. The $H_2$ gradient based shape evolution does not preserve the original shape, but the converged shapes have a particular style. We can conclude that both of them are not suitable for the deformations appeared in this experiment.

\begin{figure}[htb]
\centering
  \includegraphics[width=0.8\columnwidth]{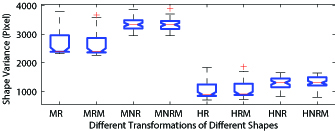}
\caption{Shape Variation: the distances between the initial prototype shapes and the shapes to be registered.}\label{Fig:Var}
\end{figure}

\begin{figure}[htb]
\centering
  \subfloat[]{\includegraphics[width=0.45\columnwidth]{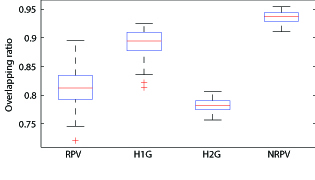}}~
  \subfloat[]{\includegraphics[width=0.45\columnwidth]{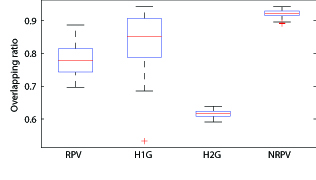}}
\caption{Comparison of different methods for shape extraction under non-rigid deformations, missing parts or overlapping shapes. (a) is the result for the non-rigidly deformed toy monkey missing feet and one ear; (b) is the result for the non-rigidly deformed horse overlapped with its rider.}\label{Fig:boxplot_cmp}
\end{figure}

\begin{figure*}
\centering
\subfloat{\includegraphics[height=0.4in]{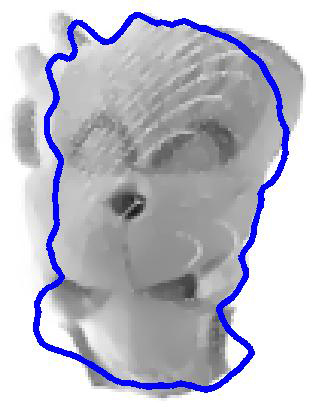}
\includegraphics[height=0.4in]{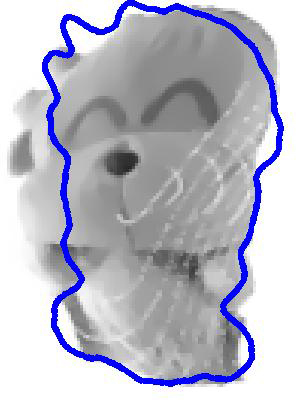}
\includegraphics[height=0.4in]{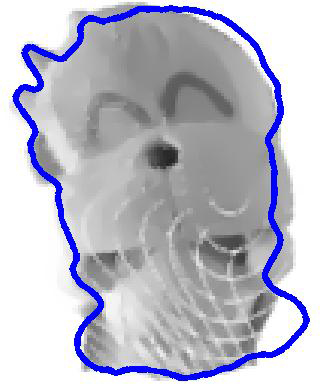}
\includegraphics[height=0.4in]{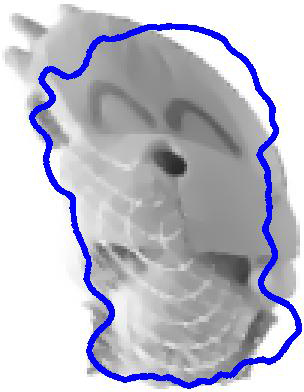}
\includegraphics[height=0.4in]{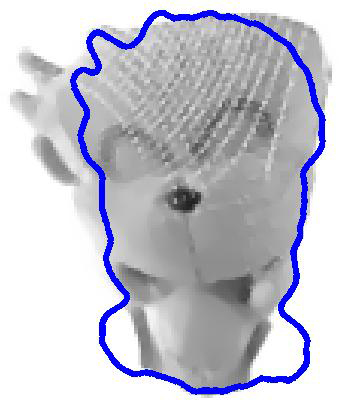}
\includegraphics[height=0.4in]{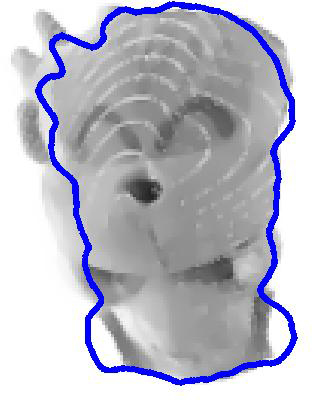}
\includegraphics[height=0.4in]{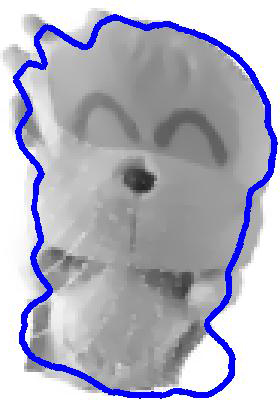}}~
\subfloat{\includegraphics[height=0.4in]{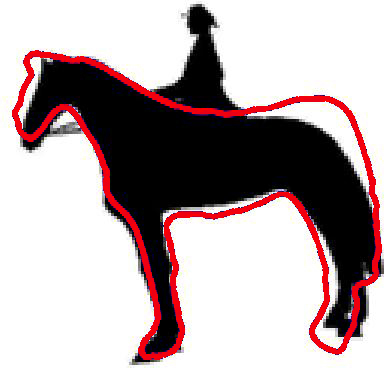}
\includegraphics[height=0.4in]{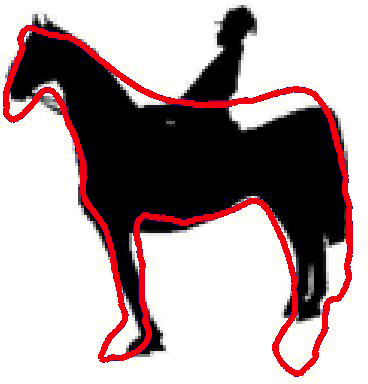}
\includegraphics[height=0.4in]{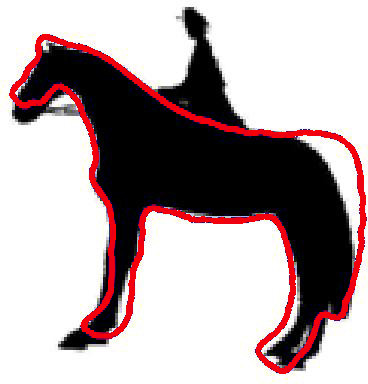}
\includegraphics[height=0.4in]{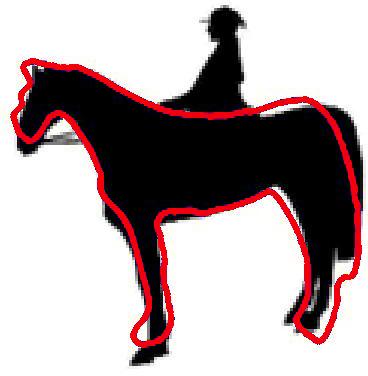}
\includegraphics[height=0.4in]{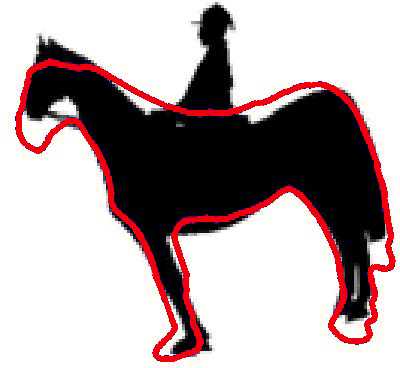}
\includegraphics[height=0.4in]{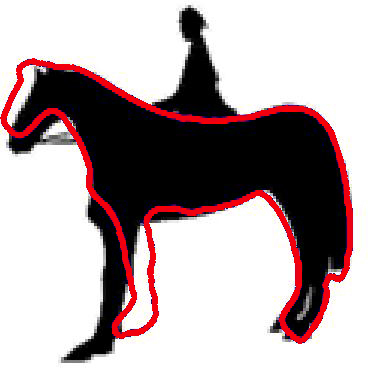}
\includegraphics[height=0.4in]{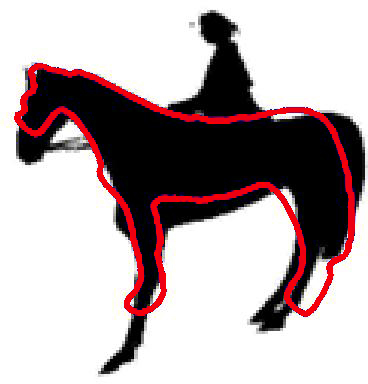}}\\
\subfloat{\includegraphics[height=0.4in]{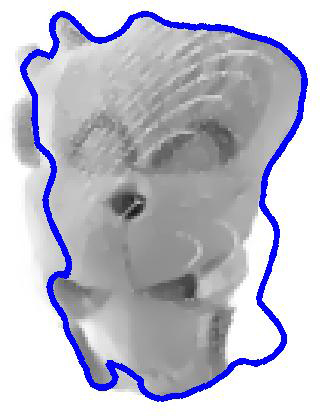}
\includegraphics[height=0.4in]{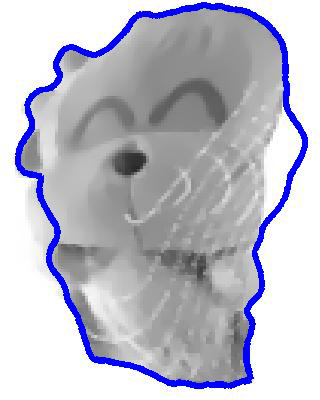}
\includegraphics[height=0.4in]{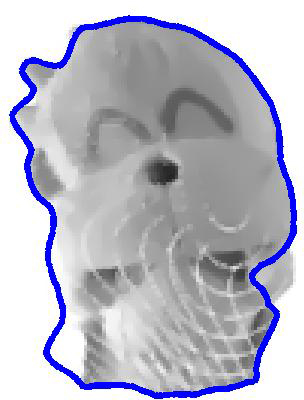}
\includegraphics[height=0.4in]{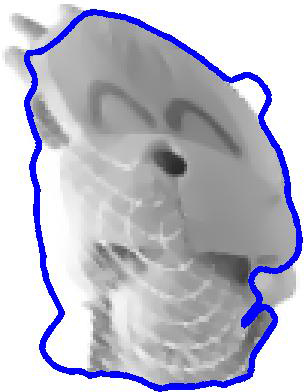}
\includegraphics[height=0.4in]{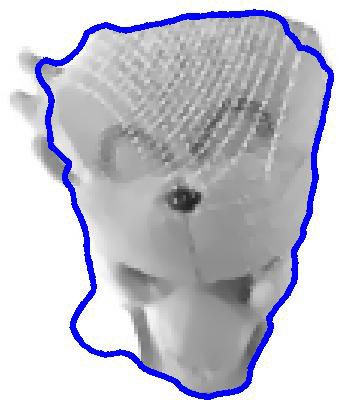}
\includegraphics[height=0.4in]{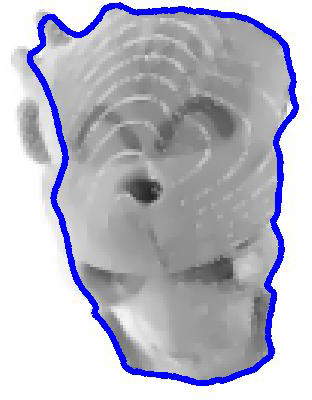}
\includegraphics[height=0.4in]{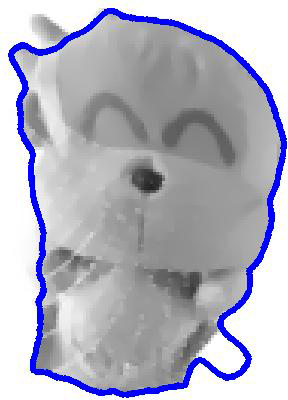}}~
\subfloat{\includegraphics[height=0.4in]{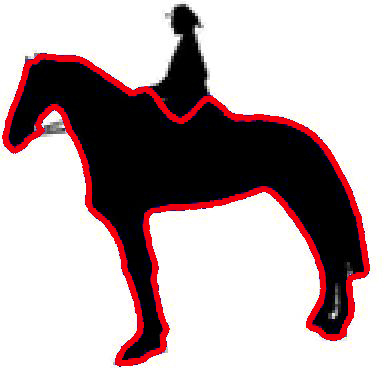}
\includegraphics[height=0.4in]{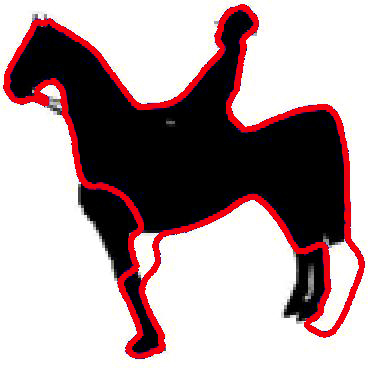}
\includegraphics[height=0.4in]{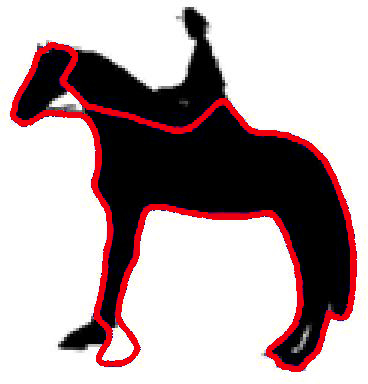}
\includegraphics[height=0.4in]{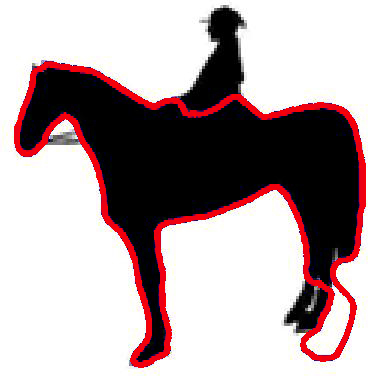}
\includegraphics[height=0.4in]{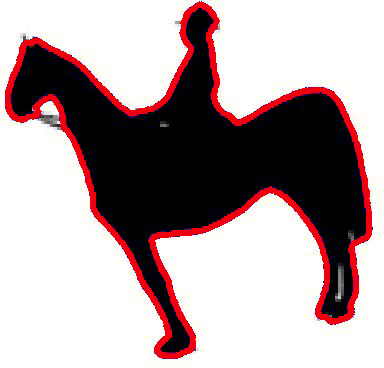}
\includegraphics[height=0.4in]{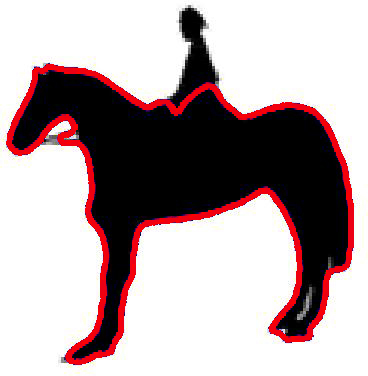}
\includegraphics[height=0.4in]{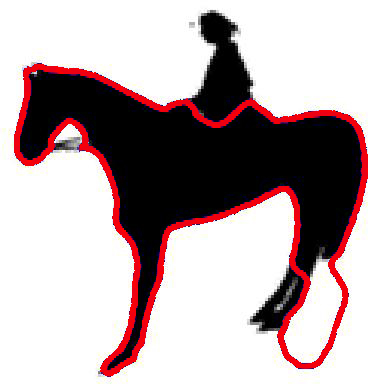}}\\
\subfloat{\includegraphics[height=0.4in]{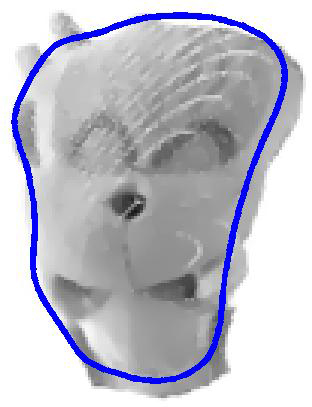}
\includegraphics[height=0.4in]{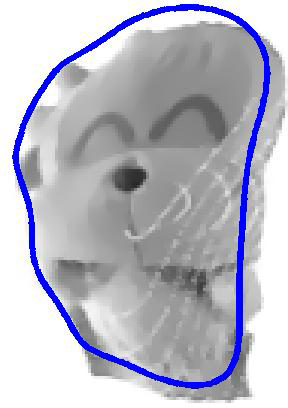}
\includegraphics[height=0.4in]{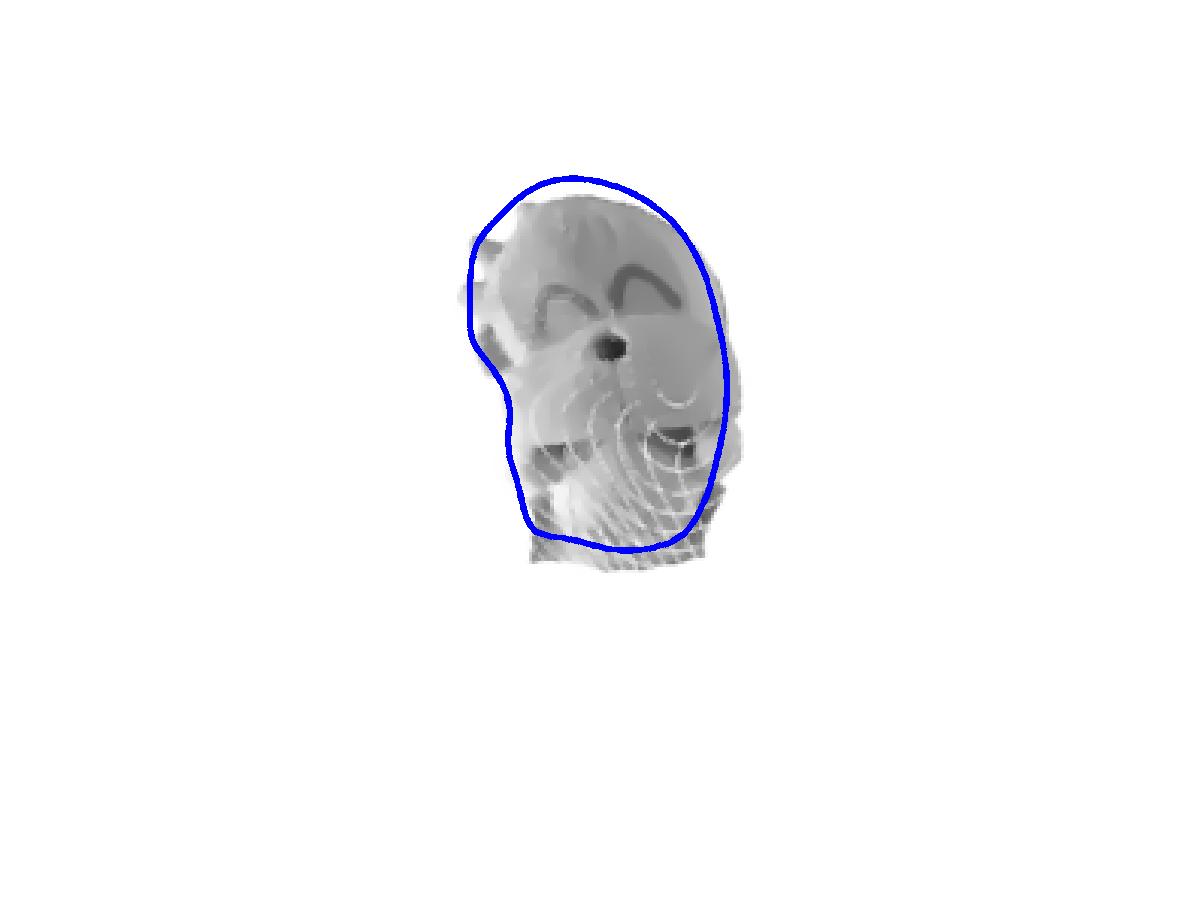}
\includegraphics[height=0.4in]{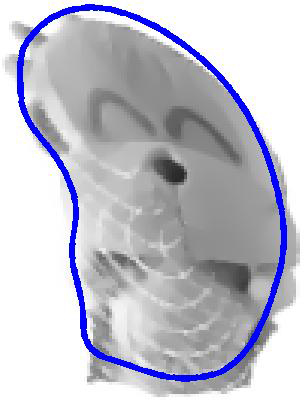}
\includegraphics[height=0.4in]{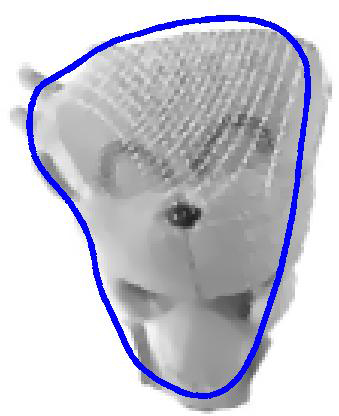}
\includegraphics[height=0.4in]{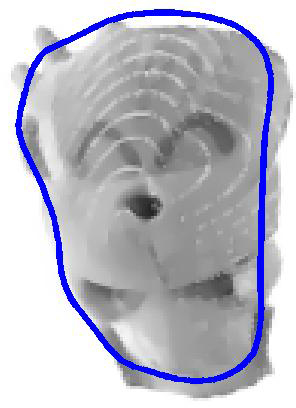}
\includegraphics[height=0.4in]{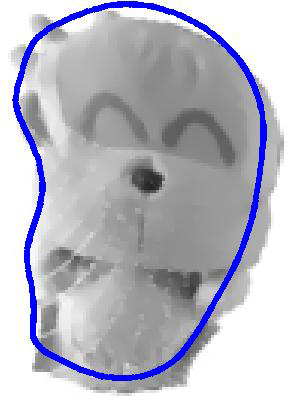}}~
\subfloat{\includegraphics[height=0.4in]{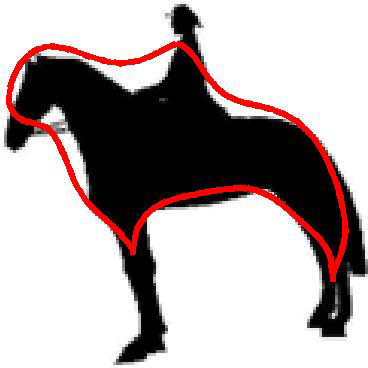}
\includegraphics[height=0.4in]{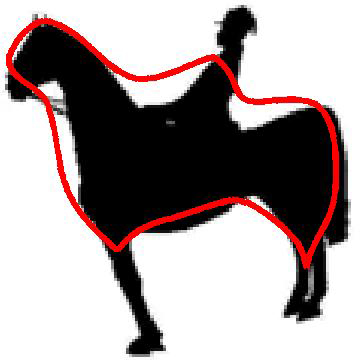}
\includegraphics[height=0.4in]{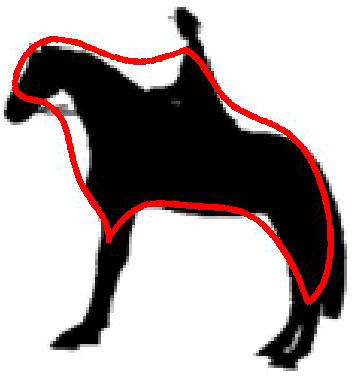}
\includegraphics[height=0.4in]{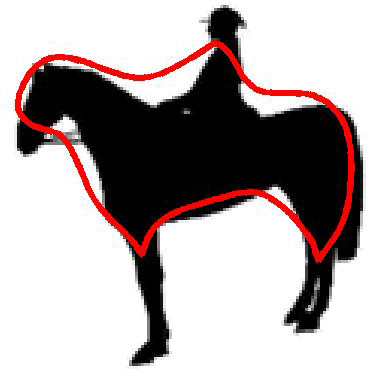}
\includegraphics[height=0.4in]{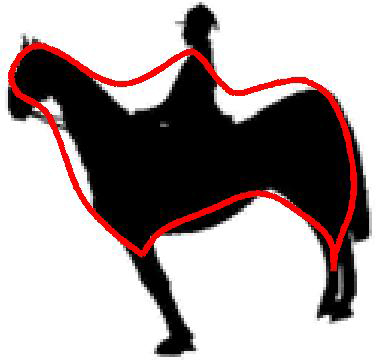}
\includegraphics[height=0.4in]{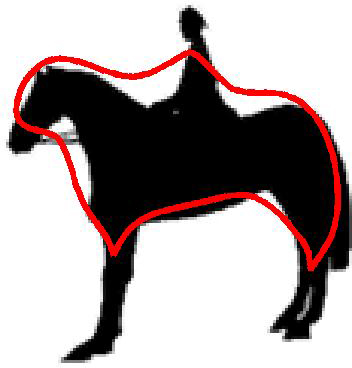}
\includegraphics[height=0.4in]{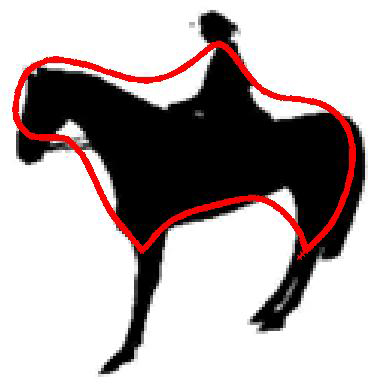}}\\
\subfloat{\includegraphics[height=0.38in]{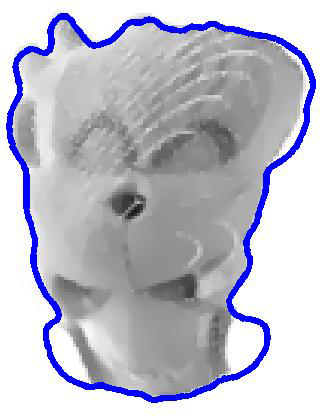}
\includegraphics[height=0.38in]{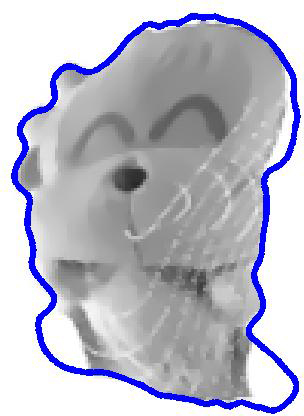}
\includegraphics[height=0.38in]{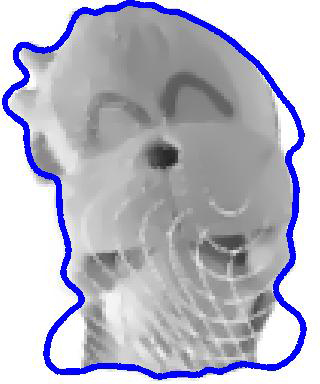}
\includegraphics[height=0.38in]{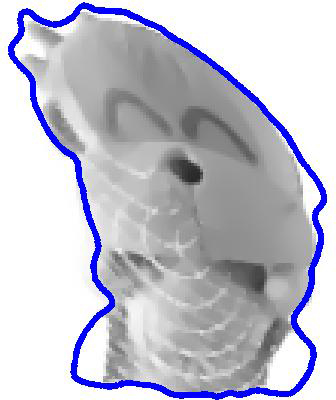}
\includegraphics[height=0.38in]{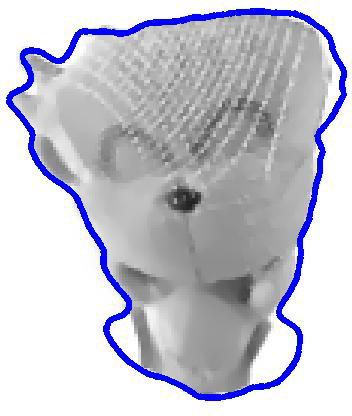}
\includegraphics[height=0.38in]{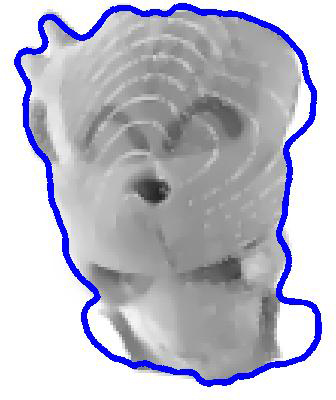}
\includegraphics[height=0.38in]{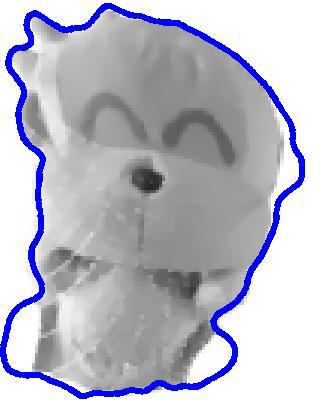}}~
\subfloat{\includegraphics[height=0.4in]{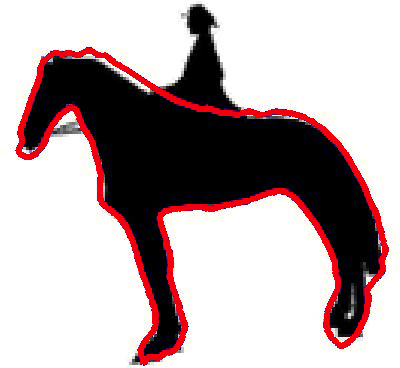}
\includegraphics[height=0.4in]{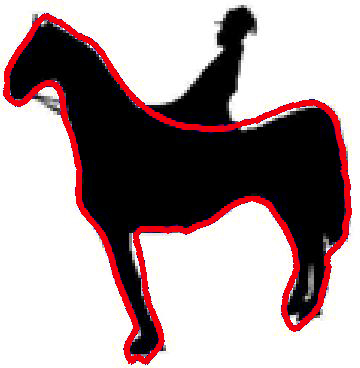}
\includegraphics[height=0.4in]{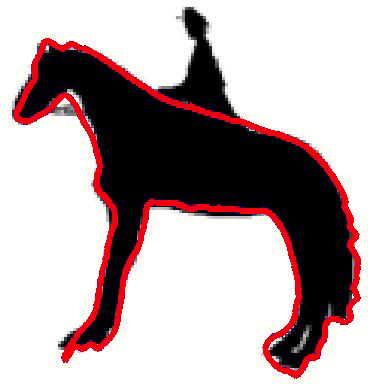}
\includegraphics[height=0.4in]{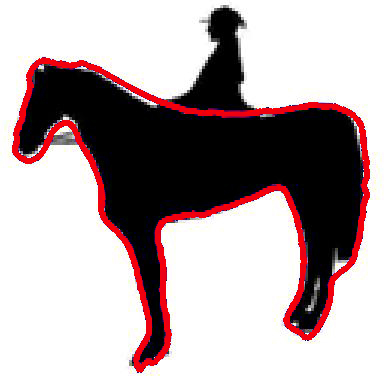}
\includegraphics[height=0.4in]{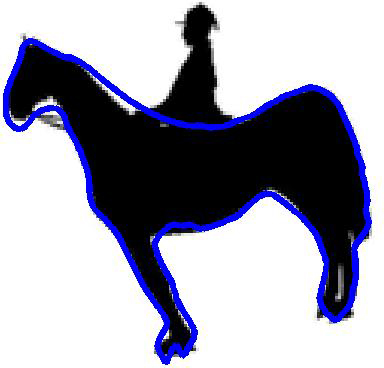}
\includegraphics[height=0.4in]{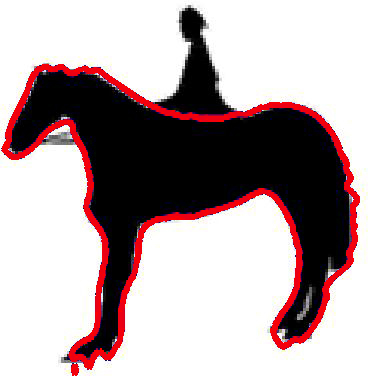}
\includegraphics[height=0.4in]{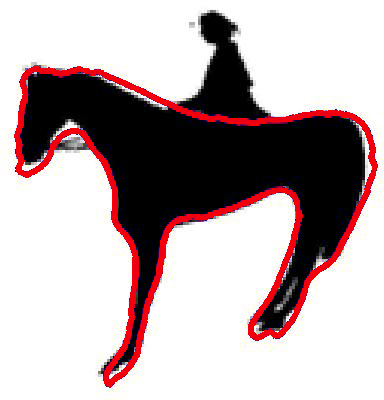}}
\caption{Examples of the results for shape extraction under nonrigid deformations, missing parts or overlapping shapes. The first row shows the results by rigid PVSE. The second and third rows show the results by $H_1$ and $H_2$ functional gradient \cite{Charpiat2007GG}. The last row shows the results of our non-rigid PVSE.}\label{Fig:CMP_horse&monkey}
\end{figure*}

\subsection{Qualitative analysis of shape evolution process for shape extraction}
In this subsection, we take a closer look at the optimization process, namely the shape evolution. Some examples of the shape evolution process for  the\ref{Fig:PVSE_NR_OC}, \ref{Fig:PVSE_NR_Exp} and \ref{Fig:PVSE_NR_Exp2}, which are sample results from this experiment. From the figures, we can observe that the shape evolution process is plausible, and it may also be viewed as the interpolation of the shape deformation. 
\begin{figure*}
\centering
\subfloat[]{\includegraphics[width=0.175\textwidth]{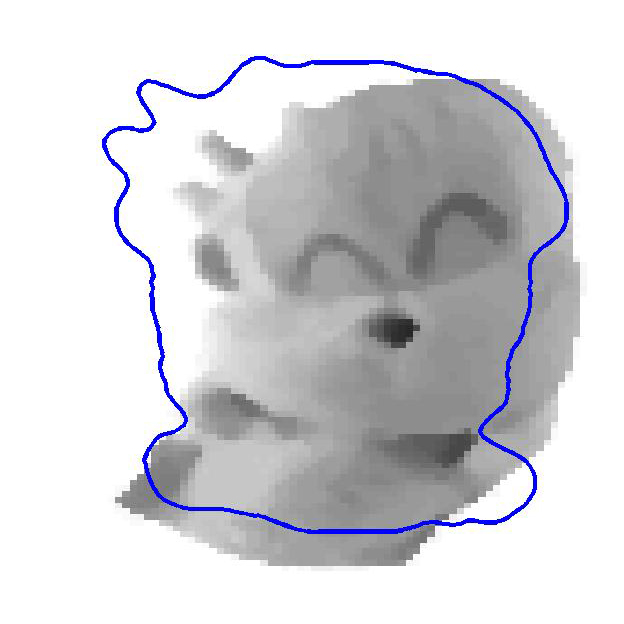}}
\subfloat[]{\includegraphics[width=0.175\textwidth]{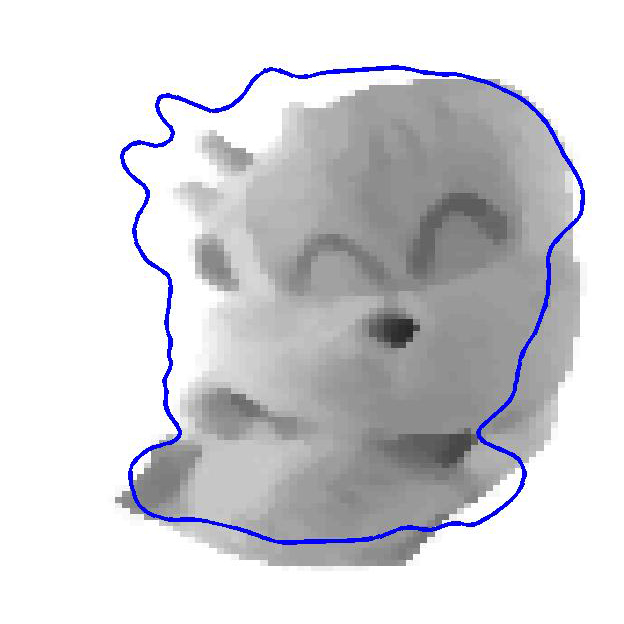}}
\subfloat[]{\includegraphics[width=0.175\textwidth]{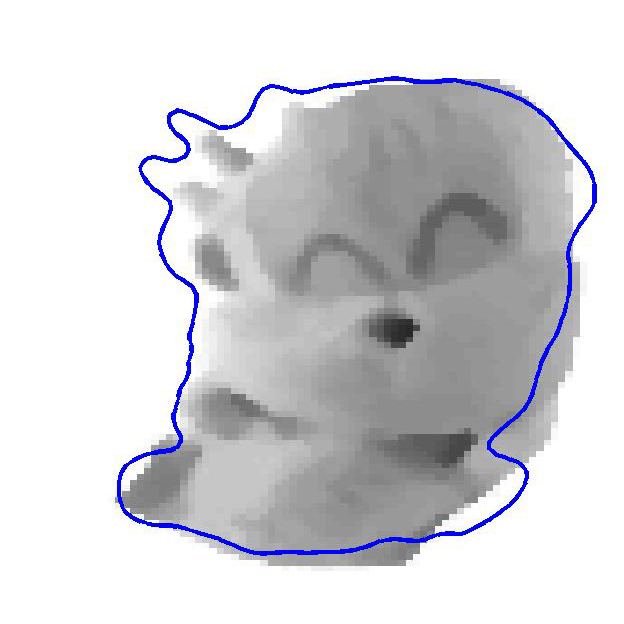}}
\subfloat[]{\includegraphics[width=0.175\textwidth]{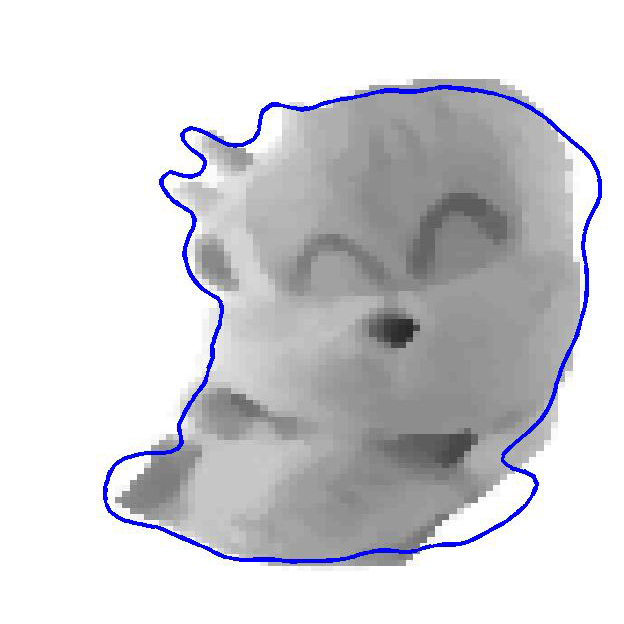}}
\subfloat[]{\includegraphics[width=0.175\textwidth]{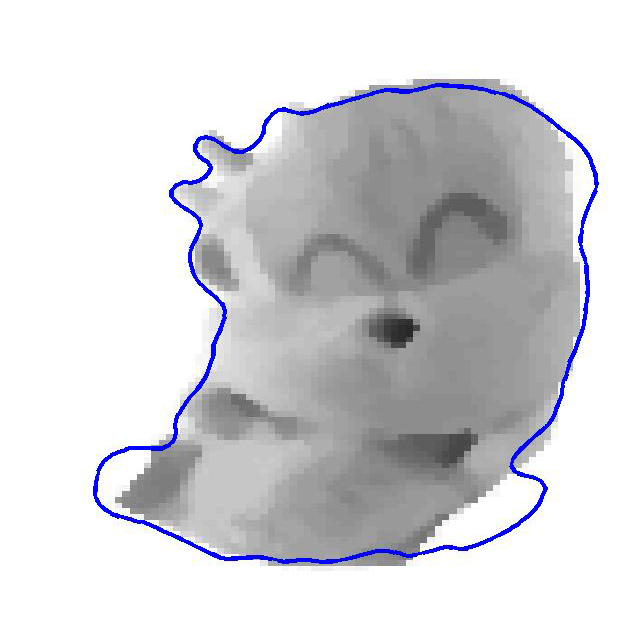}}
\caption{PVSE with prior vibrations for extracting the toy monkey shape. (a) to (e) show the curve evolution of 90 iterations until convergence at (e).}\label{Fig:PVSE_NR_OC}
\end{figure*}
\begin{figure*}
\centering
\subfloat[]{\includegraphics[width=0.175\textwidth]{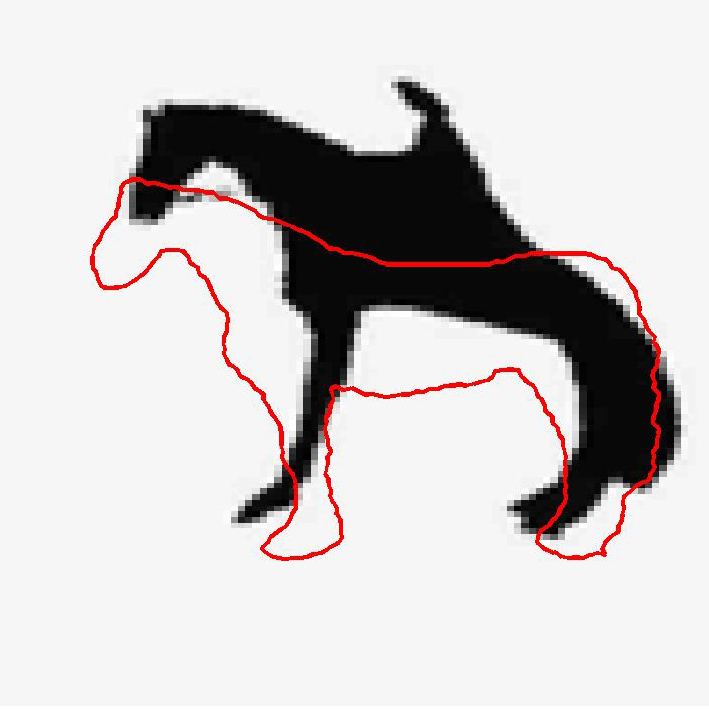}}
\subfloat[]{\includegraphics[width=0.175\textwidth]{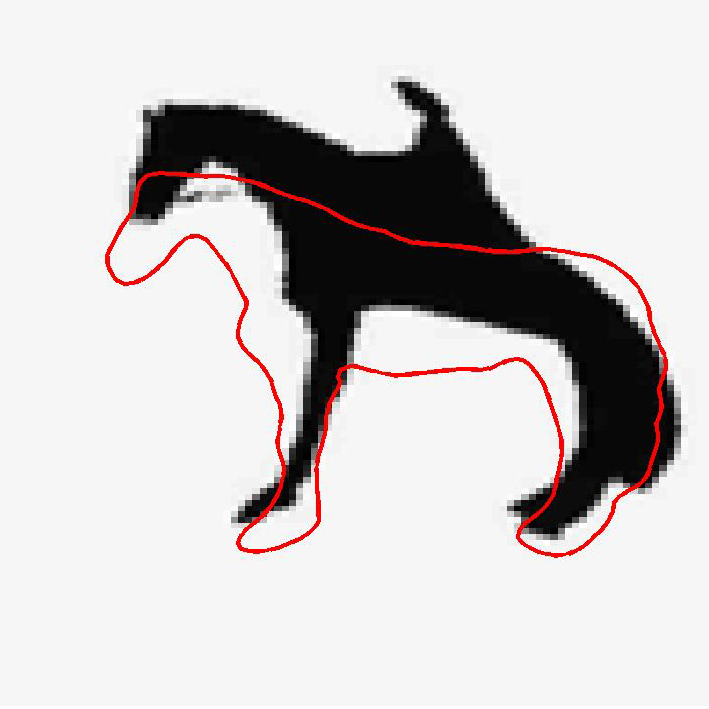}}
\subfloat[]{\includegraphics[width=0.175\textwidth]{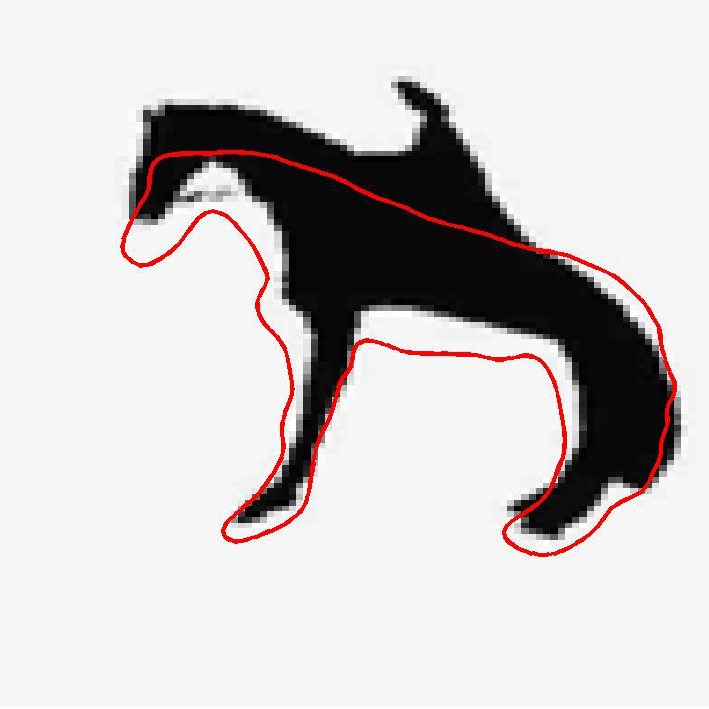}}
\subfloat[]{\includegraphics[width=0.175\textwidth]{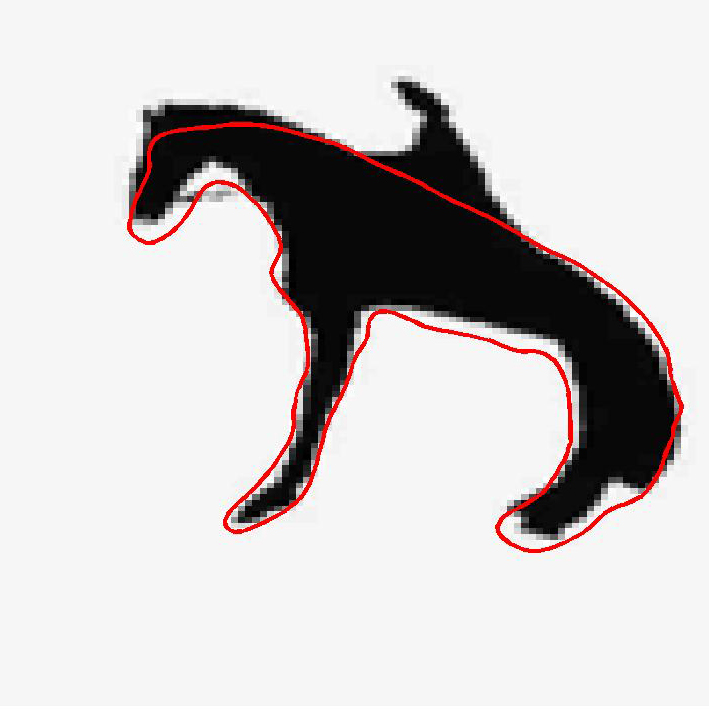}}
\subfloat[]{\includegraphics[width=0.175\textwidth]{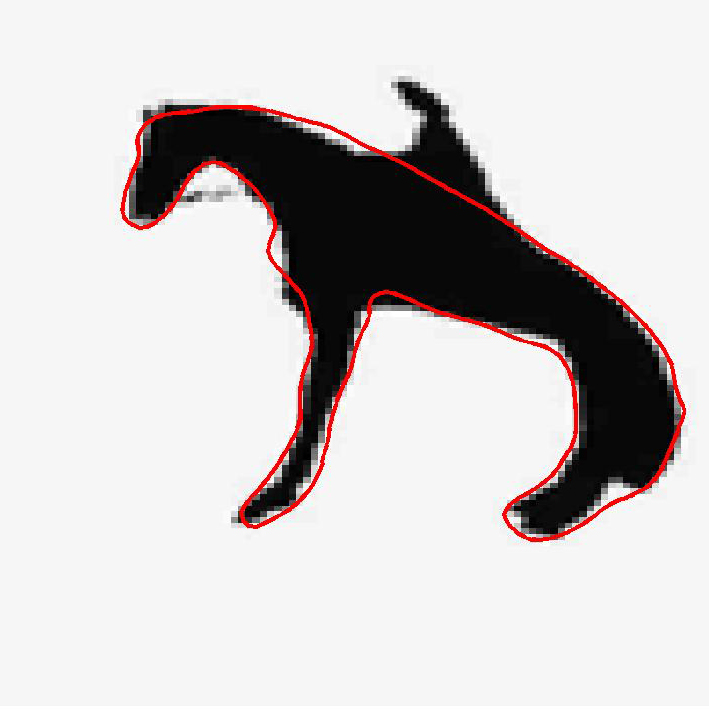}}
\caption{PVSE with prior vibrations for extracting a horse shape. (a) to (e) show the curve evolution of 150 iterations until convergence at (e).}\label{Fig:PVSE_NR_Exp}
\end{figure*}
\begin{figure*}
\centering
\subfloat[]{\includegraphics[width=0.175\textwidth]{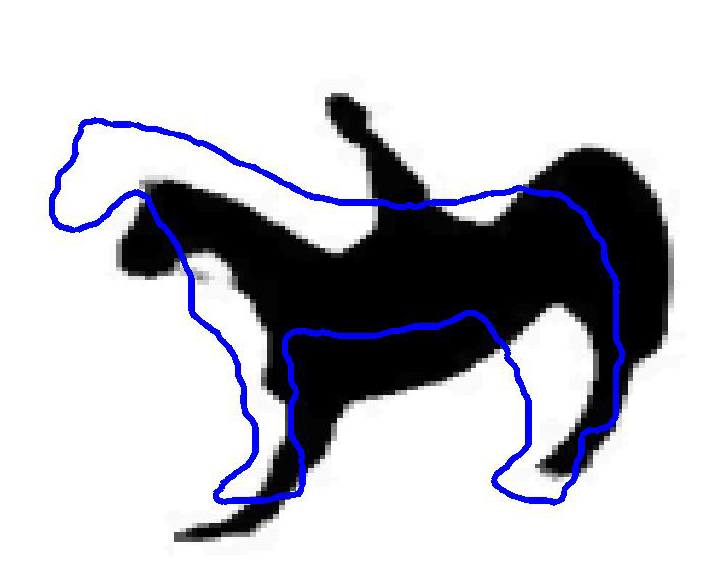}}
\subfloat[]{\includegraphics[width=0.175\textwidth]{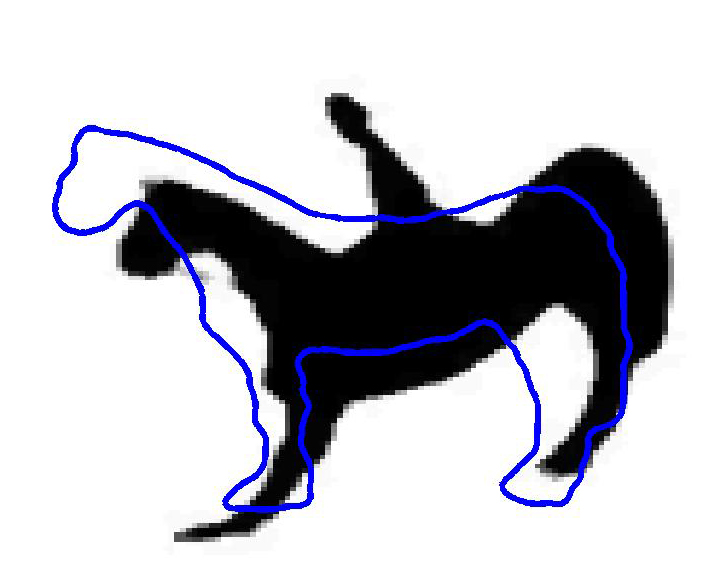}}
\subfloat[]{\includegraphics[width=0.175\textwidth]{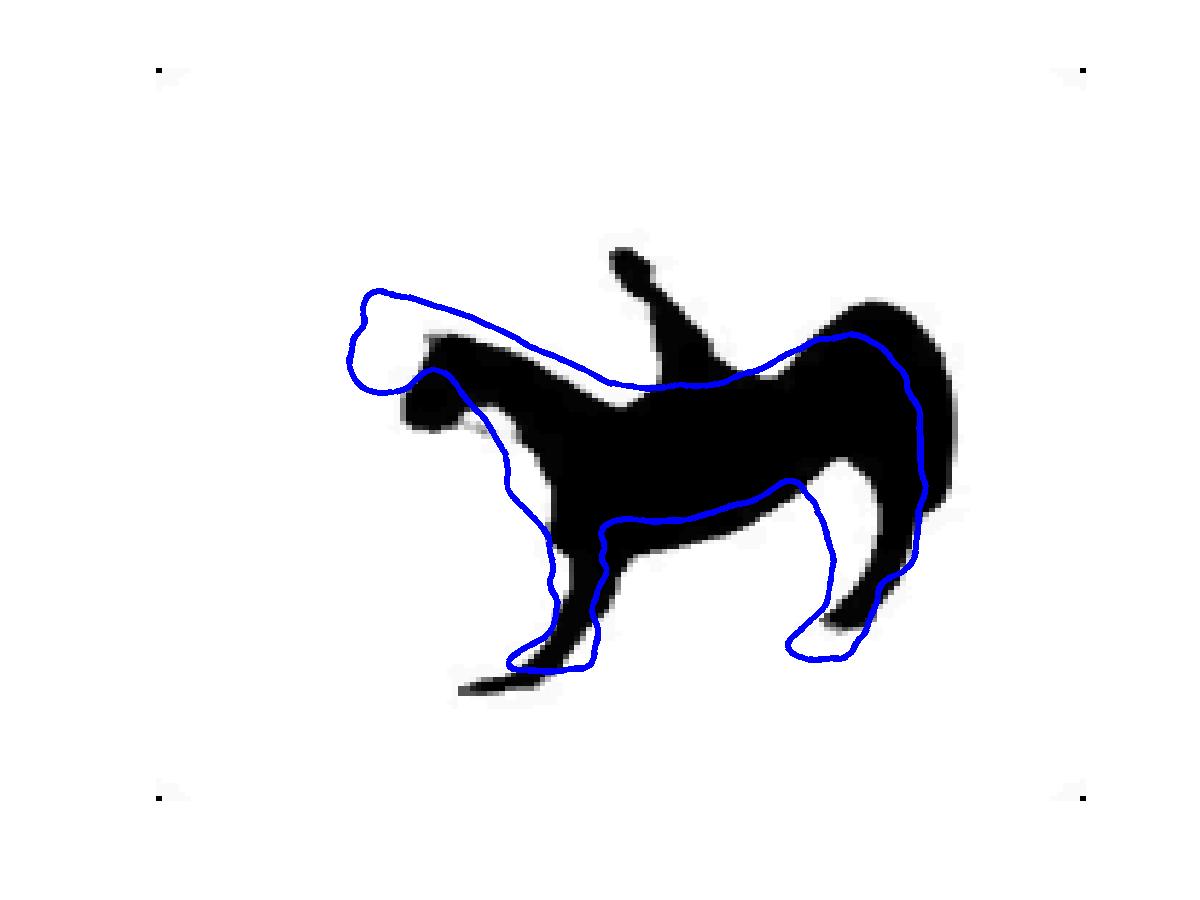}}
\subfloat[]{\includegraphics[width=0.175\textwidth]{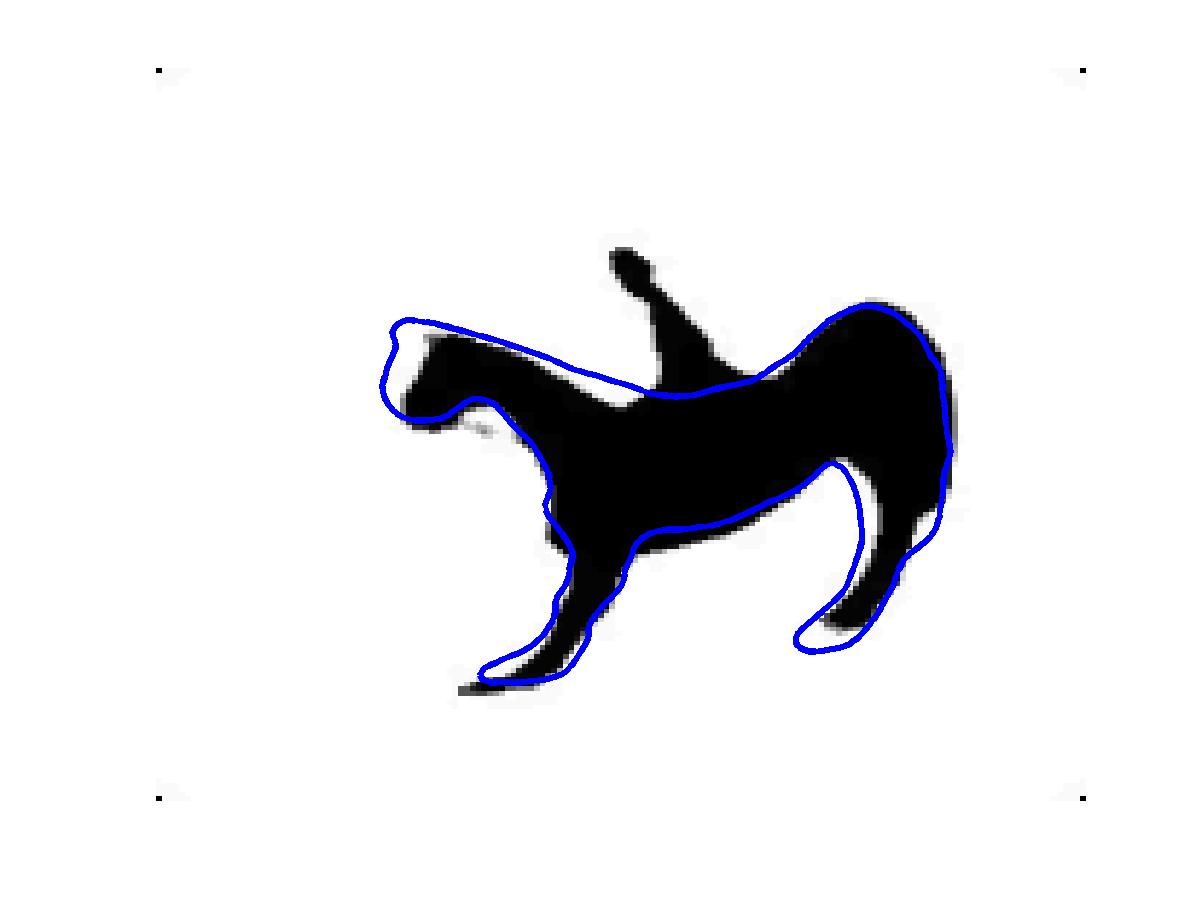}}
\subfloat[]{\includegraphics[width=0.175\textwidth]{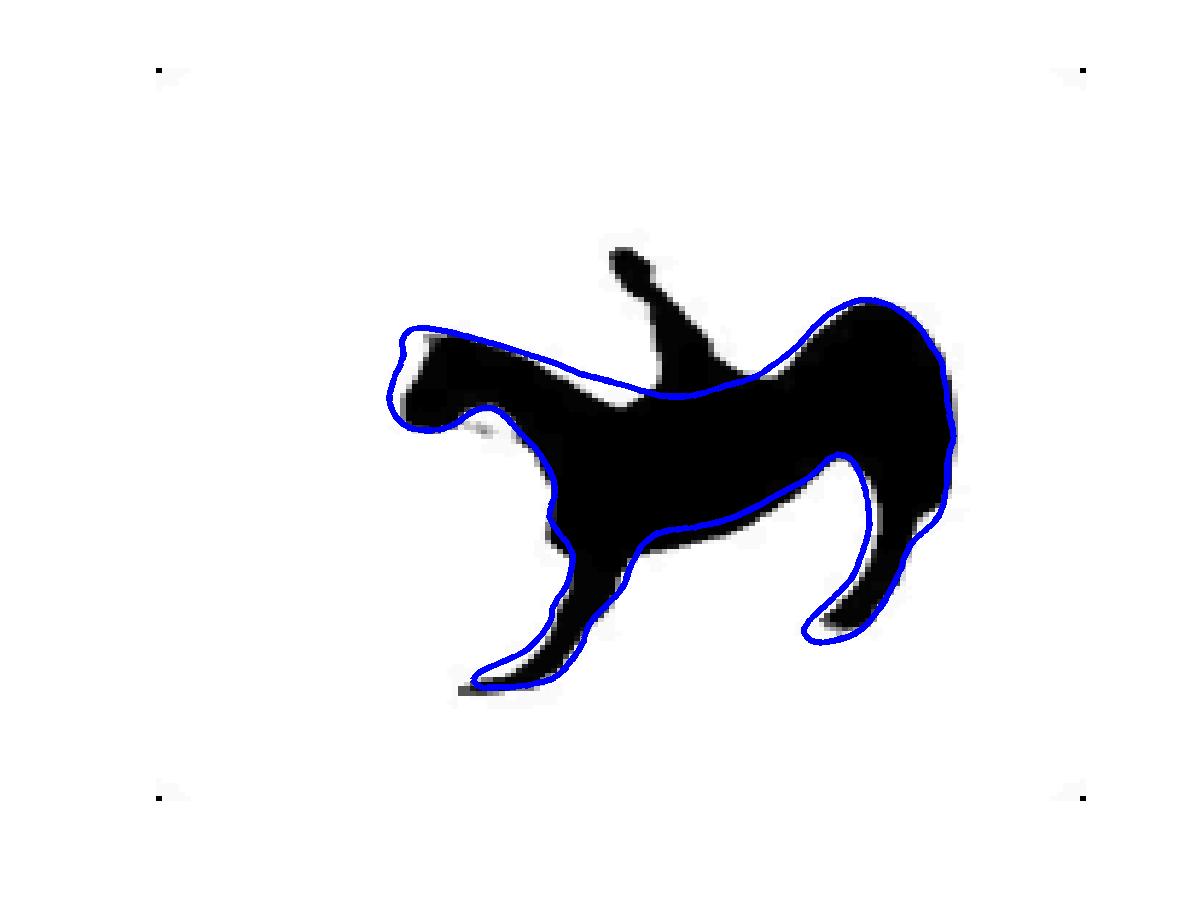}}
\caption{PVSE with prior vibrations for extracting another horse shape. (a) to (e) show the curve evolution of 500 iterations until convergence at (e).}\label{Fig:PVSE_NR_Exp2}
\end{figure*}

\begin{figure*}[thb]
\centering
\subfloat[]{\includegraphics[width=0.2\textwidth]{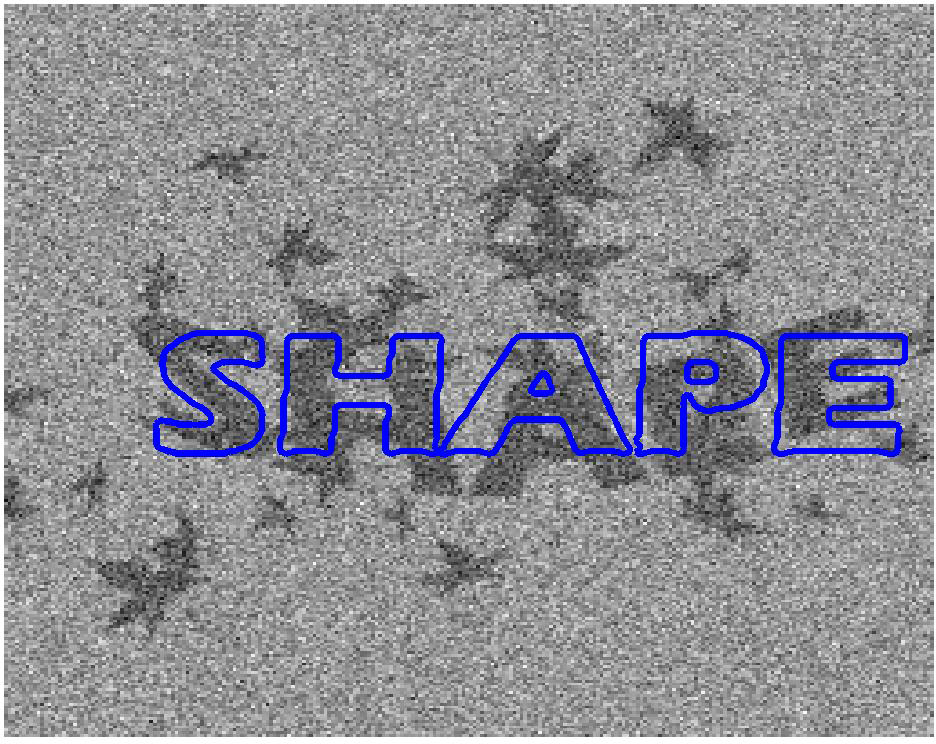}}
\subfloat[]{\includegraphics[width=0.2\textwidth]{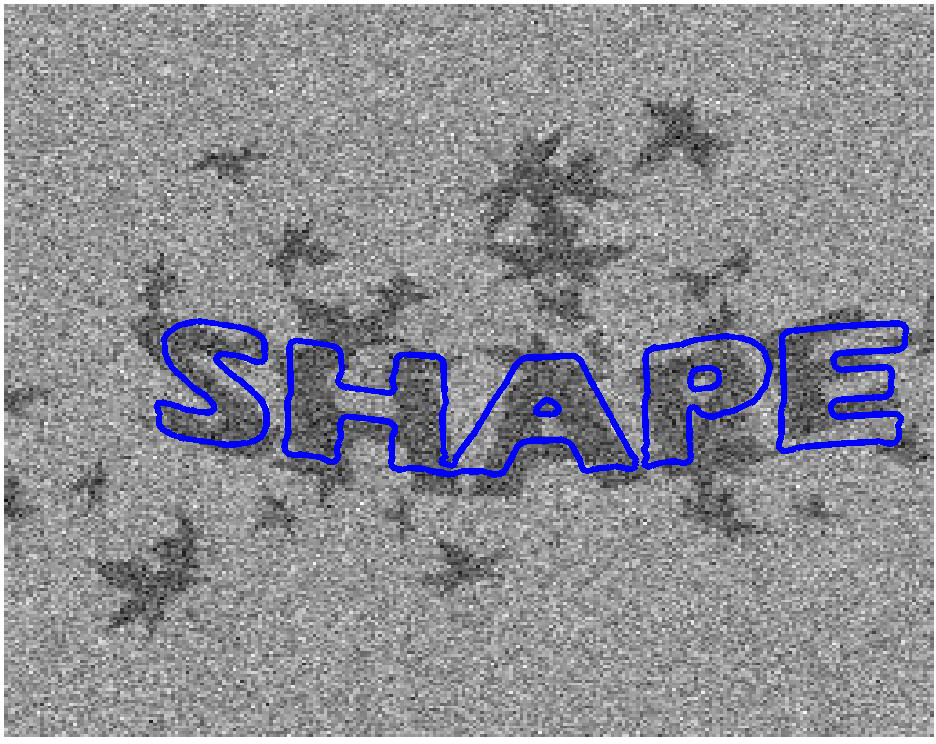}}
\subfloat[]{\includegraphics[width=0.2\textwidth]{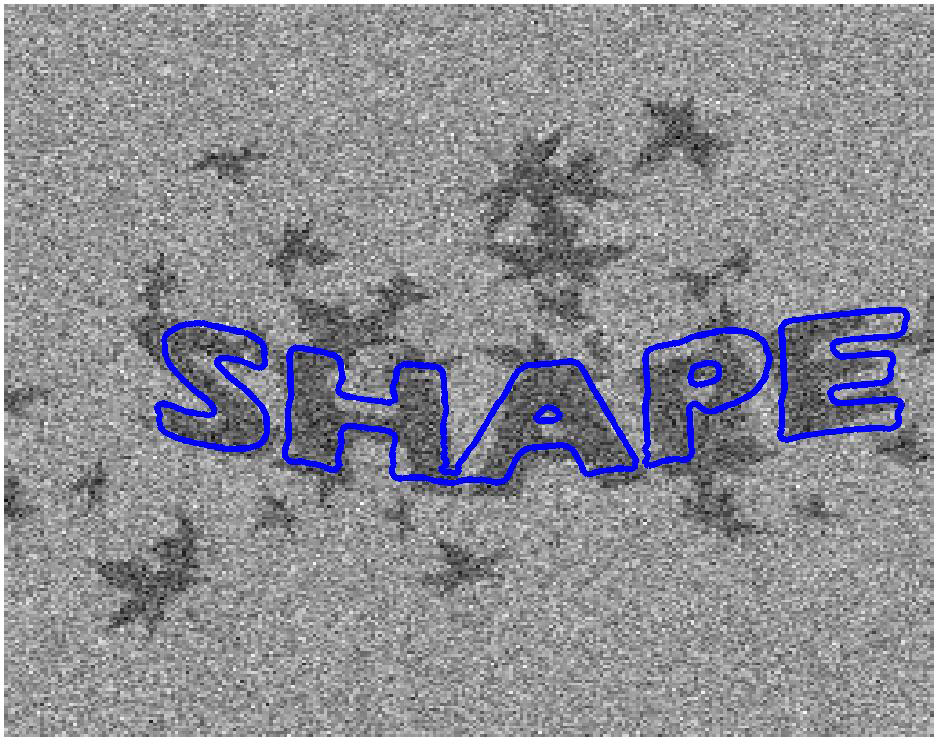}}
\subfloat[]{\includegraphics[width=0.2\textwidth]{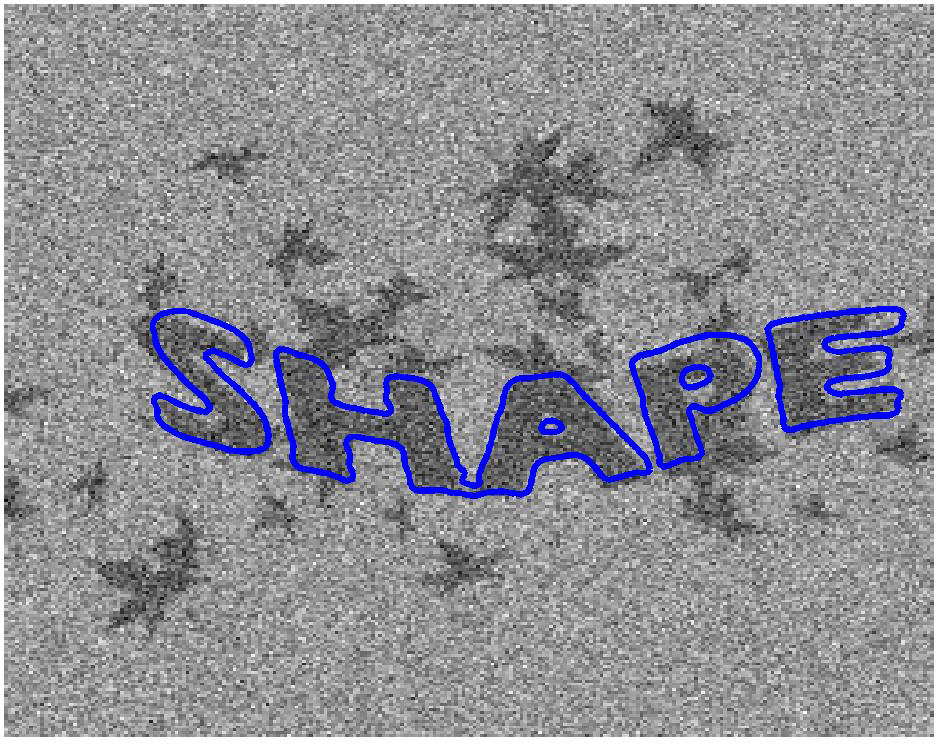}}
\subfloat[]{\includegraphics[width=0.2\textwidth]{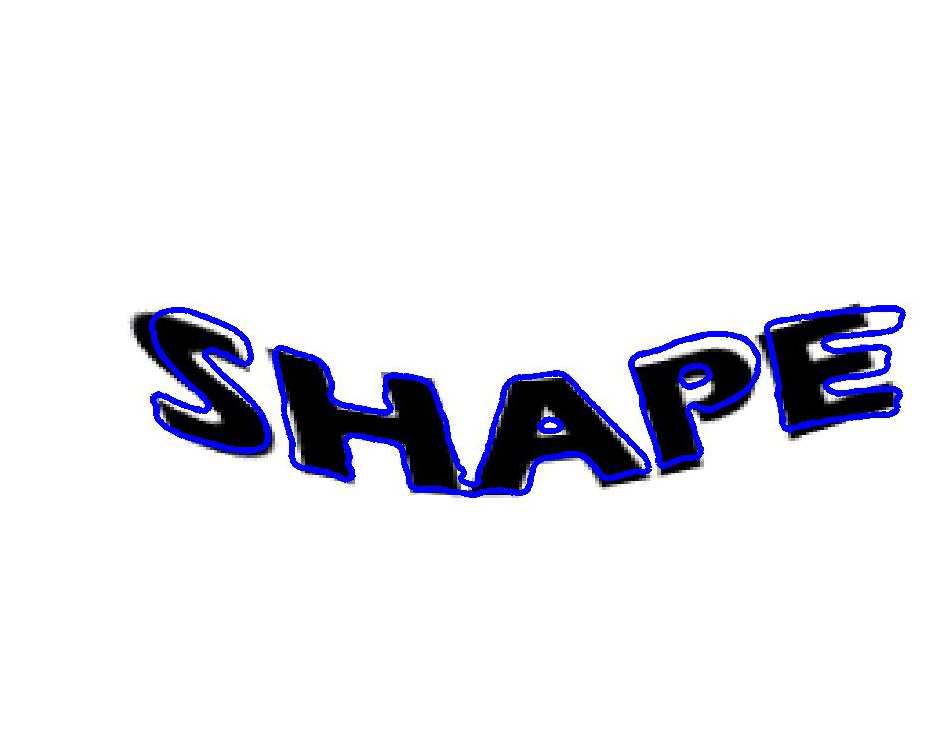}}\\
\caption{Segmentation and recovery of the \textbf{SHAPE} from heavy noise and occlusion by PVSE with the $6^{th}$-order prior vibration. The top-left is the input image. (d) is the convergent curve laying over the ground truth. (a) to (c) visualize the curve evolution of 150 iterations until convergence.}\label{Fig:SHAPE}
\end{figure*}

\begin{figure*}[thb]
\centering
\subfloat[]{\includegraphics[width=0.175\textwidth]{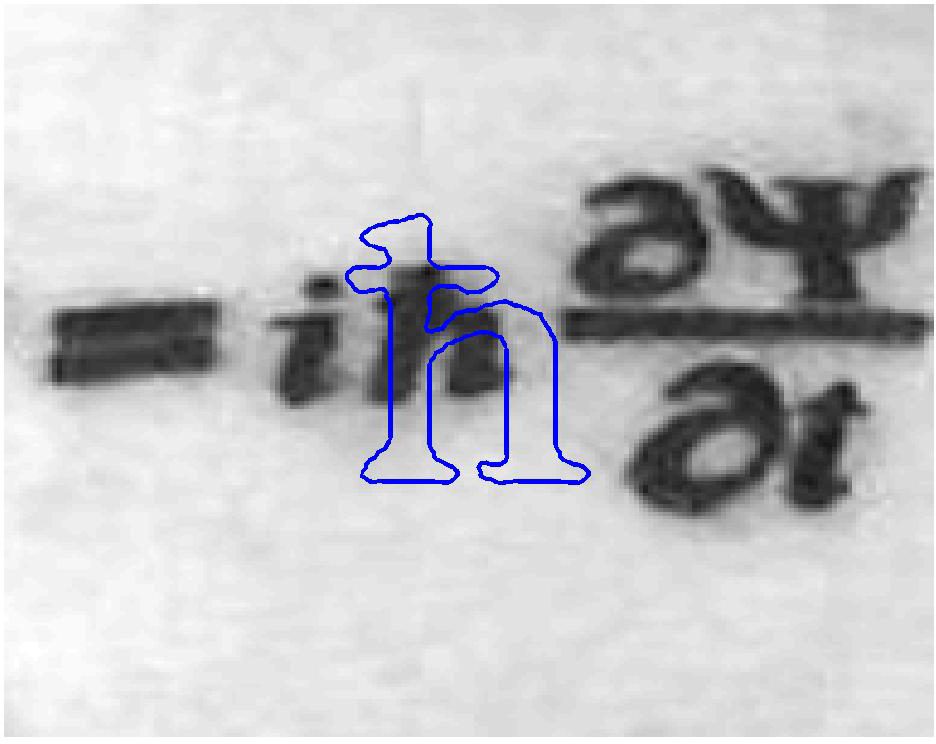}}
\subfloat[]{\includegraphics[width=0.175\textwidth]{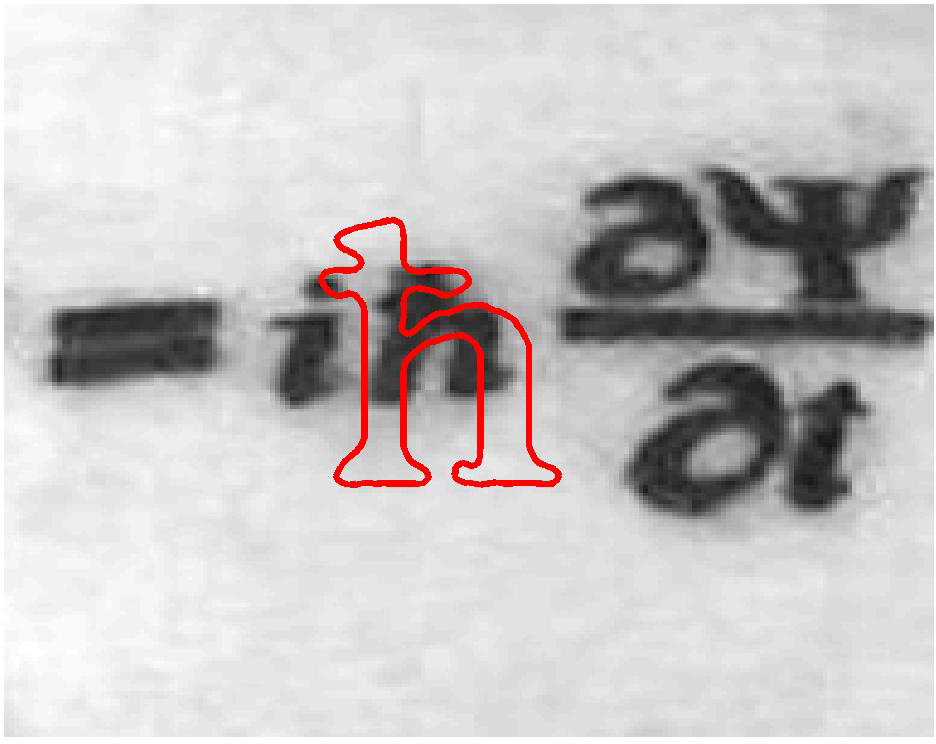}}
\subfloat[]{\includegraphics[width=0.175\textwidth]{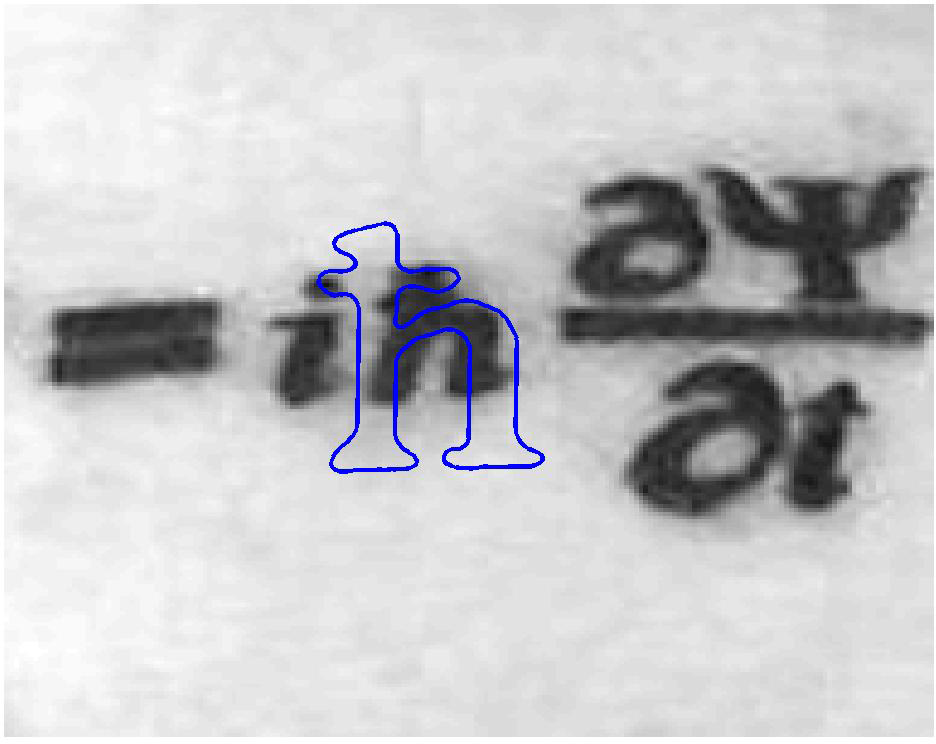}}
\subfloat[]{\includegraphics[width=0.175\textwidth]{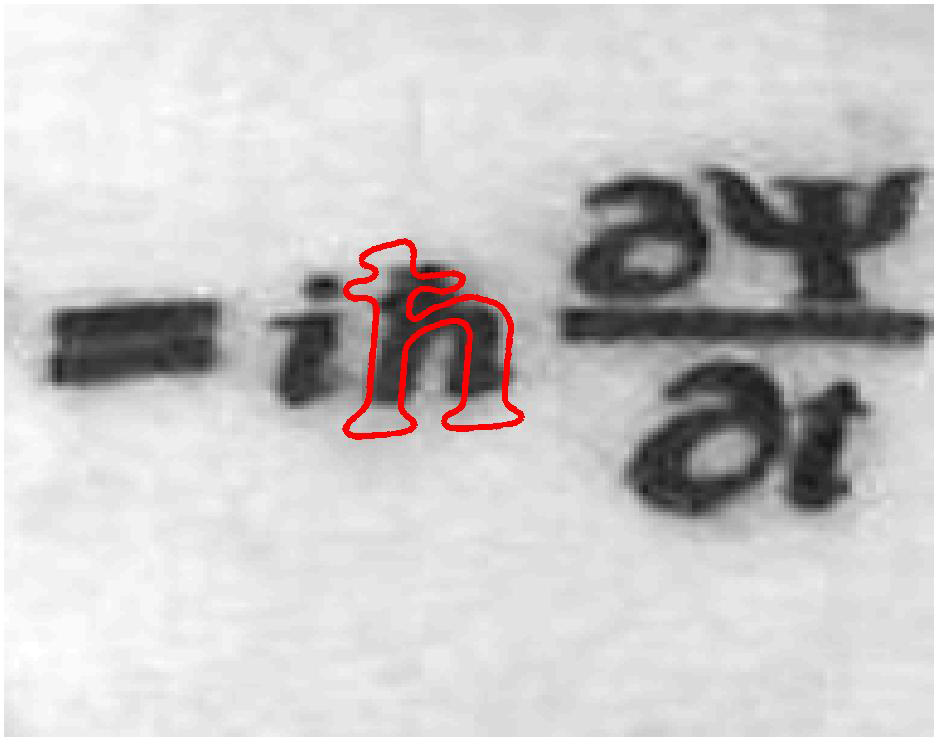}}
\subfloat[]{\includegraphics[width=0.175\textwidth]{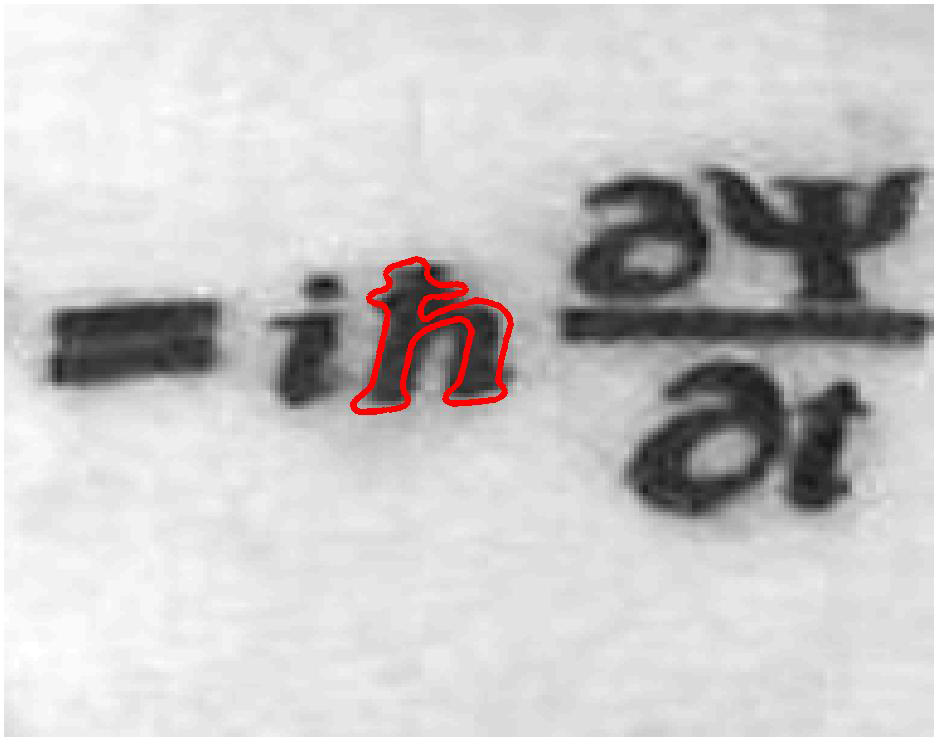}}
\caption{Shape registration and recovery with affine PVSE to find the ``$\hbar$'' in the tattoo on the skin. (a) to (e) show the curve evolution of 200 iterations until convergence at (e).}\label{Fig:Tattoo}
\end{figure*}

We also apply the PVSE to some more difficult situations. First, we apply the PVSE to shape extraction under heavy noise. We visualize the shape registration and recovery for the heavily corrupted noisy non-rigidly deformed \textbf{SHAPE} in Fig. \ref{Fig:SHAPE}. It is possible to compare the result with the ground truth in this experiment. The Jaccard coefficient for the \textbf{SHAPE} in Fig. \ref{Fig:SHAPE} is $0.73$. Despite the corner of \textbf{E}, the entire \textbf{SHAPE} has been registered and recovered well. 

We also present the results of object shape extraction from real images, such as a picture of tattoo in Fig. \ref{Fig:Tattoo} and a picture of a clownfish in Fig. \ref{Fig:clownfish}. The $\hbar$ in the tattoo has been registered and recovered by using a relatively bad initialization. Similarity PVSE has been used for this case. For the clownfish, the PVSE is applied to the red channel of this picture. There are no quantitative results for the tattoo and clownfish, as the ground-truth results are not available. For our method, the affine PVSE is used followed by a $3^{rd}$ order Prior Vibration Shape Evolution. The shape evolution process in Fig. \ref{Fig:clownfish}. The first row shows the affine PVSE, the second row shows the non-rigid PVSE during which we can observe the local shape deformation at the tail. Our method is able to register the shape satisfactorily while preserving all the local shape characteristics. If we apply the Chan-Vese model to the clownfish picture without using shape prior, we will obtain the segmentation result shown in Fig. \ref{Fig:clownfish_CV}. We also applied the shape evolution with generalized $H^1$ and $H^2$ gradients in \cite{Charpiat2007GG} for minimizing Chan-Vese active contour energy. The results are shown in Fig. \ref{Fig:clownfish_CV}. We can observe that the main shape characteristics have been corrupted during the shape evolution with generalized $H^1$ and $H^2$ gradients. Thus, these results are not satisfactory from the point of view of shape recovery. 

\begin{figure*}[!h]
\centering
\subfloat{\includegraphics[width=0.2\textwidth]{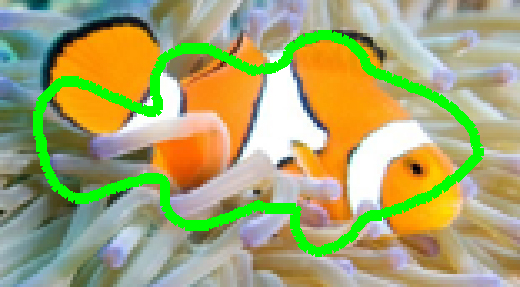}}
\subfloat{\includegraphics[width=0.2\textwidth]{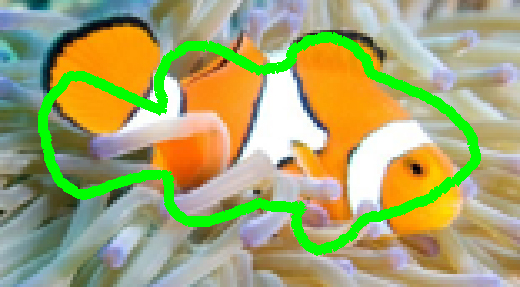}}
\subfloat{\includegraphics[width=0.2\textwidth]{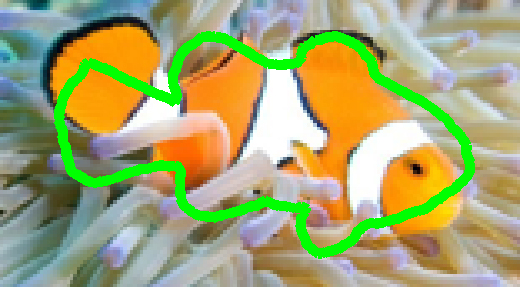}}
\subfloat{\includegraphics[width=0.2\textwidth]{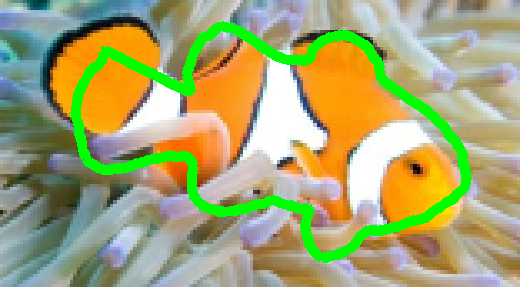}}
\subfloat{\includegraphics[width=0.2\textwidth]{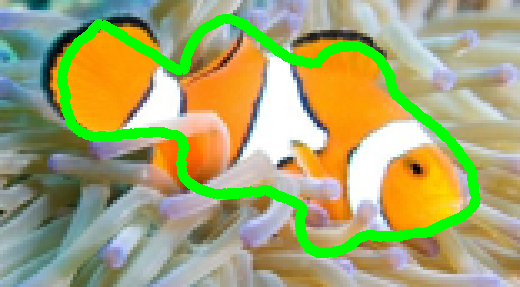}}\\
\subfloat{\includegraphics[width=0.2\textwidth]{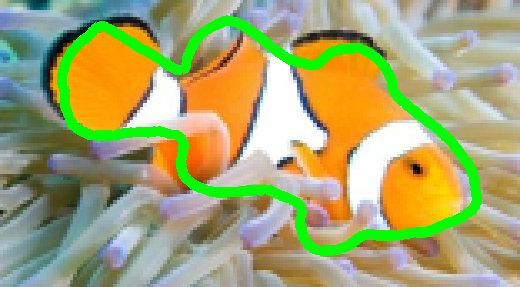}}
\subfloat{\includegraphics[width=0.2\textwidth]{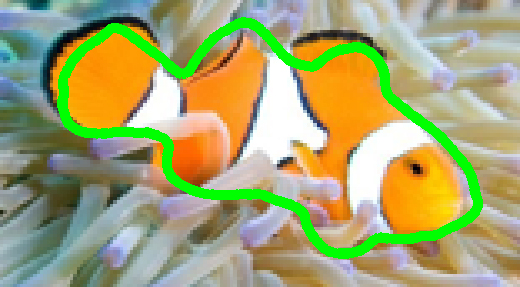}}
\subfloat{\includegraphics[width=0.2\textwidth]{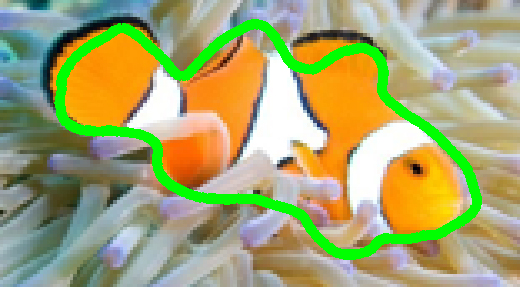}}
\subfloat{\includegraphics[width=0.2\textwidth]{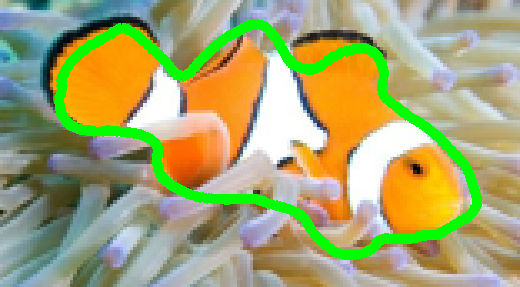}}
\subfloat{\includegraphics[width=0.2\textwidth]{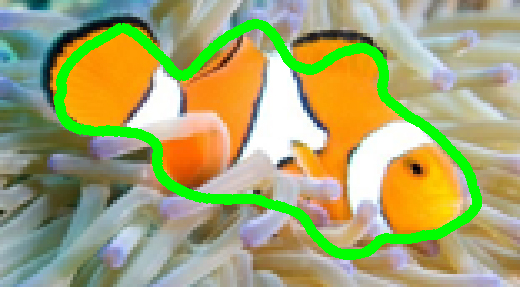}}\\
\caption{Shape registration and recovery with PVSE to find the clownfish in the coral. The first row shows the affine shape evolution until convergence. The second row shows the prior vibration shape evolution until convergence, during which we can observe the local shape deformation at the tail.}\label{Fig:clownfish}
\end{figure*}
\begin{figure*}[!h]
\centering
\subfloat{\includegraphics[width=0.2\textwidth]{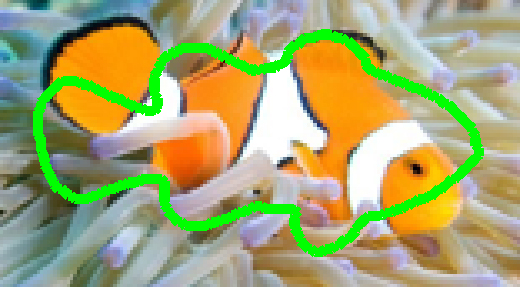}}
\subfloat{\includegraphics[width=0.2\textwidth]{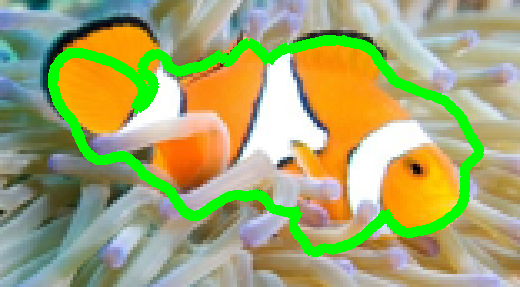}}
\subfloat{\includegraphics[width=0.2\textwidth]{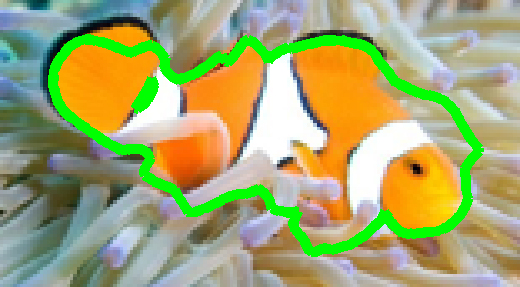}}
\subfloat{\includegraphics[width=0.2\textwidth]{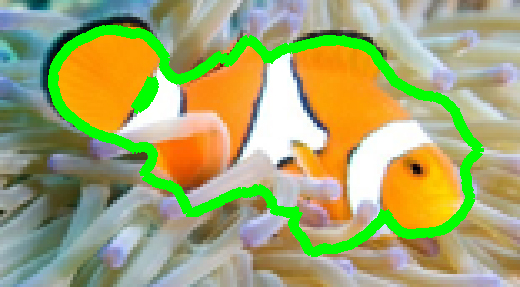}}
\subfloat{\includegraphics[width=0.2\textwidth]{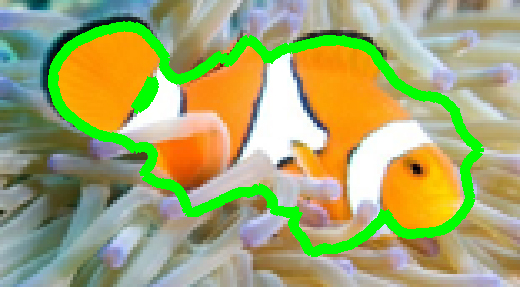}}\\
\subfloat{\includegraphics[width=0.2\textwidth]{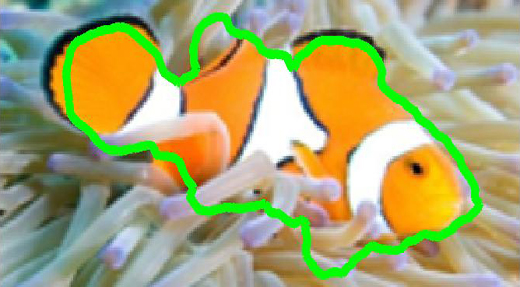}}
\subfloat{\includegraphics[width=0.2\textwidth]{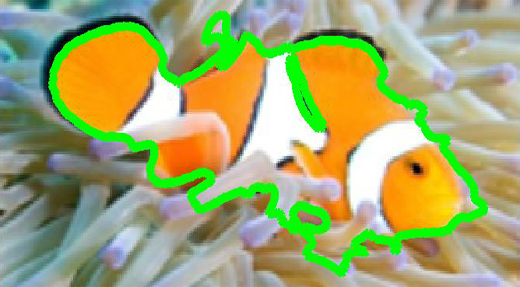}}
\subfloat{\includegraphics[width=0.2\textwidth]{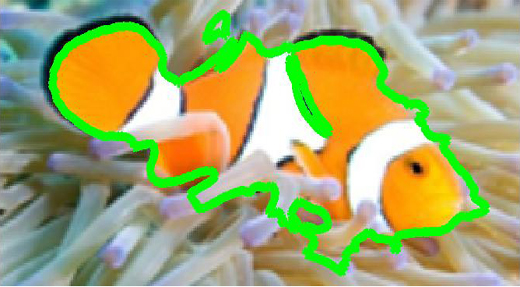}}
\subfloat{\includegraphics[width=0.2\textwidth]{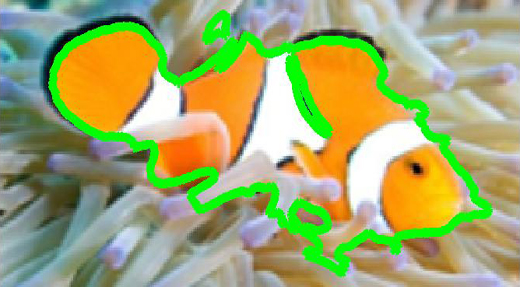}}
\subfloat{\includegraphics[width=0.2\textwidth]{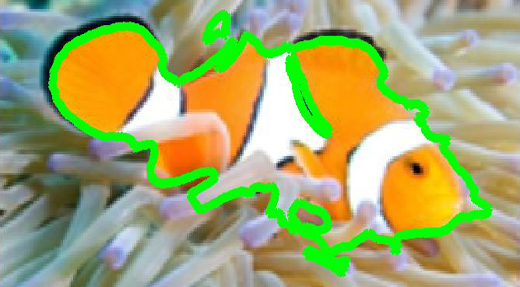}}\\
\subfloat{\includegraphics[width=0.2\textwidth]{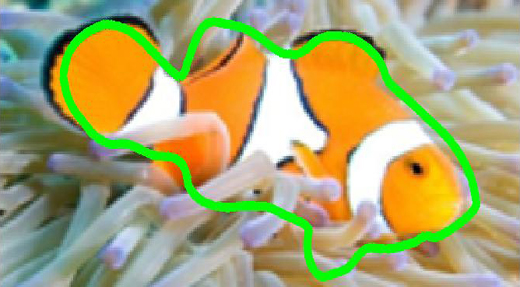}}
\subfloat{\includegraphics[width=0.2\textwidth]{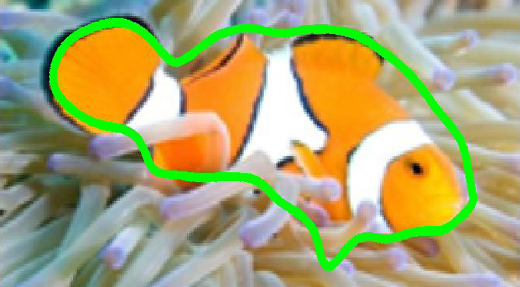}}
\subfloat{\includegraphics[width=0.2\textwidth]{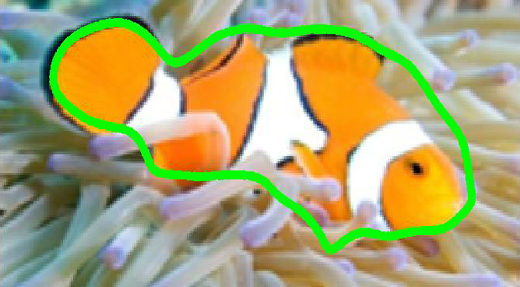}}
\subfloat{\includegraphics[width=0.2\textwidth]{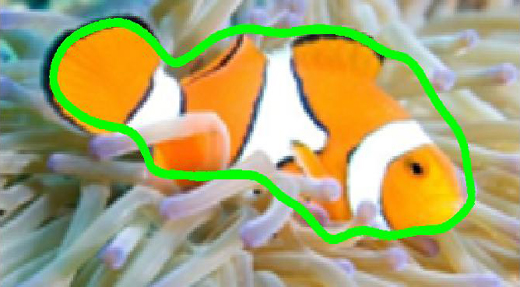}}
\subfloat{\includegraphics[width=0.2\textwidth]{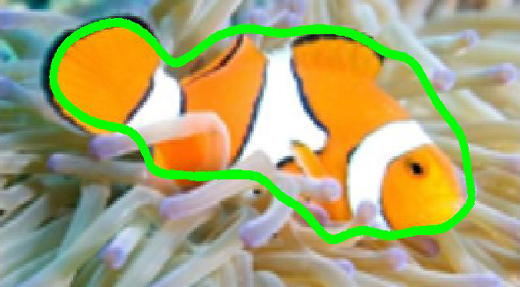}}
\caption{Comparison to other methods. The first row shows the curve evolution by the Chan-Vese active contour without shape prior. The second and third rows show the shape evolution with $H^1$ and $H^2$ gradients \cite{Charpiat2007GG} and the two methods are initialized by the converged curve of affine shape evolution.}\label{Fig:clownfish_CV}
\end{figure*}

\section{Discussions and conclusion}\label{SEC:Con}

\subsection{Discussions on the limitations}
In this work, it is assumed that the object detection has been achieved manually or automatically. Based on the detection, one may obtain an initial guess of the object shape as the initialization for the shape alignment. The rigid and non-rigid shape alignment and recovery is achieved in the case that only one sample of the object shape is available. To represent the object shape of interest, some more samples may be needed to either construct the shape model and to evaluate the shape model. To achieve shape alignment and recovery in general real images, advanced active contour models should be incorporated.

\subsection{Conclusion}
In this paper, a novel and general method is proposed for object shape extraction in images. It can handle simultaneous parts missing, shapes overlapping and shape deformation of the object of interest. The basic idea is to use the model of shape deformation to minimize the active contour energies. A curve evolution equation corresponding to the shape deformation model is derived. The curve evolution equation is the PVSE equation. The energy-minimizing PVSE equation to minimize active contour energies is derived. The novel general derivation is the calculus of prior variations. A theory of shape preservability is developed for selecting PVSE in order for shape recovery. Experiments show promising results. The developed calculus of prior variations can be easily applied for deriving PVSE equations for other shape warping models and other active contour models. The PVSE equations can be conveniently implemented by the level set method without the need of reparametrization of the contour curves.


\renewcommand{\theequation}{A.\arabic{equation}}
  \setcounter{equation}{0}  

\renewcommand{\thetheorem}{\thesection.\arabic{theorem}}
  \setcounter{theorem}{0}  
\setcounter{section}{0}
\section*{Appendix}
\renewcommand{\thesection}{A\Alph{section}}
\setcounter{subsection}{0}
\subsection{Proof of Theorem \ref{THM:Curvature_D2} and Corollary \ref{COR:F_bound}}

\begin{proof}[Proof of Theorem \ref{THM:Curvature_D2}]
Let us now consider the PVSE discretized in time as follows:
\begin{equation}
C(s,t+\Delta t)=C(s,t) + \Delta t \sum_i^n {d\theta_i\over dt}\mathbf{V}^{\theta_i}(C(s,t)),
\end{equation}
where we parameterize the curve by arclength for simplicity. Taking derivatives with respect to $s$ twice we may obtain Eq. (\ref{EQ:Kappa_PVSE}) regarding the curvature, where $C_{ss}=\kappa\mathbf{N}$, $\kappa$ is the curvature.
\begin{equation}\label{EQ:Kappa_PVSE}
\begin{split}
&C_{ss}(s,t+\Delta t)=C_{ss}(s,t)+ \Delta t \sum_i^n {d\theta_i\over dt}{\partial\over\partial s}\left(\left[{\mathbf{D}\mathbf{V}^{\theta_i}\over\mathbf{D}\mathbf{x}}\right]^{2\times2}C_s\right)\\
&=C_{ss}(s,t) \\
&+ \Delta t \sum_i^n {d\theta_i\over dt}\left\{C_s\otimes\left[{\mathbf{D}^2\mathbf{V}^{\theta_i}\over\mathbf{D}C^2}\right]^{2\times2\times2}\otimes C_s+\left[{\mathbf{D}\mathbf{V}^{\theta_i}\over\mathbf{D}\mathbf{x}}\right]C_{ss}\right\}.
\end{split}
\end{equation}


Let us consider the position $C(s,t)$, such that $\kappa(s,t)=0$. We wish to know how the $\kappa$ changes at these feature points. From Eq. (\ref{EQ:Kappa_PVSE}), we obtain the following:
\begin{equation}\label{EQ:DKappa_PVSE}
\begin{split}
&C_{ss}(s,t+\Delta t)\\
&=C_{ss}(s,t) + \Delta t \sum_i^n {d\theta_i\over dt}\left\{\mathbf{T}\otimes\left[{\mathbf{D}^2\mathbf{V}^{\theta_i}\over\mathbf{D}\mathbf{x}^2}\right]\otimes \mathbf{T}\right\},\\
\end{split}
\end{equation}
where $\mathbf{T}=C_s$ is the tangent vector of the contour.

Some rearrangements yield the expression for ${\partial C_{ss}\over\partial t}$ as follows:
\begin{equation}
{\partial C_{ss}\over\partial t} = \sum_i^n {d\theta_i\over dt}\left\{\mathbf{T}\otimes\left[{\mathbf{D}^2\mathbf{V}^{\theta_i}\over\mathbf{D}\mathbf{x}^2}\right]\otimes \mathbf{T}\right\}.
\end{equation}

Moreover, since
\begin{equation}
{\partial C_{ss}\over\partial t}={\partial \kappa\mathbf{N}\over\partial t}=\kappa_t\mathbf{N}+\underbrace{\kappa}\limits_{=0}\mathbf{N}_t=\kappa_t\mathbf{N},
\end{equation}
we have $\left\|{\partial C_{ss}\over\partial t}\right\|=\|\kappa_t\|$, which leads to the following:
\begin{equation}\label{EQ:DKappa_final}
\begin{split}
\left\|{\partial \kappa\over\partial t}\right\| &= \left\|\sum_i^n {d\theta_i\over dt}\left\{\mathbf{T}\otimes\left[{\mathbf{D}^2\mathbf{V}^{\theta_i}\over\mathbf{D}\mathbf{x}^2}\right]\otimes \mathbf{T}\right\}\right\|\\
&\leq \left\|{d\theta\over dt}\right\|\sum_i^n\left\|{\mathbf{D}^2\mathbf{V}^{\theta_i}\over\mathbf{D}\mathbf{x}^2}\right\|,
\end{split}
\end{equation}
which completes the proof.
\end{proof}

\begin{proof}[Proof of Corollary \ref{COR:F_bound}]
\begin{equation}\label{EQ:ds/dt}
\begin{split}
\left.{dp\over dt}\right|_{\kappa=0} &= \left.{dp\over ds}{ds\over d\kappa}{\partial \kappa\over\partial t}\right|_{\kappa=0}=\left.{1\over\|C_p\|\kappa_s}{\partial \kappa\over\partial t}\right|_{\kappa=0},
\end{split}
\end{equation}
where we applied $ds=\|C_p\|dp$. Note that $\left.{dp\over dt}\right|_{\kappa(p,t)=0}$ is different from the $dp\over dt$ at a fixed $p$. Rather, $\left.{dp\over dt}\right|_{\kappa(p,t)=0}$ defines on a moving $p$. Besides, $ds\over d\kappa$ is possible for nonzero $k_s$, since $\kappa:s\mapsto\kappa(s)$ is bijection.

Hence, we can establish the following bound:
\begin{equation}\label{EQ:UPB_dp/dt}
\left|{dp\over dt}\right|_{\kappa=0}\leq{1\over\|\kappa_sC_p\|}\left\|{d\theta\over dt}\right\|\sum_i^n\left\|{\mathbf{D}^2\mathbf{V}^{\theta_i}\over\mathbf{D}\mathbf{x}^2}\right\|.
\end{equation}

For the two feature points $C(p_1,t)$ and $C(p_2,t)$ at time $t$, we have the following:
\begin{equation}
\begin{split}
&\left|{d\over dt}|p_1-p_2|\right|\leq2\big|\mathrm{sign}(p_1-p_2)\big|\left|{dp\over dt}\right|_{\kappa=0}\\
&\leq{2\over\|\kappa_sC_p\|}\left\|{d\theta\over dt}\right\|\sum_i^n\left\|{\mathbf{D}^2\mathbf{V}^{\theta_i}\over\mathbf{D}\mathbf{x}^2}\right\|,
\end{split}
\end{equation}
which completes the proof.
\end{proof}

\subsection{Proof of Theorem \ref{THM:BD_PriVib}}

\begin{proof}[Proof of Theorem \ref{THM:BD_PriVib}]
To induce the norm of ${\mathbf{D}^2\mathbf{V}^\theta_{nrg}\over\mathbf{D}\mathbf{x}^2}$, we expand the derivative as follows.
\begin{equation}
\begin{split}
&\left[{\mathbf{D}^2{e}_{mn}^1\over\mathbf{D}\mathbf{x}^2},{\mathbf{D}^2{e}_{mn}^2\over\mathbf{D}\mathbf{x}^2}\right]\\
&=\left\{\left[\begin{array}{cc}
{\partial^2 {e}_{mn}^1\over\partial x^2}&{\partial^2 {e}_{mn}^1\over\partial y\partial x} \\
{\partial^2 {e}_{mn}^1\over\partial x\partial y}&{\partial^2 {e}_{mn}^1\over\partial y^2} \\
\end{array}\right],\right.\\
&\hspace{20pt}\left.\left[\begin{array}{cc}
{\partial^2 {e}_{mn}^2\over\partial x^2}&{\partial^2 {e}_{mn}^2\over\partial y\partial x} \\
{\partial^2 {e}_{mn}^2\over\partial x\partial y}&{\partial^2 {e}_{mn}^2\over\partial y^2} \\
\end{array}\right]\right\}^{[2\times2\times2]},
\end{split}
\end{equation}
where $1\leq<m\leq M$,$1\leq<n\leq N$. Therefore, we may bound the norm as follows:
\begin{equation}\label{EQ:Norm_D2V}
\begin{split}
&\left\|\left[{\mathbf{D}^2{e}_{mn}^1\over\mathbf{D}\mathbf{x}^2},{\mathbf{D}^2{e}_{mn}^2\over\mathbf{D}\mathbf{x}^2}\right]\right\|\leq\left\|{\mathbf{D}^2{e}_{mn}^1\over\mathbf{D}\mathbf{x}^2}\right\|+\left\|{\mathbf{D}^2{e}_{mn}^2\over\mathbf{D}\mathbf{x}^2}\right\|\\
&=\left\|\left[\begin{array}{cc}
{\partial^2 {e}_{mn}^1\over\partial x^2}&{\partial^2 {e}_{mn}^1\over\partial y\partial x} \\
{\partial^2 {e}_{mn}^1\over\partial x\partial y} & {\partial^2 {e}_{mn}^1\over\partial y^2} \\
\end{array}\right]\right\|+\left\|\left[\begin{array}{cc}
{\partial^2 {e}_{mn}^2\over\partial x^2}&{\partial^2 {e}_{mn}^2\over\partial y\partial x} \\
{\partial^2 {e}_{mn}^2\over\partial x\partial y}&{\partial^2 {e}_{mn}^2\over\partial y^2} \\
\end{array}\right]\right\|.
\end{split}
\end{equation}
Hence, we may consider to bound $\left\|{\mathbf{D}^2{e}_{mn}^1\over\mathbf{D}\mathbf{x}^2}\right\|_2,\left\|{\mathbf{D}^2{e}_{mn}^2\over\mathbf{D}\mathbf{x}^2}\right\|$ separately. For $\left\|{\mathbf{D}^2{e}_{mn}^1\over\mathbf{D}\mathbf{x}^2}\right\|$, if we choose the $\|\cdot\|=\|\|_\infty$, we have the following bound:
\begin{equation}
\begin{split}
&\left\|{\mathbf{D}^2{e}_{mn}^1\over\mathbf{D}\mathbf{x}^2}\right\| = \left\|\left[\begin{array}{cc}
{\partial^2 {e}_{mn}^1\over\partial x^2}&{\partial^2 {e}_{mn}^1\over\partial y\partial x} \\
{\partial^2 {e}_{mn}^1\over\partial x\partial y}&{\partial^2 {e}_{mn}^1\over\partial y^2} \\
\end{array}\right]\right\|\\
&=\max\left(\left|{\partial^2 {e}_{mn}^1\over\partial x^2}+{\partial^2 {e}_{mn}^1\over\partial y\partial x}\right|,\left|{\partial^2 {e}_{mn}^1\over\partial x\partial y}+{\partial^2 {e}_{mn}^1\over\partial y^2}\right|\right)\\
&\leq\max\left({n^2+mn\over n^2+m^2},{m^2+mn\over n^2+m^2}\right)\\
&\leq {2\over1+c}\leq 2,
\end{split}
\end{equation}
where $c=m/n$ if $m\leq n$ and $c=n/m$ if $n<m$.
Hence,
\begin{equation}\label{EQ:Ineq_emn2}
\sum\limits_{mn}\left\|{\mathbf{D}^2e^i_{mn}\over\mathbf{D}\mathbf{x}^2}\right\|=\sum\limits_{m=1}^M\sum\limits_{n=1}^N\left\|{\mathbf{D}^2e^i_{mn}\over\mathbf{D}\mathbf{x}^2}\right\|\leq 2MN,
\end{equation}
for $i=1,2$. The above together with (\ref{EQ:Norm_D2V}) completes the proof.
\end{proof}

{
\bibliographystyle{IEEEtran}
\bibliography{LevelSetActiveContours,MRFseg}
}

\end{document}